\newcommand{\cmmnt}[1]{}
\newtheorem{theorem}{Theorem} 
\newtheorem{lemma}[theorem]{Lemma} 
\newtheorem{proposition}[theorem]{Proposition} 
\newtheorem{corollary}[theorem]{Corollary}
\newtheorem{definition}[theorem]{Definition}
\begin{document}

\twocolumn[

\aistatstitle{From Shapley Values to Generalized Additive Models and back}

\aistatsauthor{ Sebastian Bordt \And Ulrike von Luxburg }

\aistatsaddress{ Department of Computer Science \\University of Tübingen \And Department of Computer Science and Tübingen AI Center\\
University of Tübingen} ]

\begin{abstract}
In explainable machine learning, local post-hoc explanation algorithms and inherently interpretable models are often seen as competing approaches. This work offers a partial reconciliation between the two by establishing a correspondence between Shapley Values and Generalized Additive Models (GAMs). We introduce $n$-Shapley Values, a parametric family of local post-hoc explanation algorithms that explain individual predictions with interaction terms up to order $n$. By varying the parameter $n$, we obtain a sequence of explanations that covers the entire range from Shapley Values up to a uniquely determined decomposition of the function we want to explain. The relationship between $n$-Shapley Values and this decomposition offers a functionally-grounded characterization of Shapley Values, which highlights their limitations. We then show that $n$-Shapley Values, as well as the Shapley Taylor- and Faith-Shap interaction indices, recover GAMs with interaction terms up to order $n$. This implies that the original Shapely Values recover GAMs without variable interactions. Taken together, our results provide a precise characterization of Shapley Values as they are being used in explainable machine learning. They also offer a principled interpretation of partial dependence plots of Shapley Values in terms of the underlying functional decomposition. A package for the estimation of different interaction indices is available at \url{https://github.com/tml-tuebingen/nshap}.
\end{abstract}

\section{INTRODUCTION}

Local post-hoc explanation algorithms and inherently interpretable models are two of the most prominent approaches in explainable machine learning \citep{Molnar20,holzinger2022xxai}. Despite a number of arguments about their relative benefits, the differences and similarities between these two approaches remain largely unresolved \citet{Rudin19}. In the current literature, post-hoc explanations and inherently interpretable models are often framed as different concepts, with research papers,  book chapters, and tutorials divided along these lines \citep{lundberg2020local,Molnar20,neurips2022tutorial}.
We take a different perspective and highlight the similarities between post-hoc explanations and interpretable models. We do so for the particular case of Shapley Values, a prominent feature attribution method, and  GAMs, a popular class of interpretable models.
 
{\bf Post-hoc explanations with Shapley Values.} The seminal work by \citet{lundberg2017unified} introduced the SHAP feature attributions. These are based on the literature on Shapley Values in game theory.
The authors showed that for linear functions $f(x)=w^T x$ and statistically independent features, the SHAP attributions take the form  $\Phi_i=w_i (x_i-\mathbb{E}(x_i))$, thus establishing a link between the post-hoc explanation method and a very simple type of interpretable model. This work has inspired a whole branch of literature on explainable machine learning. Most relevant to us are Shapley Interaction Values \citep{lundberg2020local}, which extend Shapley Values with local interaction effects between pairs of features.

An important building block of our work is the generalization of Shapley Interaction Values towards  {\bf $n$-Shapley Values}, a novel type of Shapley-based post-hoc explanation that is able to incorporate arbitrarily many variable interactions. Similarly to the Shapley Taylor- \citep{sundararajan2020shapley} 
and the Faith-Shap interaction index \citep{tsai2022faith}, $n$-Shapley Values are a parametric family of local post-hoc explanation algorithms that explain individual predictions with interaction terms up to order $n$. As $n$ increases, the explanations become more complex and expressive and are able to faithfully explain more complex models. %

{\bf Generalized Additive Models} (GAMs hereafter) are a popular class of interpretable models with a restricted form of non-linearity \citep{hastie1990generalized,caruana2015intelligible,agarwal2021neural}. Traditionally, GAMs are allowed to exhibit (arbitrary) non-linearity in individual features, but no interaction between features is allowed. G$\text{A}^2$Ms \citep{lou2012intelligible} relax this restriction and allow for interaction between pairs of features. Conceptually, it is straightforward to extend GAMs with interaction effects of any desired order $n$ (this comes, however, at the cost of human interpretability). Important to us, the model class of GAMs suffers from an identification problem. As soon as we introduce variable interactions, the way in which a given function can be written as a GAM is no longer uniquely determined \cite{lengerich2020purifying}.

{\bf Shapley-based explanations faithfully explain GAMs.}  In this work, we show that different kinds of Shapley-based post-hoc explanations \citep{lundberg2017unified,lundberg2020local,sundararajan2020shapley,tsai2022faith} 
are completely faithful to GAMs: if the function to be explained is a GAM,  then the explanations recover its individual non-linear component functions. We link the order of the GAM -- the maximum degree of variable interaction that is present in a function -- with the order of an explanation that we use to explain that function. If the order of the explanation is at least as large as the maximum variable interaction that is (locally) present in the model, then the explanations are guaranteed to recover a faithful representation of the function as a GAM. This result applies to the newly proposed $n$-Shapley Values, as well as to the Shapley Taylor- and Faith-Shap interaction indices. As a special case, our results imply that the interventional SHAP feature attributions \citep{lundberg2017unified,janzing2020feature} are perfectly faithful to GAMs without variable interactions, even if the features are arbitrarily dependent.  

What is more, we show that Shapley-based post-hoc explanations of {\bf any function} implicitly solve the problem of representing the function as a GAM (potentially with variable interactions of very high order). This means that our results provide insights into the mechanics of Shapley Values not only if the function to be explained is a lower-order GAM, but any (learned) function, for example a neural network. Concretely, we identify a necessary and sufficient regularity condition -- subset compliance -- under which a value function gives rise to a well-defined functional decomposition of the function that we attempt to explain. 
Because this decomposition connects Shapley Values with GAMs, we term it the Shapley-GAM.

Taken together, our results offer a precise {\bf functionally-grounded analysis} of Shapley Values, one of the most widely used approaches in explainable machine learning \citep{doshi2017towards}. They also highlight the peculiar properties of these explanations, and the way in which they are different from other feature attribution methods \citep{covert2021explaining,krishna2022disagreement}. For example, contrary to popular belief, Shapley Values only depend on the coordinates of the point that we attempt to explain, but not on the local neighbourhood of that point. This in turn implies that the explanations are unrelated to the gradient and do not perform any kind of local function approximation \citep{han2022explanation}.

We consider $n$-Shapley Values to be a useful tool for practitioners who want to debug black-box models. Moreover, we introduce a novel method to plot feature attributions of higher order that is consistent with the underlying theory (depicted, for example, in Figure \ref{fig:n_shap}). We also introduce a way to estimate the amount of variable interaction that is necessary to represent a given function. Finally, we study the link between accuracy and the average degree of variable interaction present in different standard classifiers \mbox{(Section \ref{sec:trade-off}).}

\section{RELATED WORK}

{\bf Shapley Values.} The seminal paper by \citet{lundberg2017unified} has led to a line of work that investigates the usage of Shapley Values in explainable machine learning \citep{chen2020true,heskes2020causal,SlackEtal20,albinicshap}. Shapley Values originate in a literature on economic game theory \citep{shapley1953value}, and our work builds on a particular paper from this literature, namely the seminal work by \citet{grabisch1997k} on additive set functions. The idea to extend Shapley Interaction Values towards $n$-Shapley Values is closely related to other approaches that also extend the Shapley Value \citep{grabisch1997k,lundberg2020local,sundararajan2020shapley,tsai2022faith}. The efficient computation of Shapley Values is a topic of ongoing research interest \citep{lundberg2020local,jethani2021fastshap}. Our results also relate to the debate about the choice of value function \citep{sundararajan2020many,janzing2020feature}.
Shapley Values have been explored in various tasks with human decision makers, a topic about which there is much debate \citep{kumar2020problems}. %

{\bf Generalized Additive Models.} Generalized additive models originate in statistics \citep{hastie1990generalized} and have recently become popular in combination with trees \citep{lou2012intelligible,lou2013accurate} and neural networks \citep{agarwal2021neural}. On tabular data sets, interpretable GAMs with few interactions \citep{caruana2015intelligible} can often achieve competitive accuracy, which has led to an active line of research on these models \citep{wang2022interpretability,lengerich2022death}. From a statistical perspective, the decomposition of a function as a GAM is underdetermined, which has led to the development of additional uniqueness criteria such as functional ANOVA \citep{hooker2007generalized,lengerich2020purifying}.

{\bf Explainable Machine Learning.} Shapley Values are one of many different feature attribution methods \citep{RibeiroEtal16,sundararajan2017axiomatic,kommiya2021towards} about which there is a large literature  \citep{LeeEtal19,GarLux20,SlackEtal21a,covert2021explaining,krishna2022disagreement,han2022explanation} and much debate \citep{lipton2018mythos,Rudin19,bordt2022post}. Considerable debate also exists around the question whether there is an accuracy-explainability trade-off or a cost of using interpretable models \citep{Rudin19,moshkovitz2020explainable}. Apart from GAMs, there are many other interpretable models such as rule lists \citep{wang2015falling} and sparse decision trees \citep{lin2020generalized}. Since our work is exclusively focused on Shapley Values and GAMs, we do not offer a comprehensive review of the literature on explainable machine learning. This can be found in many other places \citep{Molnar20,samek2021explaining,holzinger2022xxai,rudin2022interpretable}.

\section{BACKGROUND AND NOTATION}

\begin{figure*}[t]
    \centering
    \includegraphics[width=\textwidth]{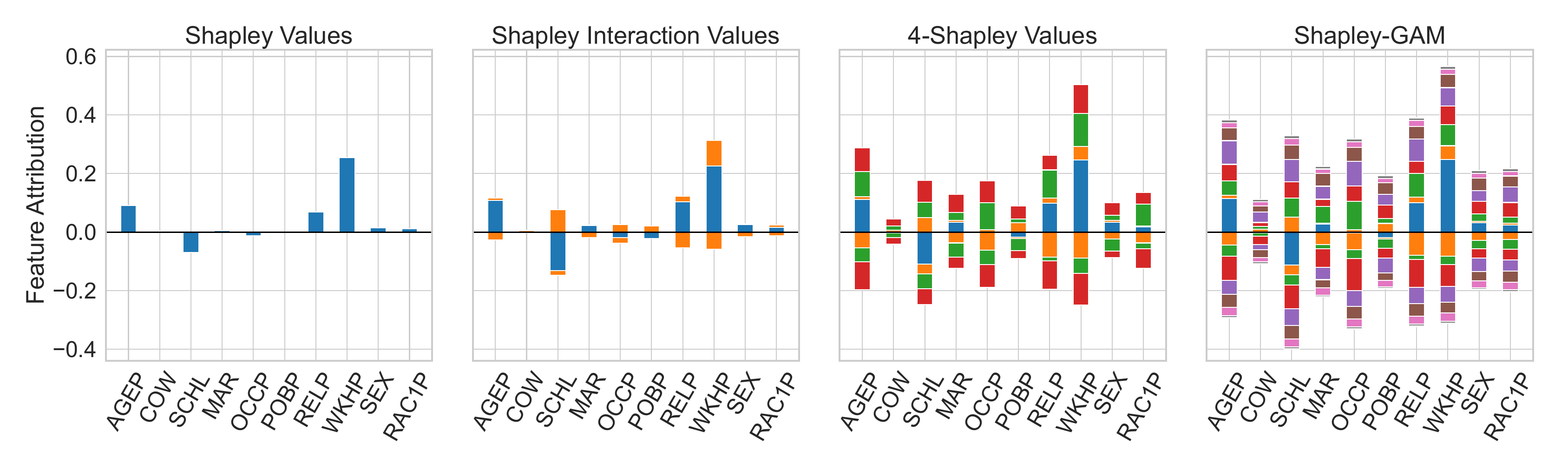}
    \includegraphics[width=0.9\textwidth]{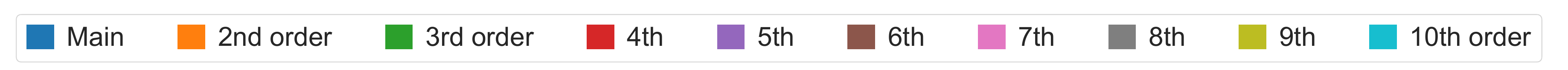}
    \caption{$n$-Shapley Values generate a sequence of explanations of increasing complexity, ranging from the original Shapley Values to the Shapley-GAM. From left to right: Shapley Values ($n=1$), Shapley Interaction Values ($n=2$), 4-Shapley Values ($n=4$) and the Shapley-GAM ($n=d$). In each plot, we distributed the higher-order interaction effects uniformly onto all involved features (as justified by Theorem \ref{thm:thm_shapley_values_from_gam}). Taking into account the signs of the attributions, the different contributions to each of the bars sum to the Shapley Value of that feature (Equation \eqref{prop:n_shapley_recursion}). Taking the overall sum over all  bars for all features recovers the prediction $f(x)$. See Appendix Section \ref{apx:visualization} for more details regarding this visualization. In this example, the function $f$ is a random forest on the Folktables Income classification task, the data point is the first observation in our test set, and we used the value function of interventional SHAP.}
    \label{fig:n_shap}
\end{figure*}

We consider data points $x\in\mathbb{R}^d$ with $d$ features, and a function $f:\mathbb{R}^d\to\mathbb{R}$ whose behavior we want to explain. %
We consider the  {\bf local post-hoc explanation} setting with {\bf feature attributions}: For a point \mbox{$x \in \mathbb{R}^d$}, our goal is to explain which input features (or combinations thereof) were most influential in determining the ``decision'' $f(x)$. In order to do so, we assign real numbers to input features and their combinations. The higher the absolute value of this number, the more influential the feature is considered to be (for an illustration consider Figure \ref{fig:n_shap}).

In what follows, we denote $[n]=\{1,\dots,n\}$ and use subsets of coordinates $S=\{s_1,\dots,s_n\}\subset[d]$ to index both data points $x_S=(x_{s_1},\dots,x_{s_n})$ and collections of functions $f_S(x_S)=f_{x_{s_1},\dots,x_{s_n}}(x_{s_1},\dots,x_{s_n})$ where we assume the ordering $s_1<\dots<s_n$.

\subsection{Value Functions and Shapley Values}

For a data point $x\in\mathbb{R}^d$, a subset of coordinates $S\subset[d]$, and a function $f$, 
the {\it value function }$v(x,S)$ is supposed to quantify how much the features that are present in $S$ contribute towards the prediction $f(x)$. 
Two important value functions are the {\it observational SHAP} value function \citet{lundberg2017unified} \begin{equation}
\label{eq:SHAP}
    v(x,S)=\mathbb{E}_{z\sim\mathcal{D}}\left[f(z)\,|\,x_S \right]%
\end{equation}
and the {\it interventional SHAP} value function \citep{chen2020true,janzing2020feature}
\begin{equation}
\label{eq:interventional_SHAP}
    v(x,S)=\mathbb{E}_{z\sim\mathcal{D}}\left[f(z)\,|\,do(x_S)\right].
\end{equation}
Shapley Values, denoted by $\Phi_i(x)$, are obtained from the value function via the well-known Shapley formula \citep{shapley1953value}. %
We first introduce the Shapley Interaction Index \citep{grabisch1999axiomatic}, given by $\Delta_S(x)=$
\begin{equation}
\label{eq:delta_S}
\sum_{T\subset[d]\setminus S}\frac{(d-|T|-|S|)!|T|!}{(d-|S|+1)!}\sum_{L\subset S}(-1)^{|S|-|L|}v(x,L\cup T).
\end{equation}
The Shapley Value $\Phi_i(x)$ of feature $i$ at $x$ is then simply given by $\Delta_{i}(x)$. Importantly, different value functions give rise to different Shapley Values, so that there effectively exists a multiplicity of possible Shapley Values, depending on our choice of value function \citep{sundararajan2020many}. The popular KernelSHAP algorithm \citep{lundberg2017unified} approximates  Shapley Values with respect to the interventional SHAP value function. The corresponding attributions are also known as the {\it SHAP feature attributions}. The following regularity condition, satisfied by both \eqref{eq:SHAP} and \eqref{eq:interventional_SHAP}, will guarantee that the value function gives rise to a well-defined functional decomposition of the function that we attempt to explain.

\begin{definition}[Subset-Compliant Value Function]
\label{def:subset-compliant-vfunc}
We say that $v(x,S)$ is a subset-compliant value function for $f:\mathbb{R}^d\to\mathbb{R}$ if $v(x,[d])=f(x)$ and if the value $v(x,S)$ depends only on those coordinates of $x$ that are indexed by $S$. For a subset-compliant value function, we also write $v(x,S)=v(x_S,S)$.
\end{definition}

\subsection{Generalized Additive Models}

We employ the following definition of a generalized additive model (GAM) of order $n$.

\begin{definition}[Generalized Additive Model of order $n$]
\label{def:gam}
We say that $f:\mathbb{R}^d\to \mathbb{R}$ is a generalized additive model of order $n$ if $f$ can be written in the form
\begin{equation}
\label{eq:def_gam}
    f(x)=\sum_{S\subset[d],\,|S|\leq n}f_S(x_S)
\end{equation}
\end{definition}
In words, the function $f$ can be described as a simple sum with interaction terms of at most $n$ variables at a time. The individual functions $f_S$ are called component functions of $f$. 
GAMs with few interactions ($n=1,2,3$) are often considered interpretable and called Glassbox-GAMs \citep{lou2012intelligible,caruana2015intelligible}. The reason for this is that the feature-wise shape functions $f_1,\dots,f_d$
can be easily visualized, see for example Figure \ref{fig:recovery}.%

If we allow for interactions of arbitrary order, that is $n=d$, then every function can be written as a GAM. 
However, it is a well-known fact that representing an arbitrary function according to \eqref{eq:def_gam} is under-determined: Many such representations might be possible for the same function. Any such representation is called a functional decomposition of $f$. This non-identifiability has led to the development of additional criteria on the decomposition, such as functional ANOVA, that resolve the identification problem \citep{hooker2007generalized,lengerich2020purifying}.

\section{FROM SHAPLEY VALUES TO GENERALIZED ADDITIVE MODELS}

We now introduce $n$-Shapley Values, a parametric family of local-post hoc explanation algorithms that extends Shapley Values \citep{lundberg2017unified} and Shapley Interaction Values \citep{lundberg2020local}. We then show that every subset-compliant value function implicitly provides a functional decomposition of the function that we attempt to explain. Due to its connection with Shapley Values, we denominate this decompositions the Shapley-GAM. We then show that for $n=d$, $n$-Shapley Values are equal to this decomposition.

\subsection{$n$-Shapley Values}

\begin{figure*}[t]
    \centering    
    \includegraphics[width=\textwidth]{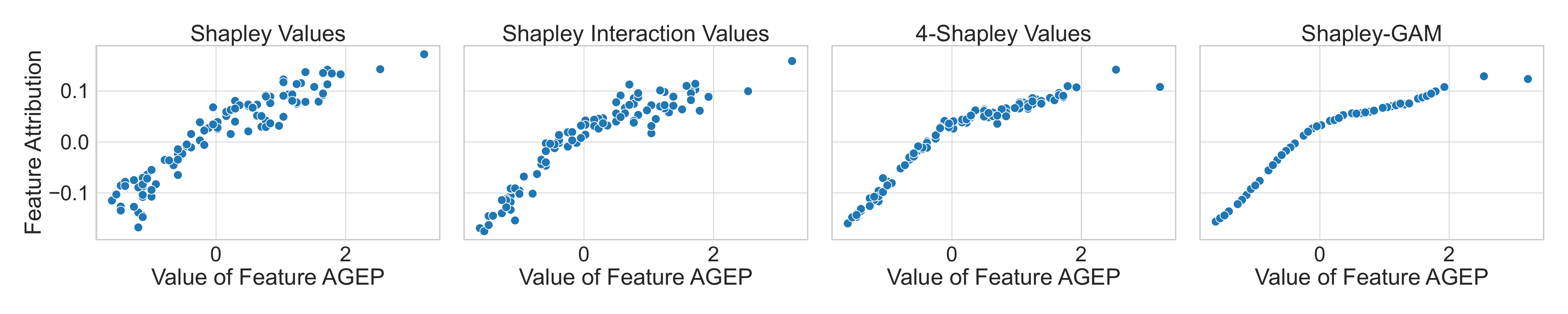}
    \caption{As $n\to d$, the $n$-Shapley Values provide increasingly precise representations of the component functions $f_S$ of the Shapley-GAM. This figure depicts partial dependence plots of $\Phi_\texttt{AGEP}^1$ (Shapley Values, $n=1$), $\Phi_\texttt{AGEP}^2$ (Shapley Interaction Values, $n=2$), $\Phi_\texttt{AGEP}^4$ (4-Shapley Values, $n=4$) and $\Phi_\texttt{AGEP}^{10}$ (Shapley-GAM, $n=d$). The leftmost partial dependence plot is the usual plot that is often used in order to visualize Shapley Values \citep{lundberg2020local} (the plot depicts the original Shapley Values for the observations in the test set). It takes the often observed form where the Shapley Values are scattered around an overall functional relationship. Theorem \ref{thm:shapley_gam} and Theorem \ref{thm:thm_shapley_values_from_gam} make this intuition precise by specifying how the Shapley Values are related to the component functions of the Shapley-GAM. The middle and right plots illustrate that as we move towards higher-order explanations, interaction effects can be appropriately represented. As a consequence, the partial dependence plots of individual feature attributions approach the component functions of the Shapley-GAM. In this example, the function $f$ is a kNN classifier on the Folktables Income classification task. Appendix Figure \ref{apx:knn_dependence} depicts the partial dependence plots of all other features.}
    \label{fig:partial_dependece_plot}
\end{figure*}

The definition of $n$-Shapley Values relates to the function $f$ that we want to explain implicitly via the value function. 

\begin{definition}[$n$-Shapley Values] 
\label{def:n_shapley_values} Fix a function \mbox{$f: \mathbb{R}^d \to \mathbb{R}$.} Let $v(x,S)$ be a value function for $f$. \mbox{$n$-Shapley Values} $\Phi_S^n$ provide an an attribution to all groups of at most $n$ features at a time, that is for all sets $S\subset[d]$ with $|S|\leq n$. We define them recursively, starting from the original Shapley Values at $n=1$ up to $n=d$, by
\begin{equation}
\label{eq:n_shapley_definition}
\Phi_S^n=\begin{cases}
          \qquad\Delta_S  
& \text{if }|S|=n \\
\\
\Phi_S^{n-1}+B_{n-|S|}\mathlarger{\sum}_{\substack{K\subset [d]\setminus S\\|K|+|S|=n}}\Delta_{S\cup K}
 & \text{if } |S|<n.
    \end{cases}
\end{equation}
The coefficients $B_n$ that balance the different terms are the Bernoulli numbers (see Appendix \ref{apx:n_shapley}). All terms except the Bernoulli numbers additionaly depend on the point $x$. %
\end{definition}

While this definition might seem rather abstract, $n$-Shapley Values are actually a straightforward extension of  Shapley Interaction Values \citep{lundberg2020local}. These correspond to the case $n=2$. The original Shapley Values correspond to the case $n=1$. Similar to the original Shapley Values, $n$-Shapley Values are additive and always sum to the function value $f(x)$ (when summed over all subsets $S \subset [d])$ of size $\leq n$).\footnote{The proof of Proposition \ref{prop:efficiency} in the Appendix shows that the Bernoulli numbers are exactly the coefficients that balance equation \eqref{eq:n_shapley_definition} in this way.} The overall intuition behind the recursive definition of $n$-Shapley Values is that starting from the original Shapley Values at $n=1$, we successively add higher-order variable interactions to the explanations.

$n$-Shapley Values give rise to a sequence of explanations of increasing complexity, ranging from the original Shapley Values up to a functional decomposition of the function that we attempt to explain (see Theorem \ref{thm:shapley_gam} below). Figure \ref{fig:n_shap} depicts such a sequence of explanations for a random forest on the Folktables Income classification task \citep{ding2021retiring}. To visualize the $n$-Shapley Values, we evenly distribute all higher-order interactions onto the involved features. As we detail in Appendix \ref{apx:visualization}, this technique is justified by the recursive relationship between $n$-Shapley Values of different order. Note that $n$-Shapley Values of higher order are different from those of lower order only if the function that we attempt to explain actually contains higher-order variable interactions (this intuition will be made precise in Section \ref{sec:recovery}). For this reason, $n$-Shapley Values can be used as a tool to assess the amount of variable interaction that is present in a given black-box predictor. For the random forest, we can see from the rightmost part of Figure \ref{fig:n_shap} that it relies on very high degrees of variable interaction (for a quantitative analysis, see Section \ref{sec:trade-off}).

\subsection{The Shapley-GAM}

The following Theorem \ref{thm:shapley_gam} shows two things. First, a subset-compliant value function gives rise to a well-defined functional decomposition. Second, $d$-Shapley Values are equal to this decomposition. The transformation of the value function that defines the decomposition is well-known as the Harsanyi Dividend \citep{harsanyi1982simplified} or Möbius transform.

\begin{theorem}[$d$-Shapley Values provide a functional decomposition of $f$]
\label{thm:shapley_gam}
Fix a function $f: \mathbb{R}^d \to \mathbb{R}$. Let $v(x,S)$ be a subset-compliant value function for $f$. Then the $d$-Shapley Values represent the function $f$ as a specific GAM that we denominate the Shapley-GAM. It is given by \\[1pt]
\begin{equation}
    f(x)=\sum_{S\subset[d]}f_S(x_S)
\end{equation}
with component functions
\begin{equation}
    f_{\emptyset}=v(\emptyset)\quad\text{and}\quad f_S(x_{S})=\Phi_S^d(x)
\end{equation}
where
\begin{equation}
    \label{eq:moebius_transform}
    \Phi_S^d(x)=\sum_{L\subset S}(-1)^{|S|-|L|}v(x_{L},L).
\end{equation}
\end{theorem}

For intuition about Theorem \ref{thm:shapley_gam}, consider Figure \ref{fig:partial_dependece_plot}. It is a well-known fact that the Shapley Value of feature $i$ not only depends on the value of that feature, but also on the values of the other features of $x$ (compare the leftmost partial dependence plot in Figure \ref{fig:partial_dependece_plot}). The reason for this is that Shapley Values subsume higher-order variable interactions into the attributions of individual features (according to formula \eqref{eq:shapley_values_from_gam_formula}, as we will see below). Now, as we successively increase $n$, more and more variable interactions are appropriately represented in the explanations. This means that they no longer have to be subsumed into lower-order effects, which implies in turn that the lower-order components of the explanations become more distinct (middle parts of Figure \ref{fig:partial_dependece_plot}). For $n=d$, all possible variable interactions can be represented in the explanations, which implies that $d$-Shapley Values become well-defined functions of the respective features (rightmost plot in Figure \ref{fig:partial_dependece_plot}).

$n$-Shapley Values depend on the value function, and so does the associated functional decomposition. For the observational and interventional SHAP value functions, the functional decompositions are given as follows.

\begin{corollary}[Observational and Interventional SHAP]
For the observational SHAP value function \eqref{eq:SHAP}, the component functions of the Shapley-GAM are given by $f_{\emptyset}=\mathbb{E}[f]$,
\begin{equation}
\begin{split}
f_i(x_i)&=\mathbb{E}[f|x_i]-\mathbb{E}[f]\\[6pt]
f_{i,j}(x)&=\mathbb{E}[f|x_i,x_j]-\mathbb{E}[f|x_i]-\mathbb{E}[f|x_j]+\mathbb{E}[f]\\[6pt]
f_{S}(x_{S})&=\sum_{L\subset S}(-1)^{|S|-|L|}\mathbb{E}[f|x_L].\\
\end{split}
\end{equation}
For the interventional SHAP value function, the component functions are given by the same expression, but with the conditional expectations replaced by the causal \textit{do}-operator.
\end{corollary}
As will see below (Theorem \ref{thm:value_function_from_gam}), there is actually a one-to-one relationship between subset-compliant value functions and different functional decompositions of~$f$.

\section{FROM GENERALIZED ADDITIVE MODELS TO SHAPLEY VALUES}

\begin{figure*}[t]
    \centering
     \includegraphics[width=\textwidth]{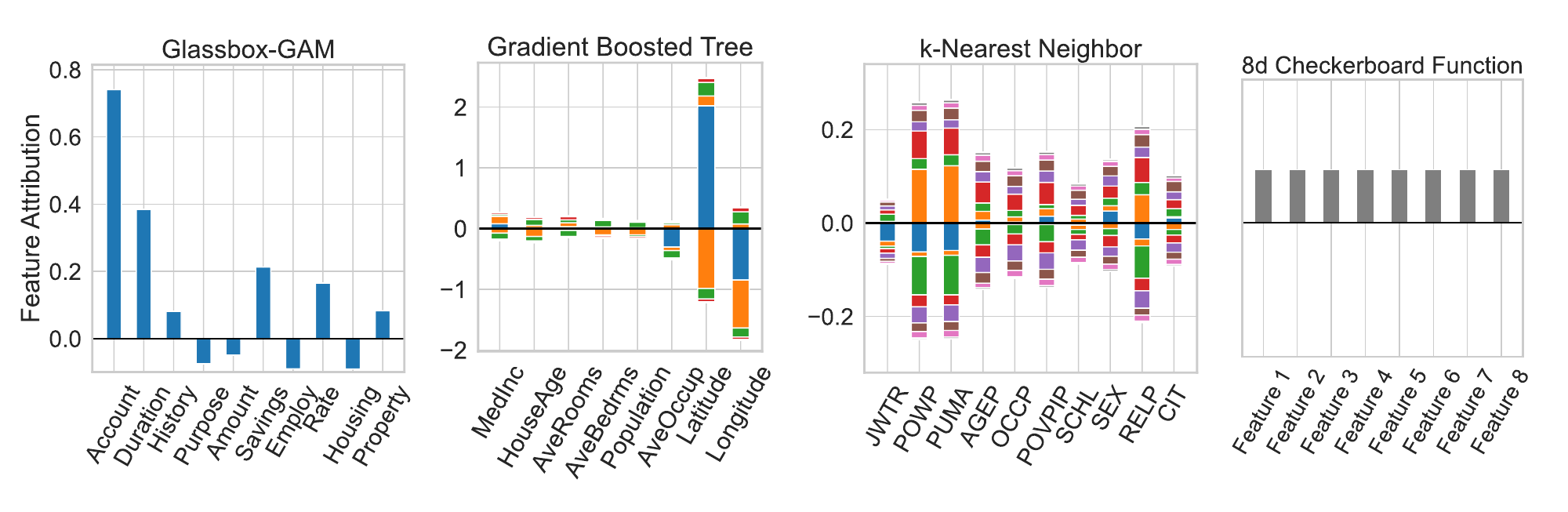}  
     \includegraphics[width=0.9\textwidth]{figures/main_paper/legend.pdf} 
    \caption{Visualizing the Shapley-GAM of interventional SHAP. Figures depict $d$-Shapley Values, visualized as in Figure \ref{fig:n_shap}. Different functions on different data sets require a different degree of variable interaction. (Left) A GAM without variable interactions on the German Credit data set.  (Middle Left) A gradient boosted tree on the California Housing data set. (Middle Right) A kNN classifier on the Folktables Travel data set. (Right) The 8-dimensional checkerboard function \eqref{eq:checkerboard_fn}. Additional figures for more data points and classifiers can be found in Appendix \ref{apx:figures}.}
    \label{fig:shapley_gam_explanations}
\end{figure*}

In the previous section, we have seen that Shapley Values give rise to a functional decomposition of the original function (via the associated value function). In this section, we show that the original Shapley Values as well as $n$-Shapley Values of any order are linear combinations of the component functions of this decomposition. This provides a novel motivation for Shapley Values that does not require value functions or the Shapley formula. This alternative motivation of Shapley Values is equivalent to the original motivation via value functions: For every functional decomposition of $f$, there is a corresponding subset-compliant value function $v$ such that the Shapley Values derived from the decomposition and $v$ are equal (and vice-versa).

\subsection{Shapley Values from the Shapley-GAM}

Theorem \ref{thm:thm_shapley_values_from_gam} specifies the way in which the different component functions of the Shapley-GAM give rise to $n$-Shapley Values.

\begin{theorem}[$n$-Shapley Values from the Shapley-GAM]
\label{thm:thm_shapley_values_from_gam} Let $f(x)=\sum_{S\subset [d]}f_{S}(x_S)$ be the decomposition of $f$ provided by the Shapley-GAM, and let $\Phi_{S}^n(x)$ be the $n$-Shapley Values of $f$. Then, it holds that
\begin{equation}
\label{eq:n_shapley_values_from_gam_formula}
    \Phi_{S}^n=f_S+\sum_{\substack{K\subset [d]\setminus S\\n+1\leq |S|+|K|}}C_{n-|S|,|K|}\,f_{S\cup K}
\end{equation}
with coefficients $C_{n,m}=\sum_{k=0}^n\binom{n}{k}\frac{B_k}{1+m-k}$. Specifically, the Shapley Value of feature $i$ is given by
\begin{equation}
\label{eq:shapley_values_from_gam_formula}
    \Phi^1_i=f_i+\cdots+\frac{1}{k+1}\sum_{S\subset [d]\setminus \{i\},|S|=k} f_{S\cup \{i\}}+\cdots+\frac{1}{d}\,f_{[d]}
\end{equation}
where all terms additionally depend on the point $x$.
\end{theorem}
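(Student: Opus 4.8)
The plan is to route the proof through the Möbius (Harsanyi) representation of the value function and reduce everything to the single building block $\Delta_S$. By Theorem~\ref{thm:shapley_gam} the component $f_S(x_S)=\sum_{L\subseteq S}(-1)^{|S|-|L|}v(x_L,L)$ is exactly the Möbius transform of the set function $A\mapsto v(x_A,A)$, so inverting it gives $v(x_A,A)=\sum_{L\subseteq A}f_L(x_L)$. Since $\Delta_S$ in \eqref{eq:delta_S}, and hence every $\Phi_S^n$, is a fixed linear functional of $v$, substituting this inversion rewrites each $\Phi_S^n$ as an explicit linear combination of the $f_U(x_U)$, and the theorem becomes a statement about scalar coefficients.

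The linchpin is the identity
\[
\Delta_S(x)=\sum_{W\subseteq[d]\setminus S}\frac{1}{|W|+1}\,f_{S\cup W}(x_{S\cup W}),
\]
which expresses the Shapley interaction index in terms of the Shapley-GAM components (the $|S|=1$ case recovers the classical Shapley value \eqref{eq:shapley_values_from_gam_formula}, and this is essentially Theorem~2 of \citet{grabisch1997k}). I would prove it directly: insert $v(x,L\cup T)=\sum_{M\subseteq L\cup T}f_M$ into \eqref{eq:delta_S}, observe that the inner alternating sum $\sum_{L\subseteq S}(-1)^{|S|-|L|}v(x,L\cup T)$ collapses by inclusion--exclusion to $\sum_{M\subseteq T}f_{S\cup M}$, and then swap the order of summation. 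The coefficient of a fixed $f_{S\cup W}$ becomes a sum of Shapley weights over all $T\supseteq W$, which simplifies to $1/(|W|+1)$ via the hockey-stick identity $\sum_{j=0}^{p}\binom{w+j}{j}=\binom{w+p+1}{w+1}$.

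With the $\Delta$-identity in hand I would prove \eqref{eq:n_shapley_values_from_gam_formula} by induction on $n$. The base case $|S|=n$ is immediate, since $\Phi_S^n=\Delta_S$ and the $W=\emptyset$ term supplies $f_S$ while the others carry coefficient $1/(|W|+1)=C_{0,|W|}$. For the inductive step $|S|<n$, I would feed the hypothesis for $\Phi_S^{n-1}$ and the $\Delta$-identity for each $\Delta_{S\cup K}$ into the recursion \eqref{eq:n_shapley_rec_devinition}. Collecting the coefficient of a fixed $f_{S\cup M}$ (writing $N=n-|S|$, $m=|M|$, and noting that $\Delta_{S\cup K}$ hits $f_{S\cup M}$ exactly when $K\subseteq M$, i.e.\ for $\binom{m}{N}$ choices of $K$) reduces the entire theorem to the scalar recurrence
\[
c(N,m)=c(N-1,m)+B_N\binom{m}{N}\frac{1}{m-N+1},\qquad c(0,m)=\frac{1}{m+1},
\]
whose solution is $c(N,m)=\tfrac{1}{m+1}\sum_{j=0}^{N}\binom{m+1}{j}B_j$.

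The main obstacle is the final reconciliation of this closed form with the stated coefficients $C_{n-|S|,|K|}$: this is a pure Bernoulli-number identity, and it carries all the real bookkeeping (telescoping via Pascal's rule and the defining recurrence $\sum_{j=0}^{m}\binom{m+1}{j}B_j=0$). Concretely I would verify that the stated $C_{n,m}$ obey the base case $C_{0,m}=1/(m+1)$ and the increment $C_{N,m}-C_{N-1,m}\stackrel{?}{=}B_N\binom{m}{N}/(m-N+1)$ directly from the definition $C_{n,m}=\sum_{k=0}^{n}\binom{n}{k}B_k/(1+m-k)$; establishing this increment is the crux, and I would sanity-check it on small cases such as $N=1,\,m=2$ to make sure the Bernoulli conventions and the two closed forms line up before trusting the general manipulation.
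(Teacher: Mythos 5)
Your route is essentially the paper's route in its key intermediate step: both proofs pivot on the identity $\Delta_S(x)=\sum_{W\subseteq[d]\setminus S}\tfrac{1}{|W|+1}\,f_{S\cup W}(x_{S\cup W})$ (equation \eqref{eq:proof_thm_4_3} in the paper, Theorem 1 of \citet{grabisch1997k}), and then assemble $\Phi^n_S$ from it. You differ in the two surrounding steps. For the identity itself, the paper runs a laborious ``pathway'' counting argument over the recursion (coefficients $D_{n,m}$ summed over compositions, specialized to $D_{0,m}=1/(m+1)$ via Lemma \ref{bernoulli_lemma_1}), whereas your M\"obius-inversion-plus-hockey-stick computation is direct and correct: the alternating inner sum in \eqref{eq:delta_S} collapses to $\sum_{M\subseteq T}f_{S\cup M}$, and the Shapley weights over $T\supseteq W$ sum to $1/(|W|+1)$. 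For the assembly, the paper substitutes the identity into the non-recursive formula \eqref{eq:apx_nshapley_explicit} and counts occurrences of each component, whereas you induct on $n$ through \eqref{eq:n_shapley_rec_devinition} and solve a scalar recurrence; your version has the nice feature that the vanishing of the terms of order at most $n$ is exactly the defining Bernoulli identity $\sum_{j=0}^{m}\binom{m+1}{j}B_j=0$.

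You should be warned about the final step, which you rightly call the crux: the reconciliation with the printed coefficients will fail, and the failure is not yours. Your sanity check at $N=1$, $m=2$ gives $C_{1,2}=\tfrac{1}{3}+\tfrac{B_1}{2}=\tfrac{1}{12}$ from the stated definition, while your recurrence gives $c(1,2)=\tfrac{1}{3}+B_1\binom{2}{1}\tfrac{1}{2}=-\tfrac{1}{6}$; the latter is correct, as one can verify by hand for $d=3$, $S=\{1\}$, where $\Phi^2_{\{1\}}=f_1-\tfrac{1}{6}f_{123}=f_1-B_2\,f_{123}$. The theorem statement (and the last line of the paper's own proof) contains an indexing slip: the number of ways to write a fixed $\tilde K$ as a disjoint union $K\cup T$ with $|K|=k$ is $\binom{|\tilde K|}{k}$, not $\binom{n-|S|}{k}$, so the coefficient should read
\begin{equation*}
C_{n,m}=\sum_{k=0}^{n}\binom{m}{k}\frac{B_k}{1+m-k},
\end{equation*}
which is precisely your closed form $\tfrac{1}{m+1}\sum_{j=0}^{n}\binom{m+1}{j}B_j$. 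The two expressions coincide for $n=0$, which is why \eqref{eq:shapley_values_from_gam_formula} and every Shapley-value specialization are unaffected, but they differ in general. So complete your proof with your own coefficients; do not burn time trying to establish the increment identity for the printed $\binom{n}{k}$ version, because it is false.
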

Theorem \ref{thm:thm_shapley_values_from_gam} specifies how higher-order variable interactions that are present in $f$ are subsumed into lower-order explanations. In the case of the original Shapley Values, this is particularly intuitive: Higher-order effects are evenly distributed among the involved features.\footnote{For individual value functions, equation \eqref{eq:shapley_values_from_gam_formula} is known in the literature on economic game theory \citep{grabisch1997k}[Theorem 1]. Variants of it were independently re-discovered in \citet{keevers2020power}, \citet{herren2022statistical} and \citet{causal_shapley_gam}.} Theorem \ref{thm:thm_shapley_values_from_gam} also specifies what information about the function $f$ is and is not contained in Shapley Values. We see that different functions $f$ can give rise to the same  $n$-Shapley Values as long as $n<d$ \citep{grabisch2016bases}. We also see that it is impossible to tell from individual Shapley Values whether the model consists of main effects or complex variable interactions. Furthermore, a feature can have zero attribution although it appears in multiple interaction effects with different signs.

For a bit more intuition about the Shapley-GAM, Figure \ref{fig:shapley_gam_explanations} illustrates the Shapley-GAM of interventional SHAP for different functions. A main point is that different predictors require a different degree of variable interaction in order to be represented as a GAM. By definition, a Glassbox-GAM (leftmost part of Figure \ref{fig:shapley_gam_explanations}) does not require any variable interaction. The other extreme is the $k$-dimensional checkerboard function \eqref{eq:checkerboard_fn} (rightmost part of Figure \ref{fig:shapley_gam_explanations}), which only consists of interaction terms of order $k$. Many learned functions such gradient boosted trees (Figure \ref{fig:shapley_gam_explanations}, middle left) and the k-Nearest Neighbor (kNN) classifier (Figure \ref{fig:shapley_gam_explanations}, middle right) lie in between. Overall, there is a significant amount of variation between different methods and problems. This is also illustrated in many additional figures in Appendix \ref{apx:figures}. For a quantitative analysis, see Section \ref{sec:trade-off}.

\subsection{From Functional Decompositions to Subset-Compliant Value Functions}

We have show that every subset-compliant value function corresponds to a functional decomposition of $f$. We now show that the reverse is also true, that is every functional decomposition of $f$ corresponds to a subset-compliant value function. The transformation that defines the value function is also known as the Zeta transform. 

\begin{theorem}[From Generalized Additive Models to Value Functions]
\label{thm:value_function_from_gam}
Let $f(x)=\sum_{S\subset[d]}g_S(x)$ be any functional decomposition of $f$. Define the subset-compliant value function
\begin{equation}
\label{eq:gam_value_function}
    v(x,S)=\sum_{L\subset S}g_L(x).
\end{equation}
Then the functional decomposition $g_S$ is the Shapley-GAM with respect to the value function \eqref{eq:gam_value_function}. 
\end{theorem}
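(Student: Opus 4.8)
The plan is to verify that the value function defined in \eqref{eq:gam_value_function} meets the hypotheses of Theorem \ref{thm:shapley_gam}, and then to apply that theorem and check, via a M\"obius-inversion argument, that the resulting Shapley-GAM components coincide with the given $g_S$. In other words, the construction $v(x,S)=\sum_{L\subset S}g_L(x)$ is the cumulative ``zeta transform'' of the decomposition, and the Shapley-GAM formula from Theorem \ref{thm:shapley_gam} is its inverse, so composing the two should return $g_S$.

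First I would confirm that $v$ is subset-compliant and satisfies the normalization $v(x,[d])=f(x)$, so that Theorem \ref{thm:shapley_gam} actually applies. Since each component $g_L$ of a functional decomposition depends only on the coordinates $x_L$, the sum $\sum_{L\subset S}g_L$ depends only on the coordinates indexed by $\bigcup_{L\subset S}L=S$, which gives subset-compliance. The normalization is immediate from the decomposition of $f$, namely $v(x,[d])=\sum_{L\subset[d]}g_L(x)=f(x)$.

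With the hypotheses in place, Theorem \ref{thm:shapley_gam} tells me that the Shapley-GAM component indexed by $S$ is
\[
\Phi_S^d(x)=\sum_{L\subset S}(-1)^{|S|-|L|}\,v(x_L,L).
\]
Substituting the definition of $v$ produces the double sum $\sum_{L\subset S}(-1)^{|S|-|L|}\sum_{M\subset L}g_M(x)$, which I would reorganize by summing over $M\subset S$ first and then over $L$ with $M\subset L\subset S$. The inner coefficient is $\sum_{M\subset L\subset S}(-1)^{|S|-|L|}$, and writing $L=M\cup R$ with $R\subset S\setminus M$ turns this into $\sum_{j=0}^{|S|-|M|}\binom{|S|-|M|}{j}(-1)^{|S|-|M|-j}=(1-1)^{|S|-|M|}$. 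This vanishes unless $|M|=|S|$, i.e.\ unless $M=S$, in which case it equals $1$. Hence the double sum collapses to $g_S(x)$, which is exactly the claim.

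I do not expect a serious obstacle here: the statement is precisely the inverse direction of the zeta/M\"obius transform pairing already noted in the footnote to Theorem \ref{thm:shapley_gam}, and the central computation is the standard inclusion--exclusion identity on the Boolean lattice. The only points that require a little care are checking that the constructed $v$ is genuinely subset-compliant (so the earlier theorem is applicable) and tracking the empty set correctly, since one must also recover $f_\emptyset=v(\emptyset)=g_\emptyset$ as the constant term of the decomposition.
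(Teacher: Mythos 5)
Your proof is correct and follows essentially the same route as the paper's: apply Theorem \ref{thm:shapley_gam}, substitute the defining formula $v(x,S)=\sum_{L\subset S}g_L(x)$, exchange the order of summation, and collapse the inner alternating sum via the inclusion--exclusion identity $(1-1)^{|S|-|M|}$. Your explicit verification of subset-compliance and of $v(x,[d])=f(x)$ is a small but welcome addition that the paper leaves implicit.
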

Taken together, Theorem \ref{thm:shapley_gam} and Theorem \ref{thm:value_function_from_gam} establish a bijection between subset-compliant value functions and functional decompositions of $f$. In a sense, this implies that every functional decomposition implicitly corresponds to a notion of feature attribution via its associated value function and the Shapley formula (or, more directly, via equation \eqref{eq:shapley_values_from_gam_formula} which is just the same).

\section{RECOVERY}
\label{sec:recovery}

In this section, we connect Shapley Values with interpretable models by showing that $n$-Shapley Values, as well as the Shapley Taylor- and Faith-Shap interaction indices, recover GAMs. In order for this to be the case, the order of the explanation has to be at least as large as the order of \mbox{the GAM.}

\begin{theorem}[Shapley-based Explanations Recover GAMs] 
\label{thm:thm_recovery}
Let $f$ be a generalized additive model of order $n$. Assume that either
\begin{itemize}
    \item[(a)] the value function is given by observational SHAP and the individual features are independent random variables, or
    \item[(b)] the value function is given by interventional SHAP.
\end{itemize}
Then, $n$-Shapley Values, as well as the Shapley Taylor- and Faith-Shap interaction indices of order $n$, recover a representation of $f$ as a GAM. In fact, all the interaction indices are equal to each other and given by
\begin{equation*}
    \Phi_S^n(x)=f_S(x_S)
\end{equation*}
where $f_S$ are the component functions of the Shapley-GAM. 
\end{theorem}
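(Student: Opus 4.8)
The plan is to reduce everything to a single structural fact: when $f$ is a GAM of order $n$, the Shapley-GAM (the \emph{canonical} decomposition attached to the value function) already has order $\le n$, i.e. its components $f_S$ vanish for $|S|>n$. Once this is established, Theorem \ref{thm:thm_shapley_values_from_gam} finishes the argument immediately. First I would note that in both cases (a) and (b) the value function is subset-compliant and in fact coincides: under independence, conditioning on $x_S$ equals intervening on $x_S$, so in either case $v(x_S,S)=\mathbb{E}_{z}[f(x_S,z_{[d]\setminus S})]$ with the complement drawn from its marginal. Hence Theorem \ref{thm:shapley_gam} applies and produces the Shapley-GAM with $f_S(x_S)=\sum_{L\subset S}(-1)^{|S|-|L|}v(x_L,L)$.

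The heart of the proof is the vanishing lemma, which I would prove term by term. Writing $f=\sum_{T\subset[d],\,|T|\le n}g_T(x_T)$ and substituting into the value function gives, for each $L\subset S$, the contribution $\mathbb{E}_{z}\bigl[g_T(x_{T\cap L},z_{T\setminus L})\bigr]$ of the piece $g_T$. Fixing $S$ and setting $A=T\cap S$, the key observation is that for $L\subset S$ we have $T\cap L=A\cap L$ and $T\setminus L=(A\setminus L)\cup(T\setminus S)$, so this contribution depends only on $L\cap A$ and is completely independent of $L\cap(S\setminus A)$. Splitting $L=L_A\cup L_B$ with $L_A\subset A$ and $L_B\subset S\setminus A$, the sign $(-1)^{|S|-|L|}$ factorizes, and the contribution of $g_T$ to $f_S$ becomes a product of an $L_A$-sum and the factor $\sum_{L_B\subset S\setminus A}(-1)^{|S\setminus A|-|L_B|}=(1-1)^{|S\setminus A|}$. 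Because $|A|=|T\cap S|\le|T|\le n<|S|$, we have $A\subsetneq S$, hence $S\setminus A\neq\emptyset$ and this binomial factor is zero. Summing over $T$ yields $f_S\equiv 0$ for every $|S|>n$.

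With the Shapley-GAM now known to have order $\le n$, I would invoke Theorem \ref{thm:thm_shapley_values_from_gam}. For any $S$ with $|S|\le n$,
\begin{equation*}
\Phi_S^n(x)=f_S(x_S)+\sum_{K\subset[d]\setminus S,\,\,n+1\le|S|+|K|}C_{n-|S|,|K|}\,f_{S\cup K}(x_{S\cup K}),
\end{equation*}
and every correction term has $|S\cup K|=|S|+|K|\ge n+1>n$, so $f_{S\cup K}\equiv 0$ by the lemma, leaving $\Phi_S^n(x)=f_S(x_S)$ exactly as claimed. The main obstacle I anticipate is the combinatorial vanishing step: one must correctly track that the integrated-out coordinates $T\setminus L$ always contain $T\setminus S$, so that the $g_T$-contribution truly does not depend on $L_B$ and the sign sum factorizes cleanly into the telescoping $(1-1)^{|S\setminus A|}$. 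A secondary technical point worth verifying carefully is the coincidence of the observational (under independence) and interventional value functions, since it is this identification that lets both hypotheses (a) and (b) be handled by the same computation.
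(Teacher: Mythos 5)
Your proof is correct and follows essentially the same route as the paper: both arguments reduce the theorem to showing that the Shapley-GAM components $f_S$ vanish for $|S|>n$, and then conclude immediately from Theorem \ref{thm:thm_shapley_values_from_gam}, since every correction term there has order at least $n+1$. The only differences are organizational: you unify cases (a) and (b) into a single marginal-expectation value function and obtain the cancellation from the factor $(1-1)^{|S\setminus A|}$, whereas the paper treats the two cases separately and cancels by pairing each $L\subset S\setminus\{i\}$ with $L\cup\{i\}$ for a single $i\in S\setminus T$ --- the same inclusion--exclusion argument arranged differently.
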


Theorem \ref{thm:thm_recovery} implies that the SHAP feature attributions recover GAMs without variable interactions and that Shapley Interaction Values recover GAMs with interactions of at most two variables at a time. Unlike our previous results, Theorem \ref{thm:thm_recovery} depends on the choice of the value function. This is because the recovery property holds if (1) the interaction index can be written like in equation \eqref{eq:n_shapley_values_from_gam_formula}, and (2) the Shapley-GAM is a GAM or order $n$ --- and the second point depends on the value function. 

As it turns out, the independence assumption in part (a) of Theorem \ref{thm:thm_recovery} is indeed necessary (see Appendix \ref{apx:observational_shap}). This is interesting insofar as it establishes the usefulness of the interventional SHAP value function from a purely statistical perspective, that is without any causal motivation (for a discussion about the differences between observational and interventional SHAP, see also \citet{chen2020true}).

Figure \ref{fig:recovery} (Top) illustrates the recovery result for a GAM without variable interactions. For this example, we explicitly resort to the default implementation of the Kernel SHAP algorithm, in order to see whether there is any significant approximation error (Kernel SHAP approximates the Shapley Values of the interventional SHAP value function). The top part of Figure \ref{fig:recovery} depicts the shape curve of the feature POWPUMA in the GAM (blue curve), as well as the associated Kernel SHAP values (red dots). The Kernel SHAP values lie almost exactly on the shape curve of the GAM, which means that the recovery property holds fairly precisely, at least in this simple example. %

\section{IS THERE AN ACCURACY-\\COMPLEXITY TRADE-OFF?}
\label{sec:trade-off}

\begin{figure}
    \centering   
    \includegraphics[width=0.49\textwidth]{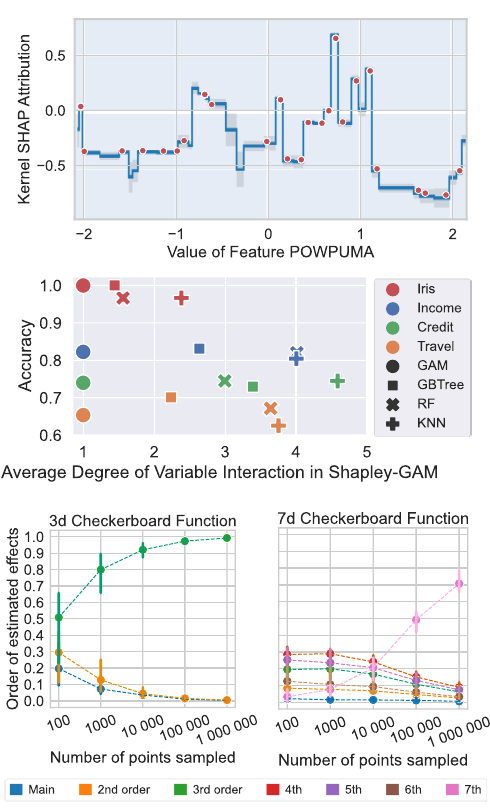}
    \caption{\textbf{Top}: Shapley Values recover GAMs without variable interactions (Theorem \ref{thm:thm_recovery}). To create this figure, we first trained a GAM on the Folktables Travel data set using the InterpretML package \citep{nori2019interpretml}. We then computed the Kernel SHAP values for the decision function of the GAM using the {\tt shap} package \citep{lundberg2017unified}. For the feature POWPUMA, the Figure depicts the ground-truth variable effect in the GAM in blue, and the associated Kernel SHAP values for data points from the test set as red dots. We see that the red dots lie on the blue line, that is Kernel SHAP recovers the component function of the GAM. \textbf{Middle}: The average degree of variable interaction \eqref{eq:def_variable_interaction} in the Shapley-GAM of interventional SHAP for various standard classifiers. The figure depicts predictive accuracy versus the average degree of variable interaction. \textbf{Bottom}: Estimating higher-order variable interactions requires precise evaluations of the value function. A simple way to study this is by estimating the $k$-dimensional checkerboard function \eqref{eq:checkerboard_fn}. \textit{Left:} 3-way variable interactions can be precisely estimated. \textit{Right:} 7-way variable interactions can be reliably detected, but precise estimation requires prohibitively many samples.}
    \label{fig:recovery}
\end{figure}

In the previous sections, we have outlined the connections between Shapley Values and GAMs on a theoretical level. In this section, as well as in the next section, we turn to more practical concerns. In this section, we investigate the number of variable interactions that are present in various standard classifiers.  In order to do so, we rely on a number of low-dimensional data sets on which we can reliably estimate the Shapley-GAM decompositions of the different learned predictors (compare Section \ref{sec:estimating}). It is interesting to compare this against the accuracy: Because models with more variable interactions can represent strictly more functions than models with less variable interactions, it is natural to suspect that more accurate classifiers might exhibit higher degrees of variable interaction \citep{dziugaite2020enforcing}.

We suggest to measure the extent of variable interaction that is present in a given classifier with the following quantity
\begin{equation}
\label{eq:def_variable_interaction}
    \mathop{\mathbb{E}}_{x\sim\mathcal{D}}\left[\sum_{S\subset[d]}|S|\cdot|f_S(x_S)|\right] \Big/ \mathop{\mathbb{E}}_{x\sim\mathcal{D}}\left[\sum_{S\subset[d]}|f_S(x_S)|\right].
\end{equation}
where $f_S$ are the component functions of the Shapley-GAM decomposition of $f$, using interventional SHAP. 

Figure \ref{fig:recovery} (Middle) illustrates the relationship between the predictive accuracy and our measure \eqref{eq:def_variable_interaction} for different predictors $f$. The figure depicts four different kinds of classifiers: A Glassbox-GAM without variable interactions \citep{nori2019interpretml}, a gradient boosted tree \citep{xgboost}, a random forest, and a kNN classifier \citep{scikit-learn}. We compare these classifiers on four different data sets: Folktables Travel and Income \citep{ding2021retiring}, Iris, and German Credit. Details on the data sets and training procedures are in Appendix \ref{apx:datasets_models}. 

As far as accuracy is concerned, we see from Figure \ref{fig:recovery} that GAMs without variable interactions perform fairly well against the more complicated classifiers --- a fact that has often been observed in the literature \citep{caruana2015intelligible,agarwal2021neural}. On the more complex data sets, however, there is usually a model with variable interactions and slightly better accuracy\footnote{The InterpretML package \citep{nori2019interpretml} allows to include interactions between pairs of variables which reportedly allows to be on par with black-box models on many data sets. Compare also \citep{lou2012intelligible}.} As far as the degree of variable interaction is concerned, we see that there is a large amount of variation in between the different classifier. 

Especially interesting is the kNN classifier. It tends to perform worse in terms of accuracy than the interpretable GAM, but exhibits very high degree of variable interaction. Observe that the kNN classifier can also be considered interpretable (by explaining the workings of the classifier and providing the $k$ data points that are responsible for the classification). Therefore, this example shows that a high degree of variable interaction in the Shapley-GAM does not imply that a function is hard to explain per se. %

This simple empirical investigation suggests that the relation between accuracy and the average degree of variable interaction in the Shapley-GAM is nuanced: While some degree of interaction seems necessary in order to achieve competitive accuracy, some classifiers seem to exhibit more interaction than that. In some cases, the correlation might even be negative (as for the kNN classifier).

\section{COMPUTATION AND ESTIMATION}
\label{sec:estimating}

We now turn to the practical question of computing $n$-Shapley Values. In this work, we take the trivial approach and simply evaluate the value function for all possible subsets $S\subset[d]$, then combine the respective terms according to Definition \ref{def:n_shapley_values}. A Python package to compute $n$-Shapley Values, as well as the Shapley Taylor- and Faith-Shap interaction indices,  is available \url{https://github.com/tml-tuebingen/nshap}. Even for the original Shapley Values, it is well-known that the number of required evaluations of the value function grows exponentially in the number of features. For this reason, there exist efficient approximations such as Kernel SHAP, as well as efficient implementations for certain function classes such as tree based models \citep{lundberg2017unified}. We hold that such computationally efficient approximations are also be possible for $n$-Shapley Values.

Instead of focusing on the well-known computational aspect of the problem, we want to focus on the estimation aspect which seems much less studied. Note that $n$-Shapley Values are a statistic that is subject to sampling variation. The same is true for our visualizations (as in Figure \ref{fig:n_shap}), which are summary statistics of $n$-Shapley Values. This is because both the observational and the interventional SHAP value function require to estimate an expectation.

We now asses with a simple empirical analysis up to which order interaction effects can be estimated in practice. We consider the $k$-dimensional checkerboard function $B_k:[0,1]^d\to\{0,1\}$ given by
\begin{equation}
\label{eq:checkerboard_fn}
    B_k(x_1,\dots,x_d)=\begin{cases}
    0 \quad\text{if } \sum_{i=0}^k\lfloor(\lambda\cdot \,x_i)\rfloor \mod 2 = 0\\
    1 \quad\text{otherwise}\\
    \end{cases}
\end{equation}
where $\lambda>2$ parameterizes the number of checkers along each axis. If data points are uniformly distributed in the unit cube $[0,1]^d$, then the Shapely-GAM of interventional SHAP of $B_k$ is given by the single $k$-th order interaction effect $f_{x_1,\dots,x_k}(x_1,\dots,x_k)=B_k(x_1,\dots,x_k,0,\dots,0)$. The question now is how precisely we have to estimate the expectation $\mathbb{E}_{z\sim\mathcal{D}}\left[f(z)\,|\,do(x_S)\right]$ if we want to precisely estimate a $k$th-order interaction effect. 

The bottom part of Figure \ref{fig:recovery} depicts the result of estimating $10$-Shapley Values when the underlying function is the 3- or 7-dimensional checkerboard function, respectively. The x-axis depicts the number of samples used to estimate the value function, ranging from 100 to 1 000 000. The y-axis depicts the order of the estimated effects, with confidence bands that account for 5 randomly sampled data sets. From the figure, we observe that if the number of samples is small in relation to the magnitude of the interaction effect, then the estimation results in spurious lower-order effects. For $k=3$, these effects vanish with sufficiently many samples, which means that the checkerboard function is precisely estimated. For $k=7$, the presence of the higher-order interaction effect can be reliably detected, but not precisely estimated given reasonably many samples. 

In this simple analysis, we see that interaction effects of order larger that 2 can be precisely estimated given sufficiently many samples. We also see that functions with high-order interactions are difficult to estimate and can result in artifacts. Figures for all interaction orders $k=2,\dots,10$ and a discussion of the precision of the depicted visualizations of $n$-Shapley Values can be found in Appendix \ref{apx:estimation}.

\section{DISCUSSION}

This work provides a functionally-grounded characterization of Shapley Values as they are being used in explainable machine learning \citep{doshi2017towards}. Explainable machine learning is often believed to be an important component in societal applications of machine learning \citep{WachterEtal17,KamUrb21,Kaestner21}. At the same time, current approaches face a lot of criticism, for example because they are non-robust or unable to provide the desired level of model understanding (as well as for a variety of other concerns) \citep{lipton2018mythos,kumar2020problems,SlackEtal20,bordt2022post}. In this situation, we believe that a precise understanding of the mechanics of popular explainability methods, such as the one presented in this work, is a good first step toward an informed discussion of what we can and cannot achieve.

Some of our results stand in contrast to conventional wisdom around Shapley Values, and offer a novel perspective on local-post hoc explanation algorithms.
For example, we have seen that Shapley Values depend on the coordinates of the point that we attempt to explain, but not on the local neighbourhood of that point --- the recovery example with the step function in Figure \ref{fig:recovery} suggests that this is also the case for the approximations of the Shapley Value that are used in practice. We have further seen that the original Shapley Values are able to faithfully explain non-linear functions, as long as the non-linearity is restricted to the specific form permitted by GAMs. As such, our results highlight the differences between Shapley Values and other feature attribution methods, for example those that are related to the gradient  \citep{GarLux20,agarwal2021towards}, and those that perform local function approximation \citep{han2022explanation}.

The demonstrated connections between value functions and functional decompositions effectively link the literature on feature attributions with the tools developed in the statistics literature on functional decompositions \citep{hooker2007generalized,lengerich2020purifying}. This raises the question of whether criteria for functional decompositions can be useful to understand feature attributions. Here, two concurrent works made significant contributions: \citet{causal_shapley_gam} show that the value function of interventional SHAP can be motivated with a causal assumption on the associated functional decomposition. \citet{herren2022statistical} outline connections between observational SHAP and functional ANOVA. 

While our work gives a functionally-grounded analysis of Shapley Values for any function, as well as recovery guarantees for Shapley Values and GAMs, we do not claim that Shapley Values are an appropriate post-hoc explanation method for any function \citep{kumar2021shapley,tan2022cautionary}. Instead, the purpose of our work is to highlight the connections between a post-hoc explanation method and a class of interpretable models. Overall, however, we believe that many properties of Shapley Values have the potential to be more clearly understood using our perspective of functional decompositions.

\newpage
\section*{Acknowledgements}

This work was done in part while Sebastian was visiting the Simons Institute for the Theory of Computing. Sebastian would like to thank Rich Caruana, Gyorgy Turan, Michal Moshkovitz and Tosca Lechner for many fruitful discussions about variable interactions. The authors would also like to thank Markus Scheuer and René Gy for linking Lemma \ref*{bernoulli_lemma_2} to the literature on Bernoulli numbers, and the anonymous reviewers whose comments helped to improve this paper. This work has been supported by the German Research Foundation through the Cluster of Excellence “Machine Learning – New Perspectives for Science" (EXC 2064/1 number 390727645), the BMBF Tübingen AI Center (FKZ: 01IS18039A), and the International Max Planck Research School for Intelligent Systems (IMPRS-IS).

\bibliography{literature}

\onecolumn
\clearpage
\appendix

\renewcommand\thefigure{\thesection.\arabic{figure}}    
\setcounter{figure}{0}    
\renewcommand\thetable{\thesection.\arabic{table}}  
\setcounter{table}{0}    
\setcounter{footnote}{0}    

\section{$n$-Shapley Values}
\label{apx:n_shapley}

This section details the properties of $n$-Shapley Values. 

\subsection{Bernoulli numbers}

The Bernoulli numbers\footnote{An introduction and discussion about Bernoulli numbers can be found, for example, in the corresponding Wikipedia article at  \url{https://en.wikipedia.org/wiki/Bernoulli_number}.} $B_n$ are defined by $B_0=1$ and 
\begin{equation}
\label{eq:bernoulli_numbers}
    \sum_{k=0}^{n}\binom{n+1}{k}B_{k}=0\quad\forall n\geq 1.
\end{equation}
In this paper, the Bernoulli numbers arise as the coefficients that make $n$-Shapley Values sum to the prediction (Proposition
\ref{prop:efficiency}). In fact, equation \eqref{eq:bernoulli_numbers} arises directly from the proof of Proposition
\ref{prop:efficiency}. 
The Bernoulli numbers can be computed recursively by re-writing into
 \eqref{eq:bernoulli_numbers} 
\begin{equation}
\label{eq:bernoulli_recursive}
B_n=\frac{-1}{n+1}\sum_{k=0}^{n-1}B_k\binom{n+1}{k}\quad\forall n\geq 1.
\end{equation}
In a certain sense, the entire combinatorics around $n$-Shapley Values relies on the properties of the Bernoulli numbers. In particular, the proofs of Theorem \ref{thm:shapley_gam} and Theorem \ref{thm:thm_shapley_values_from_gam} rely on the following two Lemmas.

\begin{lemma}
\label{bernoulli_lemma_1}
For all $n\geq 1$, it holds that \begin{equation}
    \sum_{k=1}^n \frac{B_k}{n-k+1}\binom{n}{k}=\frac{-1}{n+1}.
\end{equation}
\end{lemma}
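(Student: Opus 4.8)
The plan is to reduce the left-hand side to the defining recurrence for the Bernoulli numbers, equation \eqref{eq:bernoulli_numbers}, by absorbing the awkward factor $\frac{1}{n-k+1}$ into the binomial coefficient. First I would establish the elementary identity
\begin{equation*}
\frac{1}{n-k+1}\binom{n}{k}=\frac{1}{n+1}\binom{n+1}{k},
\end{equation*}
which follows immediately from writing both sides as $\frac{n!}{k!(n-k+1)!}$ and noting $n-k+1=(n+1)-k$. This is the only genuinely computational step, and it is routine.

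Substituting this into the sum pulls the factor $\frac{1}{n+1}$ out front, leaving
\begin{equation*}
\sum_{k=1}^n \frac{B_k}{n-k+1}\binom{n}{k}=\frac{1}{n+1}\sum_{k=1}^n B_k\binom{n+1}{k}.
\end{equation*}
Now I would invoke the definition of the Bernoulli numbers in the form $\sum_{k=0}^{n}\binom{n+1}{k}B_{k}=0$, which holds for all $n\geq 1$. Splitting off the $k=0$ term, which equals $\binom{n+1}{0}B_0=1$, gives $\sum_{k=1}^n B_k\binom{n+1}{k}=-1$. Plugging this back yields $\frac{-1}{n+1}$, as claimed.

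Since there is no induction and no generating-function manipulation needed, I do not expect a real obstacle here; the only thing to watch is the bookkeeping at the boundary, namely correctly isolating the $k=0$ term so that the range of summation matches the defining recurrence \eqref{eq:bernoulli_numbers}, and using $B_0=1$. Note also that the hypothesis $n\geq 1$ is exactly what licenses the use of that recurrence.
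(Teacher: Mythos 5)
Your proof is correct and follows essentially the same route as the paper: both absorb the factor $\frac{1}{n-k+1}$ via the identity $\frac{1}{n-k+1}\binom{n}{k}=\frac{1}{n+1}\binom{n+1}{k}$ and then apply the defining recurrence \eqref{eq:bernoulli_numbers} after isolating the $B_0$ term. Your write-up just makes explicit the rearrangement that the paper does in a single line.
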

\begin{proof}
We re-arrange the sum to get
\begin{equation}
    \sum_{k=1}^n \frac{B_k}{n-k+1}\binom{n}{k}=\frac{1}{n+1}\sum_{k=0}^{n}\binom{n+1}{k}B_k-\frac{B_0}{n+1}=\frac{-1}{n+1}
\end{equation}
where the second equality follows from \eqref{eq:bernoulli_numbers}.
\end{proof}

\begin{lemma}
\label{bernoulli_lemma_2}
For all $n,m\geq 0$, it holds that
\begin{equation}
    \sum_{k=0}^n\sum_{l=0}^m\binom{n}{k}\binom{m}{l}\frac{(n-k)!(m-l)!}{(n+m-k-l+1)!}(-1)^l B_{k+l}=\begin{cases}
 1 & \quad\text{if } n=0\\[8pt] 
 0 & \quad\text{otherwise.}\\
\end{cases}
\end{equation}
\end{lemma}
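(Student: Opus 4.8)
The plan is to recognize the factorial ratio as a Beta integral, and thereby reduce the double sum to an integral of Bernoulli polynomials over $[0,1]$.

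First, since $n-k\ge 0$ and $m-l\ge 0$ throughout the summation, I would write
\[
\frac{(n-k)!\,(m-l)!}{(n+m-k-l+1)!}=\int_0^1 t^{\,n-k}(1-t)^{\,m-l}\,dt ,
\]
the standard Beta-integral identity. Substituting this and exchanging the (finite) sum with the integral, the left-hand side becomes $\int_0^1 S(t)\,dt$, where
\[
S(t)=\sum_{k=0}^n\sum_{l=0}^m\binom{n}{k}\binom{m}{l}(-1)^l\,t^{\,n-k}(1-t)^{\,m-l}\,B_{k+l}.
\]

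Second, I would evaluate $S(t)$ in closed form. The key device is the linear functional $L$ acting on polynomials in a formal indeterminate $B$ by $L[B^j]=B_j$ (extended linearly); by the definition of the Bernoulli polynomials this gives $L[(B+t)^r]=\sum_i\binom{r}{i}B_i\,t^{r-i}=B_r(t)$. Expanding $(B+t)^n=\sum_k\binom{n}{k}B^k t^{n-k}$ and $(1-t-B)^m=\sum_l\binom{m}{l}(-1)^l(1-t)^{m-l}B^l$ and multiplying, the coefficient bookkeeping shows precisely $S(t)=L\!\left[(B+t)^n(1-t-B)^m\right]$, since $L[B^{k+l}]=B_{k+l}$. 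Now I would set $u=B+t$, so that $1-t-B=1-u$, and expand $u^n(1-u)^m=\sum_{j=0}^m\binom{m}{j}(-1)^j u^{n+j}$. Applying $L$ term by term yields the clean formula
\[
S(t)=\sum_{j=0}^m\binom{m}{j}(-1)^j\,B_{n+j}(t).
\]

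Third, I would integrate, using the elementary fact $\int_0^1 B_r(t)\,dt=\delta_{r,0}$, which follows from $B_{r+1}'=(r+1)B_r$ together with $B_{r+1}(1)-B_{r+1}(0)=(r+1)\,0^{r}$. Hence
\[
\int_0^1 S(t)\,dt=\sum_{j=0}^m\binom{m}{j}(-1)^j\,\delta_{n+j,0},
\]
and $n+j=0$ can occur only when $n=0$ and $j=0$. This gives $1$ when $n=0$ and $0$ otherwise, which is exactly the claim.

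The only genuinely delicate step is the second one: turning the double sum with the coupled index $B_{k+l}$ into a single sum of Bernoulli polynomials. I expect this to be the main obstacle, and I would make it rigorous by phrasing it through the linear functional $L$ (equivalently, the umbral calculus of the Bernoulli umbra), which reduces the whole manipulation to finite polynomial coefficient matching and so avoids any convergence concerns. As a sanity check, the $n=0$ case reduces to $\sum_{l=0}^m\binom{m}{l}\frac{(-1)^l}{m-l+1}B_l=1$, which is Lemma \ref{bernoulli_lemma_1} after accounting for the sign flip on $B_1$ (note that $(-1)^l B_l=B_l$ for every $l\neq 1$).
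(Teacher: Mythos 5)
Your proof is correct, and it takes a genuinely different route from the paper's. The paper argues by cases: it disposes of $n=0$ and of $m=0,\ n\geq 1$ directly from the defining recursion $\sum_{k=0}^{n}\binom{n+1}{k}B_k=0$ (plus the vanishing of odd Bernoulli numbers), and for $n,m\geq 1$ it cites an external reduction of the identity to the reciprocity relation $(-1)^n\sum_{l=0}^m\binom{m}{l}\frac{B_{n+l+1}}{n+l+1}+(-1)^m\sum_{k=0}^n\binom{n}{k}\frac{B_{m+k+1}}{m+k+1}=-\frac{1}{(n+m+1)\binom{n+m}{m}}$, which it then verifies using Gould's binomial-transform theorem together with the self-transform property $\sum_{k=0}^n\binom{n}{k}B_k=(-1)^nB_n$. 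You instead turn the factorial ratio into a Beta integral, collapse the double sum umbrally into $S(t)=\sum_{j=0}^m\binom{m}{j}(-1)^jB_{n+j}(t)$, and finish with $\int_0^1B_r(t)\,dt=\delta_{r,0}$. Your steps all check out: the Beta identity applies because $n-k,m-l\geq 0$; the substitution $u=B+t$ is a legitimate polynomial identity in the formal indeterminate $B$ \emph{before} the functional $L$ is applied, so the term-by-term evaluation $L[(B+t)^{n+j}]=B_{n+j}(t)$ is just finite coefficient matching; and the integral formula holds in the $B_1=-1/2$ convention the paper uses. What your approach buys: it is self-contained (no appeal to Gould's theorem, and no reliance on a reduction that the paper itself does not reprove), it handles all $n,m\geq 0$ uniformly with no case analysis, and it yields the stronger pointwise identity for $S(t)$ as a by-product. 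What the paper's route buys: it stays within finite sums of rational numbers (no integrals, no umbral formalism) and connects the lemma explicitly to known Bernoulli reciprocity results in the literature.
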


Lemma \ref{bernoulli_lemma_2} follows from standard results for the Bernoulli numbers \citep{gould2014bernoulli}[Theorem 2]. A proof is contained in Appendix \ref{apx:bernoulli_lemma}.

\subsection{Additivity and Efficiency}

From the recursive definition of the $n$-Shapley Values in Definition \ref{def:n_shapley_values}, a straightforward calculation shows that
\begin{equation}
\label{eq:apx_nshapley_explicit}
    \Phi_S^n(x)=\sum_{k=0}^{n-|S|}\sum_{K\subset [d]\setminus S,\,|K|=k} B_k\,\Delta_{S\cup K}(x)
\end{equation}
which is an alternative non-recursive definition of $n$-Shapley Values. 

\newpage

\begin{proposition}[Additivity]
For all $1\leq n\leq d$ and all $f,g:\mathbb{R}^n\to\mathbb{R}$, we have
\begin{equation}
    \Phi^{n}_S(x;f+g)=\Phi^{n}_S(x;f)+\Phi^{n}_S(x;g).
\end{equation}
\end{proposition}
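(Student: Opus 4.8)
The plan is to exploit the fact that every operation in the construction of $\Phi_S^n$ is linear in the underlying value function, and that the value functions of interest are themselves linear in $f$. Concretely, I would trace linearity from $f$ all the way up to $\Phi_S^n$ in three short steps.

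First, I would observe that the value function is linear in $f$. For the observational SHAP value function \eqref{eq:SHAP} and the interventional SHAP value function \eqref{eq:interventional_SHAP}, this is immediate from the linearity of (conditional) expectation: replacing $f$ by $f+g$ gives $v(x,S;f+g)=v(x,S;f)+v(x,S;g)$ for every $x$ and every $S\subset[d]$. (More generally, additivity will hold for any value function that depends linearly on $f$.) Second, I would note that the contribution terms $\Delta_S(x)$ in \eqref{eq:delta_S} form a fixed linear combination of the values $v(x,L\cup T)$: the coefficients $\frac{(d-|T|-|S|)!\,|T|!}{(d-|S|+1)!}(-1)^{|S|-|L|}$ do not depend on $f$. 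Hence $\Delta_S(x;f+g)=\Delta_S(x;f)+\Delta_S(x;g)$ follows directly from the first step.

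Third, I would invoke the non-recursive formula \eqref{eq:apx_nshapley_explicit}, which expresses $\Phi_S^n(x)=\sum_{k=0}^{n-|S|}\sum_{K\subset[d]\setminus S,\,|K|=k}B_k\,\Delta_{S\cup K}(x)$ as a fixed linear combination of the $\Delta_{S\cup K}(x)$, with Bernoulli-number coefficients that are independent of $f$. Combining with the second step yields $\Phi_S^n(x;f+g)=\Phi_S^n(x;f)+\Phi_S^n(x;g)$, which is the claim. Alternatively, one can bypass \eqref{eq:apx_nshapley_explicit} and argue by induction on $n$ directly from the recursion \eqref{eq:n_shapley_rec_devinition}, since each recursive step only adds an $f$-independent linear combination of $\Delta$-terms to a quantity that is already additive by the inductive hypothesis.

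There is no genuine obstacle here: the whole argument is a chain of linearities, and the only point at which a hypothesis actually enters is the first step, where one needs the value function to depend linearly on $f$. Since this holds for both value functions under consideration, the proposition follows.
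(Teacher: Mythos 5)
Your proof is correct and follows essentially the same route as the paper: both arguments chain the linearities $f \mapsto v \mapsto \Delta_S \mapsto \Phi_S^n$, the paper stating this in one line and you spelling out each link (including the verification that the SHAP value functions are linear in $f$, which the paper simply assumes as the hypothesis $v_{f+g}=v_f+v_g$).
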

\begin{proof}
By definition, $\Phi^{n}_S$ is linear in $\Delta_S$, and $\Delta_S$ is linear in the value function $v$. Therefore, the linearity of $\Phi^{n}_S$ in $f$ follows from the linearity of $v$ in $f$, i.e. from the fact that $v_{f+g}(x,S)=v_{f}(x,S)+v_g(x,S)$.
\end{proof}

\begin{proposition}[Efficiency]
\label{prop:efficiency}
For all $1\leq n\leq d$, it holds that
\begin{equation}
\sum_{\substack{S\subset[d]\\1\leq |S|\leq n}} \Phi^{n}_S(x)=v([d])-v(\emptyset).    
\end{equation}
\end{proposition}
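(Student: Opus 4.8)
The plan is to expand the left-hand side using the non-recursive expression \eqref{eq:apx_nshapley_explicit}, collect everything into a single linear combination of the contribution terms $\Delta_R(x)$, and then show that all coefficients except those attached to singletons vanish as a direct consequence of the defining relation \eqref{eq:bernoulli_numbers} of the Bernoulli numbers.

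First I would substitute $\Phi_S^n(x)=\sum_{k=0}^{n-|S|}\sum_{K\subset[d]\setminus S,\,|K|=k}B_k\,\Delta_{S\cup K}(x)$ into $\sum_{1\leq|S|\leq n}\Phi_S^n(x)$ and interchange the order of summation, regrouping the terms according to the set $R:=S\cup K$. For a fixed nonempty $R$ with $r:=|R|\leq n$, the pairs $(S,K)$ that contribute a copy of $\Delta_R(x)$ are exactly those with $S\subseteq R$ nonempty and $K=R\setminus S$; here the constraints $|S|\leq n$ and $|K|\leq n-|S|$ are both automatic because $r\leq n$, so no term is truncated. Writing $j=r-|S|$ and counting the $\binom{r}{j}$ subsets of each size, the coefficient of $\Delta_R(x)$ is therefore exactly $\sum_{j=0}^{r-1}\binom{r}{j}B_j$.

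The decisive step is evaluating this coefficient. Rewriting \eqref{eq:bernoulli_numbers} with $m=r$ yields $\sum_{j=0}^{r-1}\binom{r}{j}B_j=0$ for every $r\geq 2$, while for $r=1$ the sum reduces to $\binom{1}{0}B_0=1$. Hence every $\Delta_R(x)$ with $|R|\geq 2$ drops out and the left-hand side collapses to $\sum_{i=1}^d\Delta_{\{i\}}(x)$. Inspecting \eqref{eq:delta_S} in the case $|S|=1$ shows that $\Delta_{\{i\}}(x)$ is precisely the classical Shapley value of feature $i$, so the claim follows from standard Shapley efficiency; concretely one tracks the coefficient of each $v(x,A)$ in $\sum_{i=1}^d\Delta_{\{i\}}(x)$ and checks that it telescopes to $+1$ for $A=[d]$, to $-1$ for $A=\emptyset$, and to $0$ otherwise.

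I expect the main obstacle to be the combinatorial bookkeeping in the regrouping step: one must verify that restricting the outer sum to $1\leq|S|\leq n$ together with the internal cutoff $|K|\leq n-|S|$ still leaves the coefficient of each admissible $\Delta_R$ \emph{complete} rather than partial — it is precisely the hypothesis $|R|\leq n$ that guarantees the full Bernoulli sum $\sum_{j=0}^{r-1}\binom{r}{j}B_j$ appears and triggers the cancellation. An equally clean alternative is induction on $n$ directly from the recursive Definition \ref{def:n_shapley_values}: the base case $n=1$ is standard Shapley efficiency, and in the inductive step the newly introduced terms contribute, for each $R$ with $|R|=n$, a coefficient $\sum_{s=1}^n\binom{n}{s}B_{n-s}=\sum_{j=0}^{n-1}\binom{n}{j}B_j=0$, again by \eqref{eq:bernoulli_numbers}, so that the sum is unchanged from order $n-1$ to order $n$.
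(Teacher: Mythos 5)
Your proof is correct, and your primary argument takes a genuinely different route from the paper's. The paper proceeds by induction on $n$: it splits the sum into the terms with $|S|<n$ and $|S|=n$, applies the recursive definition \eqref{eq:n_shapley_rec_devinition}, invokes the induction hypothesis on the $\Phi^{n-1}_S$ part, and then shows that the newly introduced order-$n$ terms cancel because each set of size $n$ acquires total coefficient $\sum_{s=1}^{n-1}\binom{n}{s}B_{n-s}+1=0$. You instead work directly from the closed-form expression \eqref{eq:apx_nshapley_explicit}, regroup by $R=S\cup K$, and show in one shot that every $\Delta_R$ with $|R|\geq 2$ receives the vanishing coefficient $\sum_{j=0}^{|R|-1}\binom{|R|}{j}B_j$, so the whole sum collapses to $\sum_i \Delta_{\{i\}}(x)$, i.e.\ to the ordinary Shapley Values, whence classical efficiency finishes the argument. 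The combinatorial heart is identical in both proofs (the defining Bernoulli relation \eqref{eq:bernoulli_numbers}), and both terminate in an appeal to classical Shapley efficiency (the paper uses it as the base case $n=1$). What your route buys is a stronger intermediate fact, namely $\sum_{1\leq|S|\leq n}\Phi^n_S(x)=\sum_{i}\Phi^1_i(x)$ for every $n$, and it avoids induction entirely; the price is that it leans on \eqref{eq:apx_nshapley_explicit}, which the paper asserts via a ``straightforward calculation'' from the recursion, whereas the paper's induction works purely from Definition \ref{def:n_shapley_values}. Your correctly handled observation that no term is truncated when $|R|\leq n$ is exactly the point where a careless regrouping would go wrong, so it is good that you flagged it. Finally, the alternative you sketch at the end is precisely the paper's proof, down to the same coefficient identity.
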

\begin{proof}
For $n=1$, the statement follows from the efficiency of the original Shapley Values. We assume that the statement holds for $n-1$ and re-arrange the sum
\begin{equation}
\begin{split}
\sum_{\substack{S\subset[d]\\1\leq |S|\leq n}} \Phi^{n}_S(x)&=\sum_{\substack{S\subset[d]\\1\leq |S|< n}} \Phi^{n}_S(x)+\sum_{\substack{S\subset[d]\\|S|=n}} \Phi^{n}_S(x)\\
             &=\sum_{\substack{S\subset[d]\\1\leq|S|< n}}\left(\Phi_S^{n-1}(x)+B_{n-|S|}\,\mathlarger{\sum}_{\substack{K\subset [d]\setminus S\\|K|+|S|=n}}\Delta_{S\cup K}(x)\right)+\sum_{\substack{S\subset[d]\\|S|=n}} \Delta_{S}(x)\\[10pt]
             &=\sum_{\substack{S\subset[d]\\1\leq|S|\leq n-1}}\Phi_S^{n-1}(x)+\sum_{\substack{S\subset[d]\\1\leq|S|< n}}\,\mathlarger{\sum}_{\substack{K\subset [d]\setminus S\\|K|+|S|=n}}B_{n-|S|}\,\Delta_{S\cup K}(x)+\sum_{\substack{S\subset[d]\\|S|=n}} \Delta_{S}(x).\\[4pt]
\end{split}
\end{equation}
Notice that the first term is equivalent to $v([d])-v(\emptyset)$ by the induction hypothesis. It remains to show that 
\begin{equation}
\label{eq:proof_efficiency_1}
    \sum_{\substack{S\subset[d]\\1\leq|S|< n}}\,\mathlarger{\sum}_{\substack{K\subset [d]\setminus S\\|K|+|S|=n}}B_{n-|S|}\,\Delta_{S\cup K}(x)+\sum_{\substack{S\subset[d]\\|S|=n}} \Delta_{S}(x)=0.
\end{equation}
Notice that both sums are over sets of length $n$. In the first sum, each sets occurs multiple times. In the second sum, each set occurs exactly once. By counting the occurrences of each set in the first sum we see that \eqref{eq:proof_efficiency_1} holds if
\begin{equation}
    \sum_{s=1}^{n-1}B_{n-s}\binom{n}{s}+1=0.
\end{equation}
If we set $B_0=1$, this holds if and only if 
\begin{equation}
    \sum_{k=0}^{n-1}B_{k}\binom{n}{k}=0,
\end{equation}
which is the defining property of the Bernoulli numbers \eqref{eq:bernoulli_numbers}. In summary, we see that the Bernoulli numbers are the coefficients that balance the terms in the first sum in equation \eqref{eq:proof_efficiency_1}.
\end{proof}

\begin{table}[t]
\begin{tabular}{lcccccccccccccccccccc}
\hline
n              & 0 & 1 & 2 & 3 & 4 & 5 & 6 & 7 & 8 & 9 & 10 & 11 & 12 & 13 & 14 & 15 & 16 & 17 & 18 & 19\\
\hline\\[-6pt]
$B_n$ & 1 & $\frac{-1}{2}$ & $\frac{1}{6}$ & 0 & $\frac{-1}{30}$ & 0 & $\frac{1}{42}$ & 0 & $\frac{-1}{30}$ & 0 & $\frac{5}{66}$ & 0 & $\frac{-691}{2730}$ & 0 & $\frac{7}{6}$ & 0 & $\frac{-3617}{510}$ & 0 & $\frac{43867}{798}$ & 0  \\[2pt]
\hline
\end{tabular}
    \caption{The first 20 Bernoulli numbers.}
    \label{tab:alpha_k}
\end{table}

\subsection{Relationship Between $n$-Shapley Values of Different Order}

The following proposition is a straightforward extension of Theorem \ref{thm:thm_shapley_values_from_gam}.

\begin{proposition}
[Relationship Between $n$-Shapley Values of Different Order]
\label{prop:n_shapley_recursion}
For $m\leq n$, let $\Phi^{m}_S$ and $\Phi^{n}_S$ be the $m$- and $n$-Shapley Values, respectively. Then, the $m$-Shapley Values can be computed from the $n$-Shapley Values by
\begin{equation}
    \Phi_S^m(x)=\Phi_S^n+\sum_{\substack{K\subset[d]\setminus S,\\[2pt]m-|S|<|K|\leq n-|S|}}\beta_{m-|S|,|K|}\,\Phi_{S\cup K}^n(x).
\end{equation}
Specifically, it holds that
\begin{equation}
    \Phi_i^1=\Phi^n_i+\frac{1}{2}\sum_{j\neq i}\Phi^n_{i,j}+\dots+\frac{1}{n}\sum_{\substack{K\subset [d]\setminus\{i\}\\ |K|=n-1}}\Phi^n_{K\cup {i}}
\end{equation}
which is the basis for the visualizations in the paper. 
\end{proposition}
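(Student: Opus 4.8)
The plan is to reduce the claim to a single identity among Bernoulli numbers by expressing every $n$-Shapley value through the measures of contribution $\Delta_T$ from \eqref{eq:delta_S}. By the non-recursive formula \eqref{eq:apx_nshapley_explicit}, $\Phi_S^m(x)=\sum_{k=0}^{m-|S|}\sum_{|K|=k}B_k\,\Delta_{S\cup K}(x)$ and likewise for order $n$, so subtracting cancels all common terms and leaves $\Phi_S^m-\Phi_S^n=-\sum_{m-|S|<|K|\le n-|S|}B_{|K|}\,\Delta_{S\cup K}$. I will then read off the candidate coefficients from the special case in the statement: since $C_{0,k}=\tfrac{1}{k+1}$, matching the displayed formula for $\Phi_i^1$ suggests $\beta_{a,b}=C_{a,b}$, and I will prove the proposition with this choice (one can alternatively fix $\beta$ by comparing base terms in the Shapley-GAM basis, see below).

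First I would expand the proposed right-hand side $\sum_{a<|K|\le b}C_{a,|K|}\,\Phi_{S\cup K}^n$ (writing $a=m-|S|$, $b=n-|S|$) again via \eqref{eq:apx_nshapley_explicit}, and collect the coefficient of a fixed $\Delta_M$ with $S\subseteq M$ and $|M|-|S|=p$. Decomposing $M\setminus S=K\sqcup L$ with $|K|=j$ and $|L|=p-j$ gives $\binom{p}{j}$ choices and contributes $C_{a,j}\binom{p}{j}B_{p-j}$, with $j$ ranging over $a<j\le p$ (the inner range $|L|\le b-|K|$ forces $p\le b$, which also kills all $\Delta_M$ with $p>b$, matching the left-hand side, where those coefficients already vanish). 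Comparing with the coefficient $-B_p$ obtained above, the entire proposition reduces to showing
\[
\sum_{j=a+1}^{p}\binom{p}{j}C_{a,j}\,B_{p-j}=-B_p\qquad(0\le a<p).
\]

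The main obstacle is this Bernoulli identity; everything else is bookkeeping. As a sanity check, for $a=0$ we have $C_{0,j}=\tfrac{1}{j+1}$ and the absorption $\tfrac{1}{j+1}\binom{p}{j}=\tfrac{1}{p+1}\binom{p+1}{j+1}$ turns the sum into $\tfrac{1}{p+1}\sum_{k=0}^{p-1}\binom{p+1}{k}B_k=-B_p$ by the defining relation \eqref{eq:bernoulli_numbers}, exactly as in Proposition \ref{prop:efficiency}. For general $a$ I would substitute $C_{a,j}=\sum_{k=0}^a\binom{a}{k}\tfrac{B_k}{1+j-k}$ and exchange the order of summation; this produces a convolution of two Bernoulli numbers that I expect to collapse using the same generating-function machinery behind Lemma \ref{bernoulli_lemma_2}, and which, at the level of the Shapley-GAM, is equivalent to the composition identity $C_{a,c}=C_{b,c}+\sum_{j=a+1}^{b}C_{a,j}\binom{c}{j}C_{b-j,c-j}$ for $c>b$ that one obtains by instead matching coefficients of $f_T$ with $|T|>n$ through Theorem \ref{thm:thm_shapley_values_from_gam}. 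The special formula for $\Phi_i^1$ is then just the case $S=\{i\}$, $m=1$, where $a=0$ and $C_{0,k}=\tfrac{1}{k+1}$.
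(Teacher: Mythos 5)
Your reduction is the right idea, and its bookkeeping is correct: expanding everything in the $\Delta$-basis via \eqref{eq:apx_nshapley_explicit} shows that the proposition, with coefficients $\beta_{a,b}$, holds if and only if
\begin{equation*}
\sum_{j=a+1}^{p}\binom{p}{j}\,\beta_{a,j}\,B_{p-j}=-B_p \qquad (0\le a<p),
\end{equation*}
and your verification of the case $a=0$ is complete, so the displayed special case $m=1$ (the one used for the visualizations) is genuinely proven. The gap is the identification $\beta_{a,b}=C_{a,b}$ with the printed formula $C_{a,b}=\sum_{k=0}^{a}\binom{a}{k}\frac{B_k}{1+b-k}$: the resulting identity is false in general, so the step you deferred as ``the main obstacle'' is not merely missing, it cannot be proven. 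For $a=1$, $p=2$ the left-hand side is $\binom{2}{2}C_{1,2}B_0=\tfrac13-\tfrac14=\tfrac1{12}$, while the right-hand side is $-B_2=-\tfrac16$. Concretely, take $d=3$ and the unanimity value function ($v(x,\{1,2,3\})=1$, $v(x,S)=0$ otherwise, so $f_{123}=1$ and all other Shapley-GAM components vanish): then $\Phi_1^2=-\tfrac16$, $\Phi_1^3=0$, $\Phi_{123}^3=1$, so the true coefficient is $\beta_{1,2}=-\tfrac16=-B_2$, not $\tfrac1{12}$.

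The correct coefficients are the unique solution of your (triangular) system, namely $\beta_{a,c}=-\sum_{j=a+1}^{c}\binom{c}{j}\frac{B_j}{1+c-j}$, which by \eqref{eq:bernoulli_numbers} equals $\sum_{k=0}^{a}\binom{c}{k}\frac{B_k}{1+c-k}$ --- note $\binom{c}{k}$ rather than $\binom{a}{k}$; the two agree exactly when $a=0$, which is why your sanity check could not catch the discrepancy. Your instinct that $\beta$ should coincide with the Theorem \ref{thm:thm_shapley_values_from_gam} coefficients is in fact right, but it exposes a slip in the paper itself: in the last step of that proof, the number of ways to split $\tilde K=K\sqcup T$ with $|K|=k$ is $\binom{|\tilde K|}{k}$, not $\binom{n-|S|}{k}$, so $C_{n,m}$ should read $\sum_{k=0}^{n}\binom{m}{k}\frac{B_k}{1+m-k}$. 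Once this is fixed, your argument closes without any new Bernoulli identity: your system does not involve $n$ (or $d$) at all, so its unique solution is the same for every $n\ge m$; taking $n=d$, the proposition is exactly the (corrected) Theorem \ref{thm:thm_shapley_values_from_gam}, which pins down $\beta_{a,c}$, and the case of general $n$ follows. This is also where your route genuinely differs from the paper's one-line proof, which re-runs the pathway-counting recursion of Theorem \ref{thm:thm_shapley_values_from_gam} with $d$ replaced by $n$; your $\Delta$-basis matching plus uniqueness avoids recounting paths altogether, but only after the coefficients are repaired.
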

\begin{proof}
The proposition follows from the counting argument used in the proof of Theorem \ref{thm:thm_shapley_values_from_gam}.
\end{proof}

\section{Visualizing $n$-Shapley Values}
\label{apx:visualization}

\begin{figure}[t]
     \centering
     \begin{subfigure}[b]{0.18\textwidth}
         \centering
         \includegraphics[width=\textwidth]{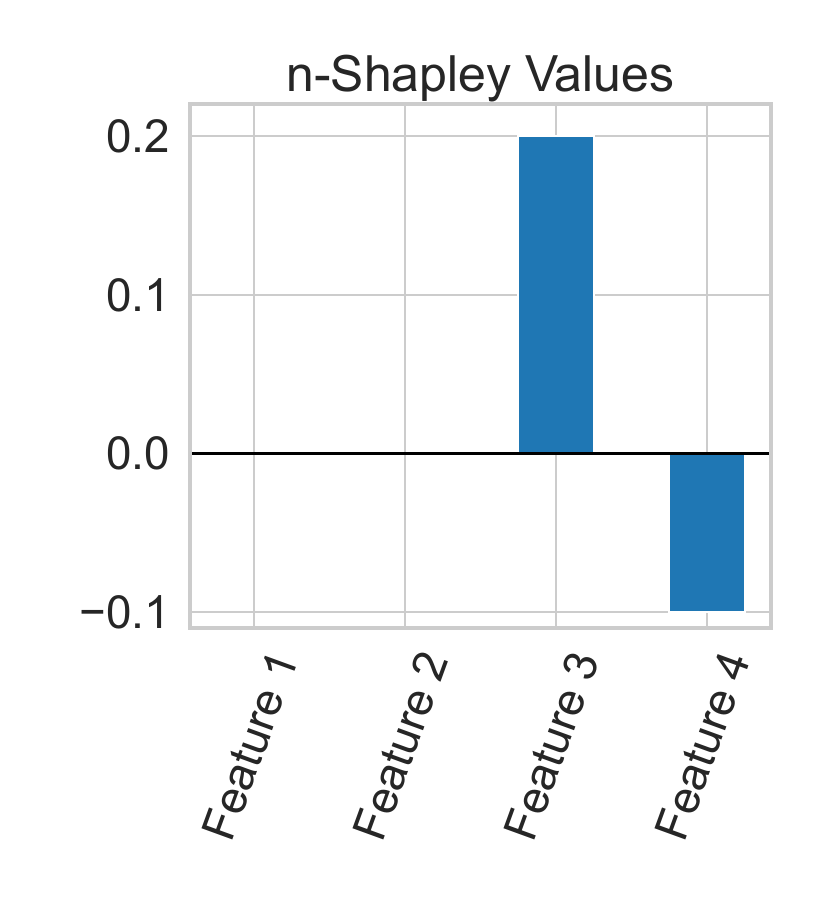}
         \caption{Example 1}
         \label{fig:e1}
     \end{subfigure}
     \hfill
     \begin{subfigure}[b]{0.18\textwidth}
         \centering     
         \includegraphics[width=\textwidth]{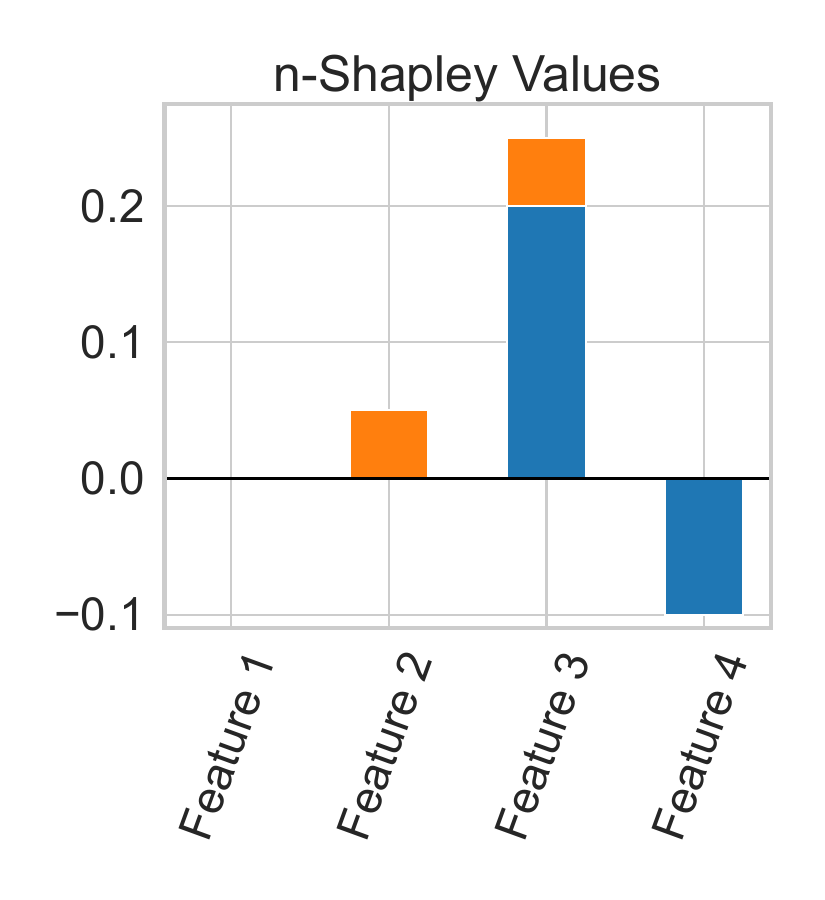}
        \caption{Example 2}
         \label{fig:e2}
     \end{subfigure}
     \hfill     
     \begin{subfigure}[b]{0.18\textwidth}
         \centering
         \includegraphics[width=\textwidth]{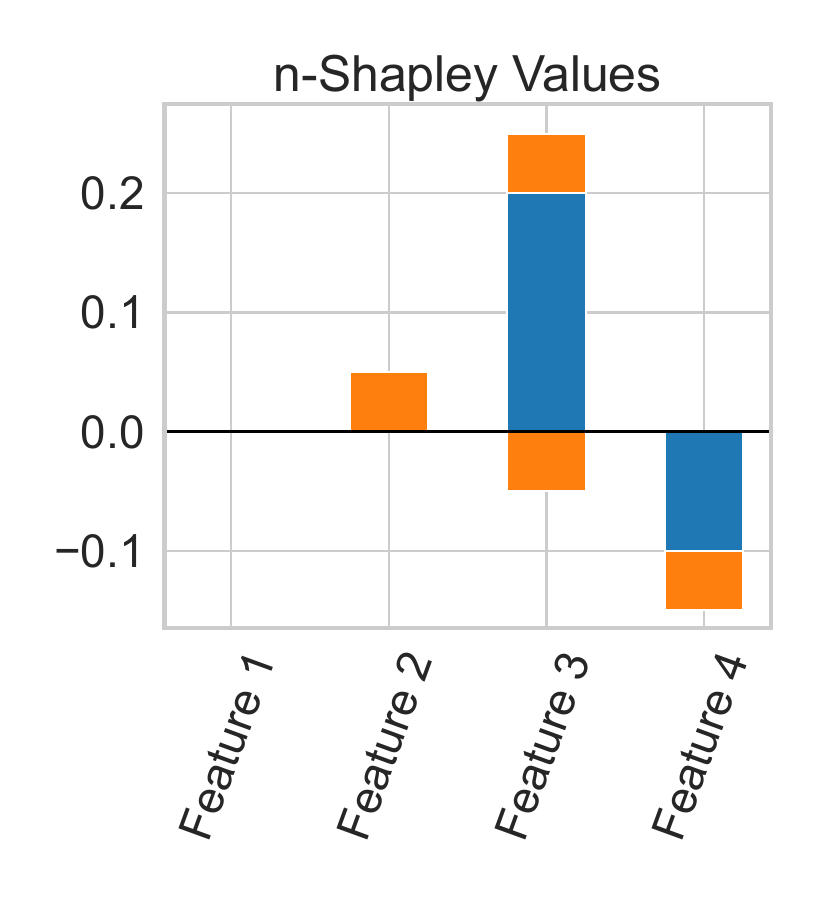}
        \caption{Example 3}
         \label{fig:e3}
     \end{subfigure}
     \hfill     \begin{subfigure}[b]{0.18\textwidth}
         \centering
         \includegraphics[width=\textwidth]{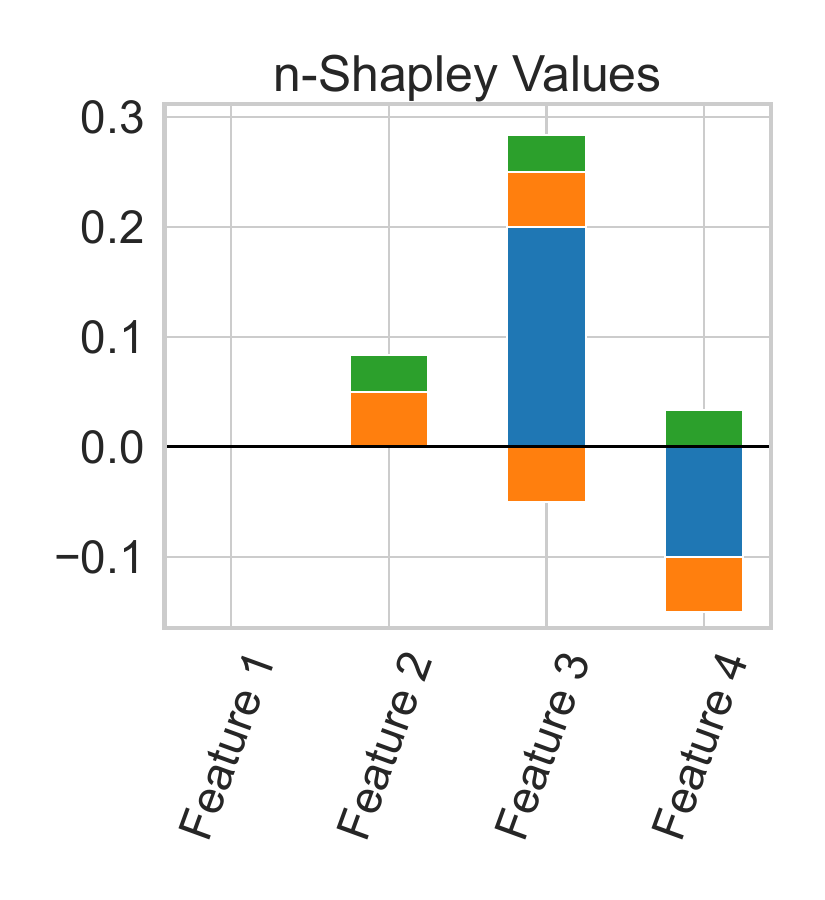}
        \caption{Example 4}
         \label{fig:e4}
     \end{subfigure}
     \hfill     \begin{subfigure}[b]{0.18\textwidth}
         \centering
         \includegraphics[width=\textwidth]{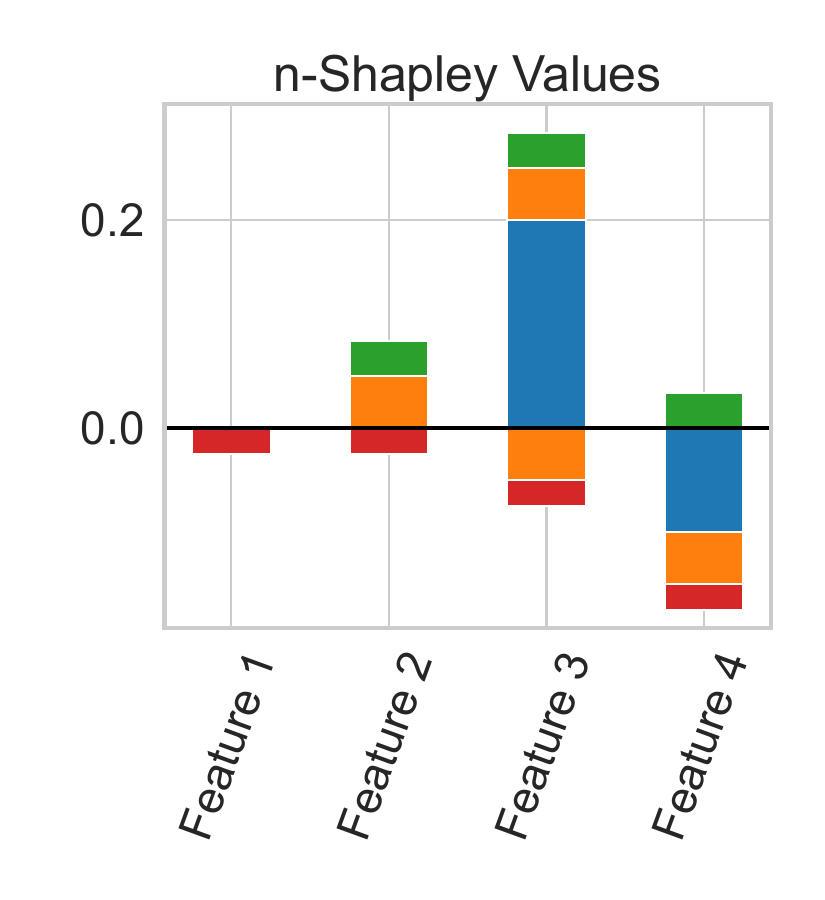}
        \caption{Example 5}
         \label{fig:e5}
     \end{subfigure}
     \hfill
     \caption{Examples that illustrate the proposed visualization technique for $n$-Shapley Values.}
\end{figure}

\begin{figure}[t]
    \centering
    \includegraphics[width=0.275\textwidth]{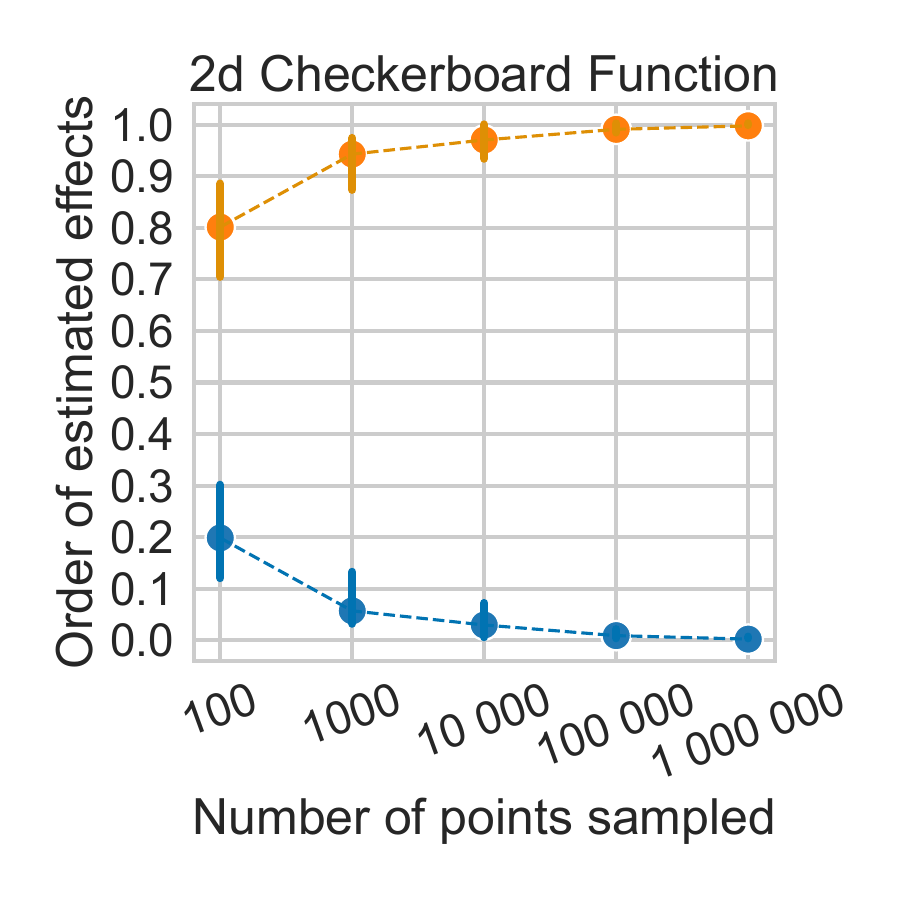}
    \includegraphics[width=0.275\textwidth]{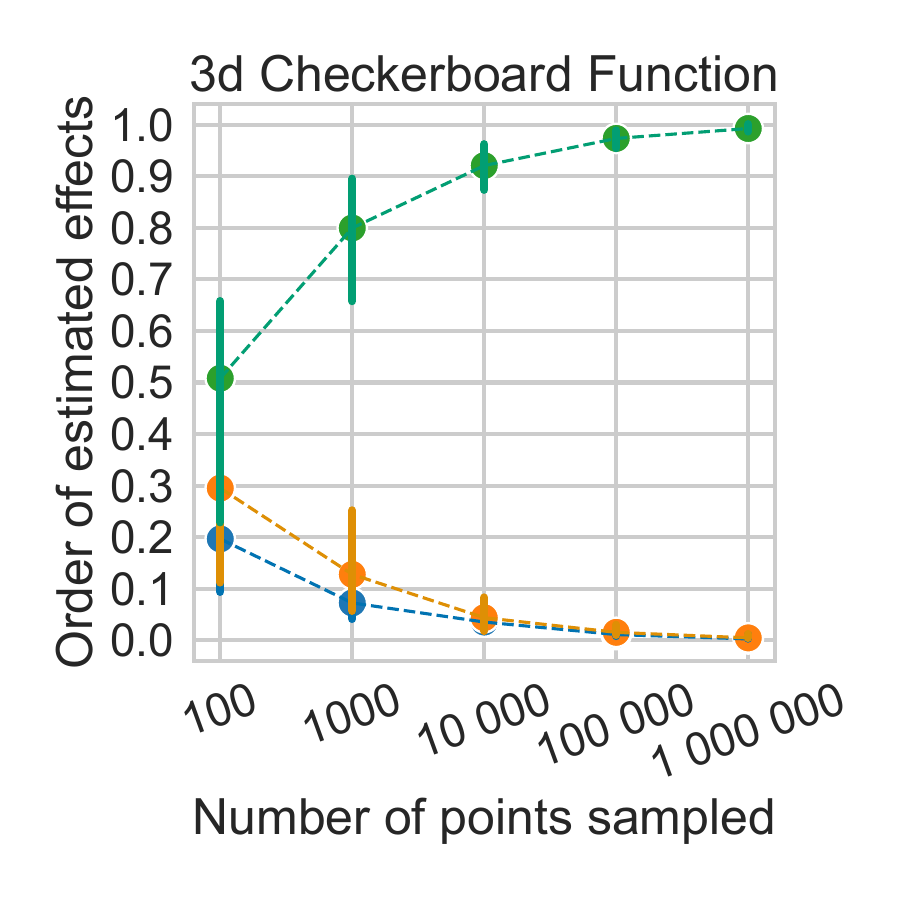}
    \includegraphics[width=0.275\textwidth]{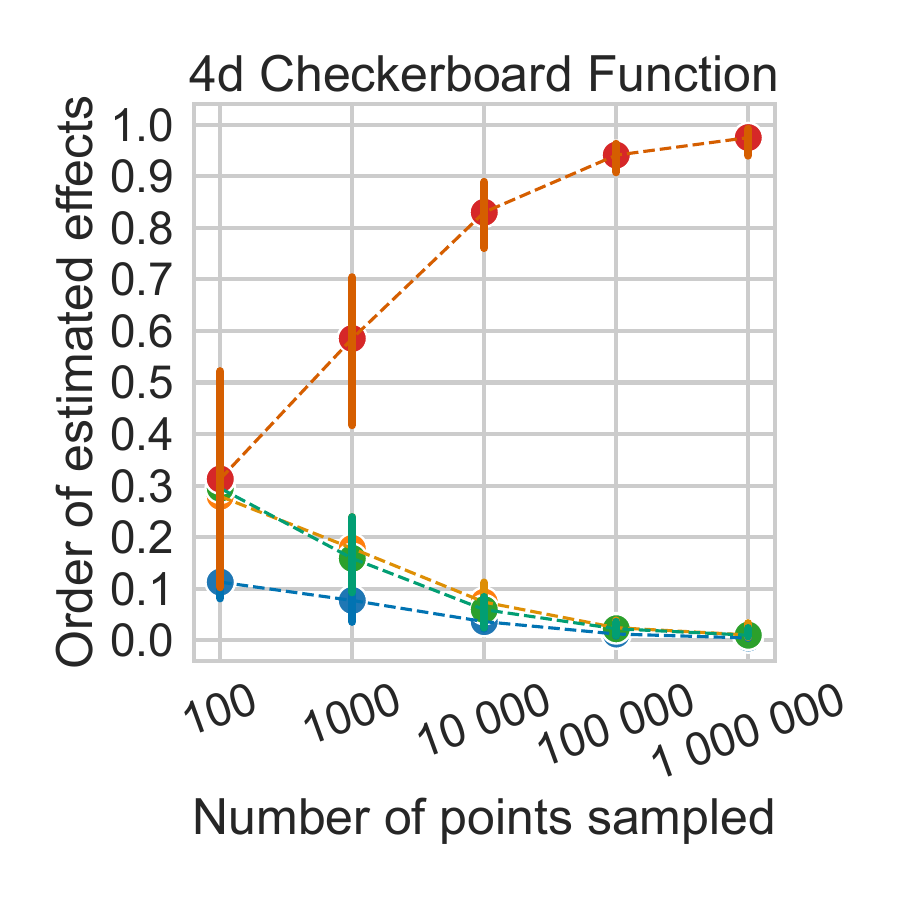}
    \includegraphics[width=0.275\textwidth]{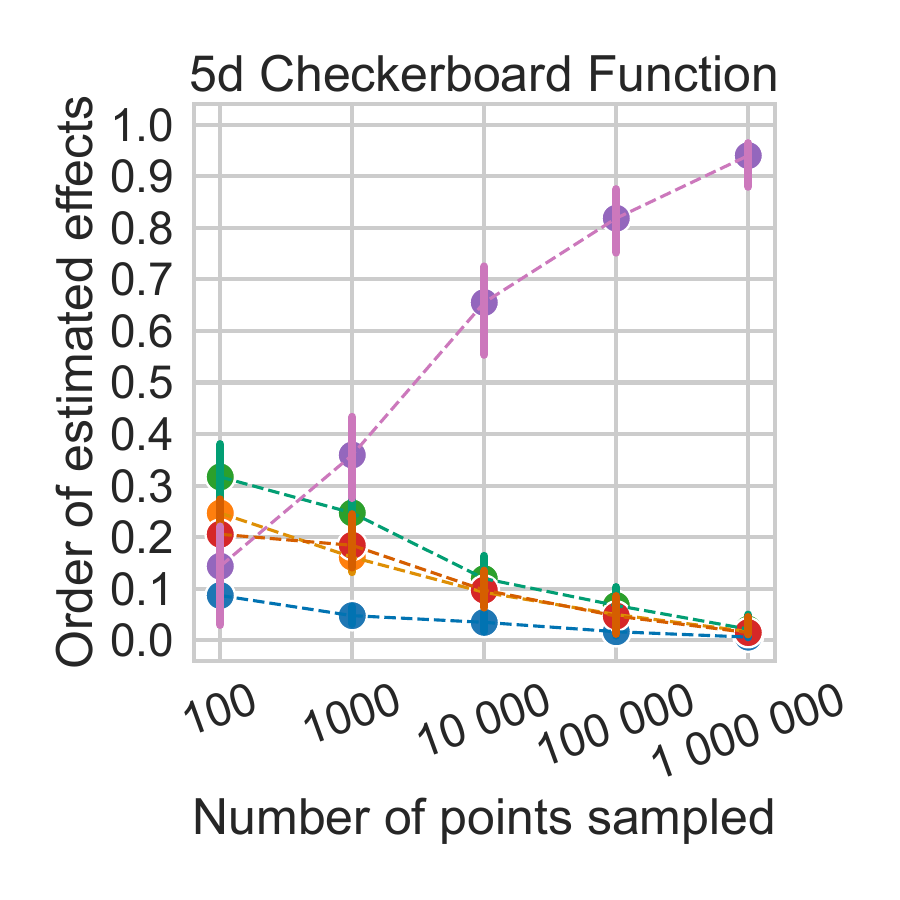}
    \includegraphics[width=0.275\textwidth]{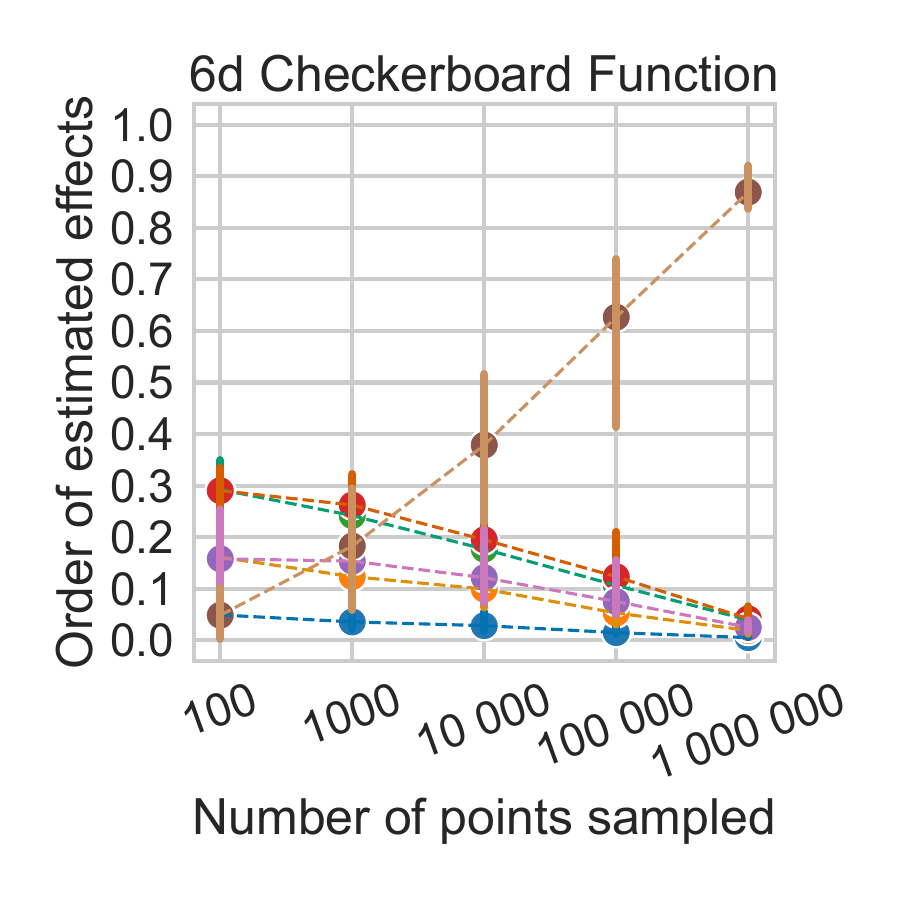}
    \includegraphics[width=0.275\textwidth]{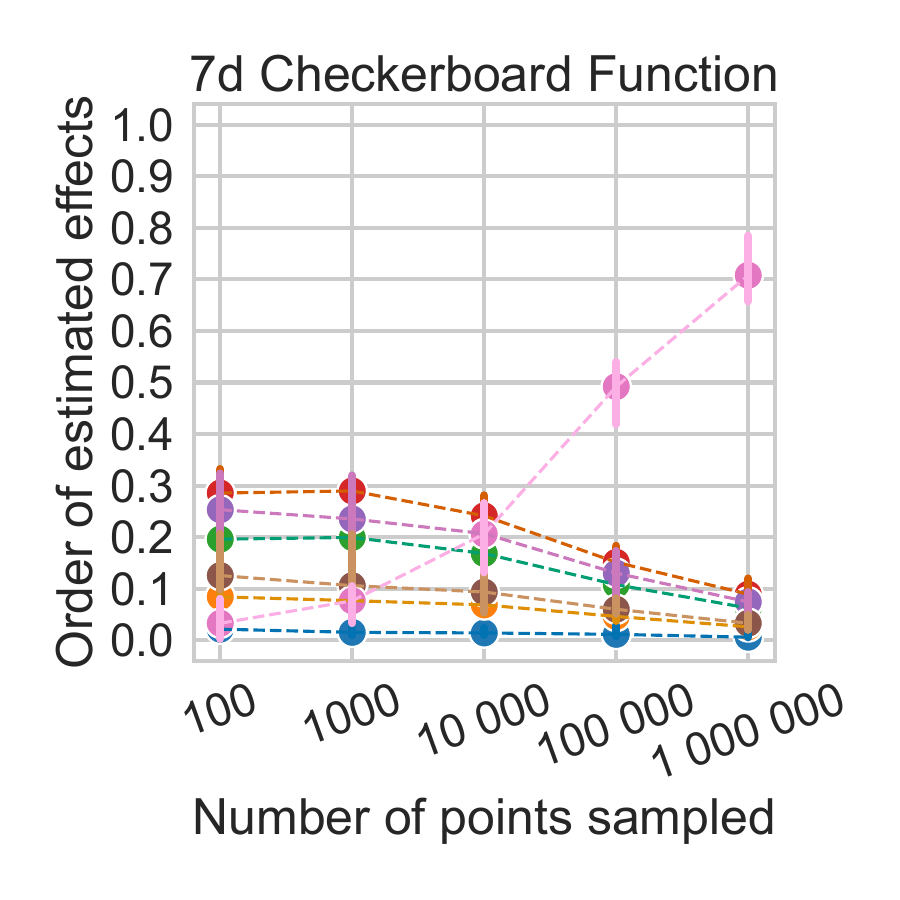}
    \includegraphics[width=0.275\textwidth]{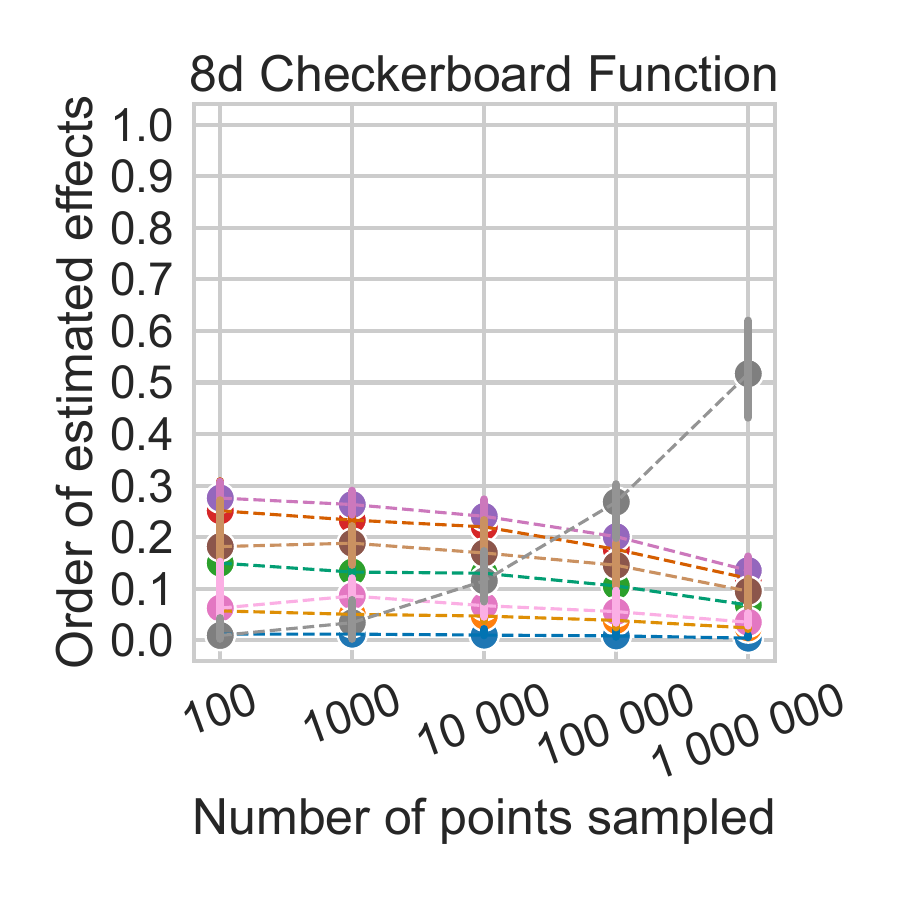}
    \includegraphics[width=0.275\textwidth]{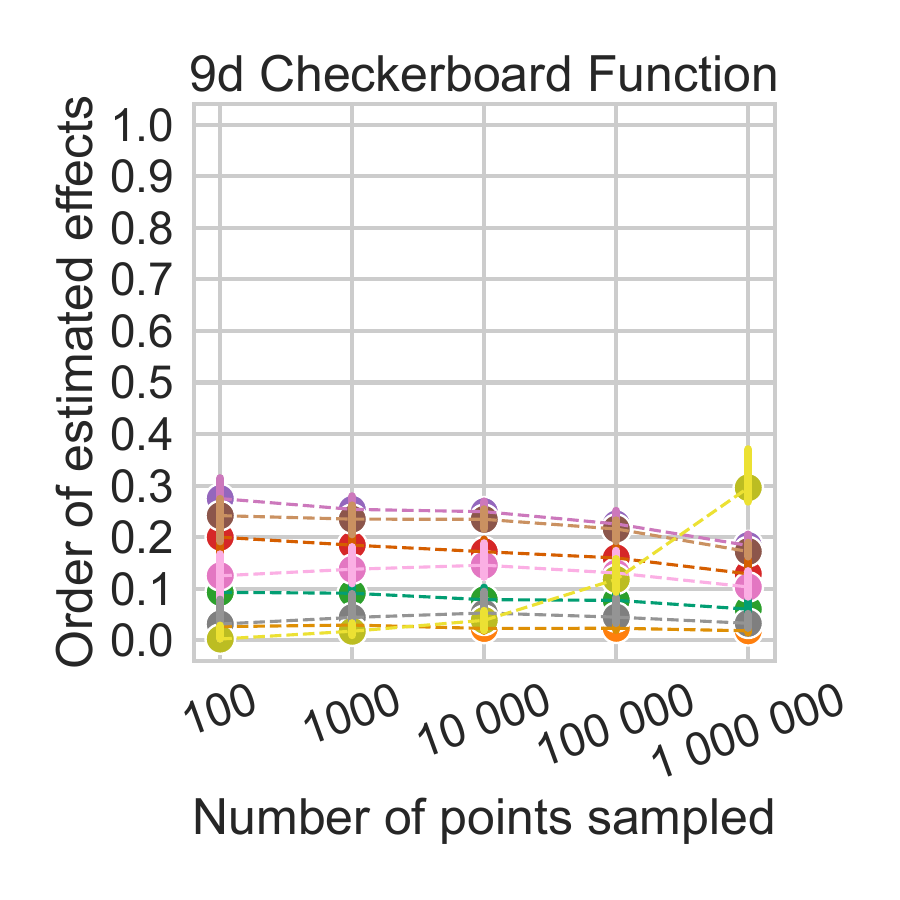}
    \includegraphics[width=0.275\textwidth]{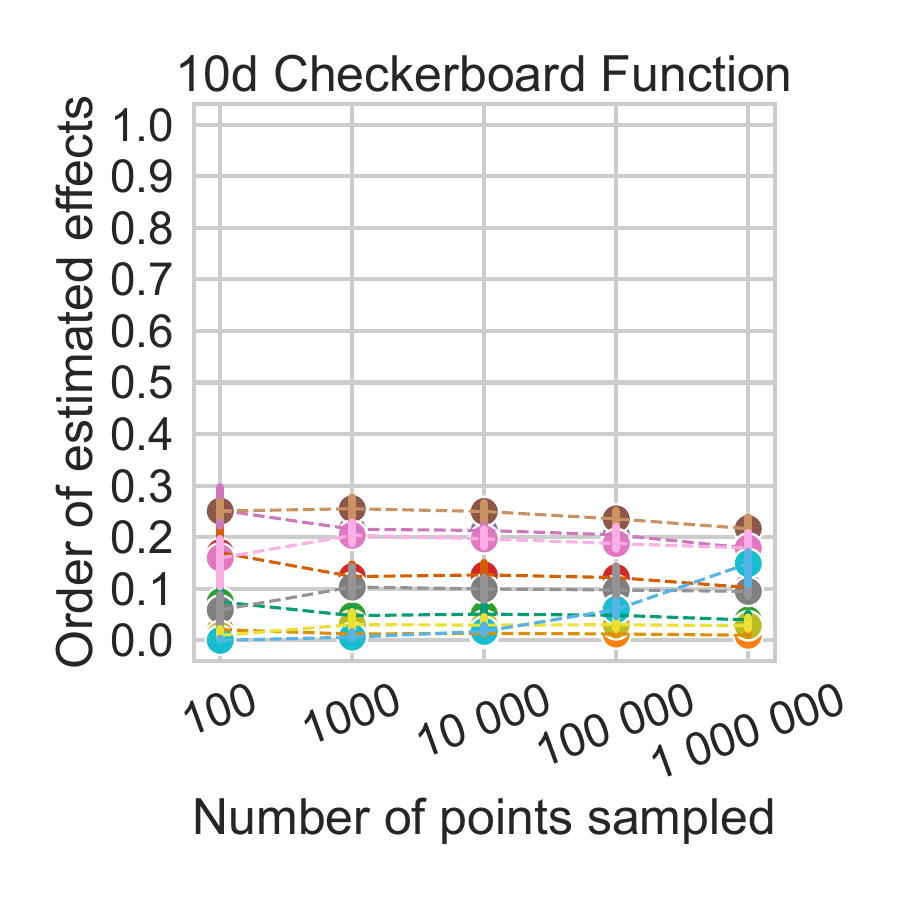}
    \includegraphics[width=0.75\textwidth]{figures/main_paper/legend.pdf}
    \caption{Estimating higher-order variable interactions requires precise evaluations of the value function. A simple way to study this is by estimating the $k$-dimensional checkerboard function \eqref{eq:checkerboard_fn}. Compare Figure \ref{fig:recovery} in the main paper.}
    \label{fig:apx_estiation}
\end{figure}

Due to the large number of terms involved in $n$-Shapley Values of higher order, visualizing these explanations is difficult.
However, Proposition \ref{prop:n_shapley_recursion} (which is really a variant of Theorem \ref{thm:thm_shapley_values_from_gam}) states that higher-order variable interactions in $n$-Shapley Values are related to the original Shapley Values via a simple lump-sum formula. This gives rise to the idea of simply visualizing, for each feature, the respective components of the sum. 

To illustrate this idea, let us consider a simple example. Let us begin with four different features and the usual Shapley Values. Say the first two features have attribution zero, the third feature has attribution $0.2$, and the fourth feature has attribution $-0.1$. These Shapley Values can be visualized as usual, depicted in Figure \ref{fig:e1}. Now, let us add a second-order interaction effect, say $\Phi_{2,3}^2=0.1$. Because this interaction effect would ultimately be added to the attributions of feature 2 and feature 3 with a factor of $\frac{1}{2}$, let us simply add two corresponding bars to the attributions of these features, with the color indicating that it is a second-order effect. From the resulting Figure \ref{fig:e2}, it can then be seen that we have two main effects and a single positive interaction effect between features 2 and 3. If there were another interaction effect, say $\Phi_{3,4}^2=-0.1$, we would proceed in the same way, taking care of the sign. From the resulting Figure \ref{fig:e3}, it can be seen that there are two main effects and a number of second-order interactions. With higher-order interactions we proceed accordingly, as illustrated for $\Phi_{2,3,4}^3=0.1$ (Figure \ref{fig:e4}) and $\Phi_{1,2,3,4}^4=-0.1$ (Figure \ref{fig:e4}).

Note that while this form of visualization faithfully depicts the relative magnitude of the different variable interactions, it is in general not possible to tell from the figures which variables interact with each other, for example when there are a number of different second-order effects.

\section{Estimating $n$-Shapley Values}
\label{apx:estimation}

Here we collect some additional details regarding the estimation of $n$-Shapley Values. We note that the discussion here is not exhaustive. Our objective is to (1) raise awareness for the fact that computing $n$-Shapley Values incurs an estimation problem, and (2) ensure that the results presented in the main paper are precisely estimated and not statistical artifacts. 

Figure \ref{fig:apx_estiation} depicts the result of estimating the $k$-dimensional checkerboard function \eqref{eq:checkerboard_fn} for all values $k=2,\dots,10$ (compare Section \ref{sec:estimating} in the main paper). As already discussed in the main paper, we can see from the figure that estimation becomes gradually harder as we increase the order of interaction. 

In Figure \ref{fig:apx_estiation_visualization}, we assess the degree up to which our visualizations are effected by the presence of spurious interaction effect of intermediate order, as observed when estimating a checkerboard function with too few samples. The figure visualizes the Shapley-GAM decomposition of a kNN classifier on the Folktables Travel data set, estimated with 500, 5000 and 133549 samples per evaluation of the value function, respectively. By comparing the left and middle part of Figure \ref{fig:apx_estiation_visualization} (estimation with 500 and 5000 samples, respectively), we see that 500 samples are to few and result in the presence of spurious interaction effects, for example of of order 4 and 5. This can be seen from the fact that some of these effects vanish as we increase the number of samples. By comparing the middle and right part of Figure \ref{fig:apx_estiation_visualization} (estimation with 5000 and 133549 samples, respectively), we see that estimation with 5000 samples is already quite precise for this kNN classifier. This can be seen from the fact that significantly increasing the number of samples does not have any significant effect on the visualization.\footnote{This could of course be discussed much more rigorously.}

Table \ref{tab:estimation_table} depicts the individual terms that underlie the visualization in Figure \ref{fig:apx_estiation_visualization}. From Table \ref{tab:estimation_table}, we see that main effects are precisely estimated even with 500 samples. However, many relatively small higher-order coefficients are not very precisely estimated even for $N=5000$. Note that the latter point is not in contrast to the fact that Figure \ref{fig:apx_estiation_visualization} is precisely estimated for $N=5000$. Figure \ref{fig:apx_estiation_visualization} depicts summary statistics that are more precisely estimated than the individual components. 

\begin{figure}[t]
    \centering
    \includegraphics[width=0.28\textwidth]{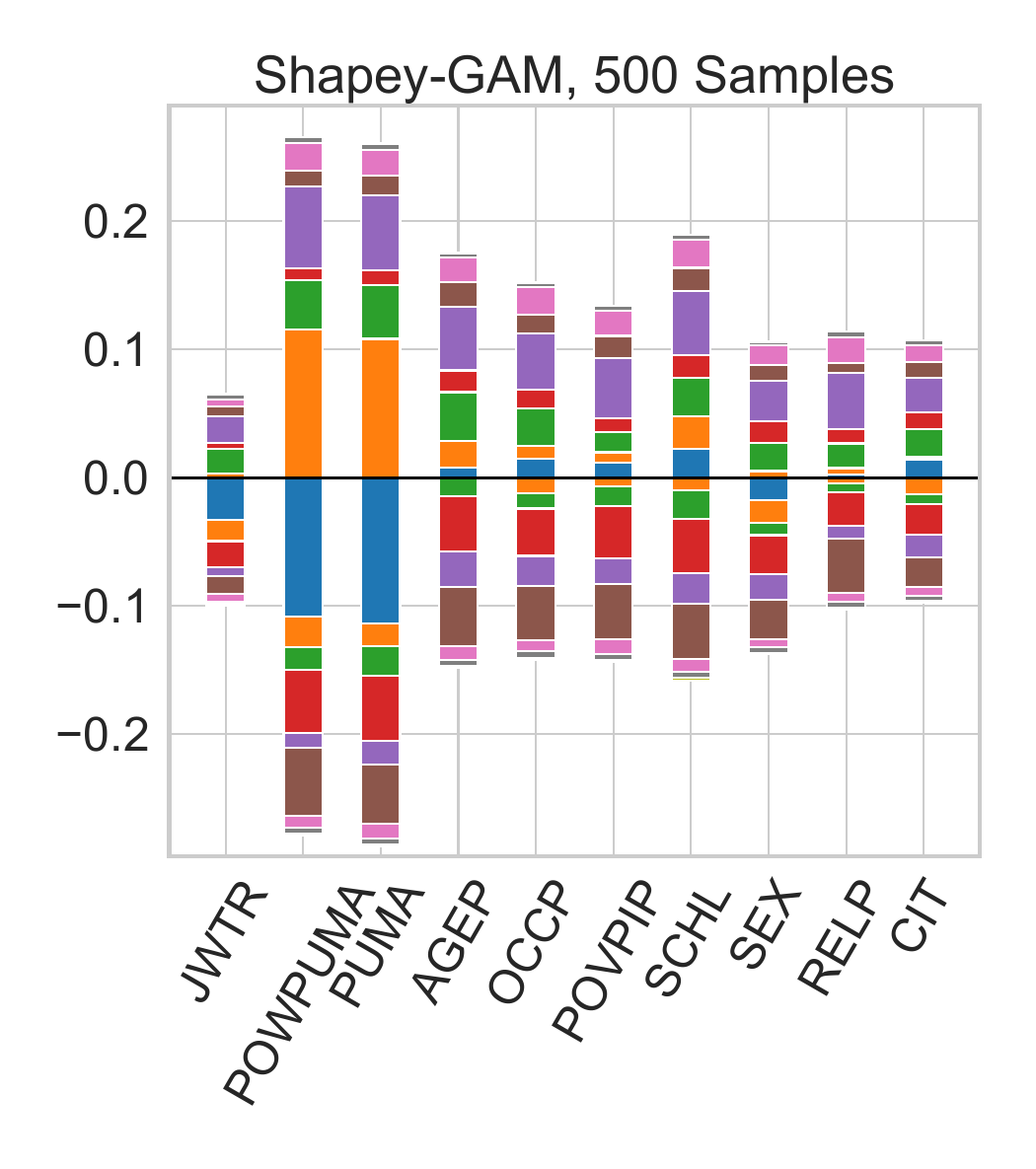}
    \includegraphics[width=0.28\textwidth]{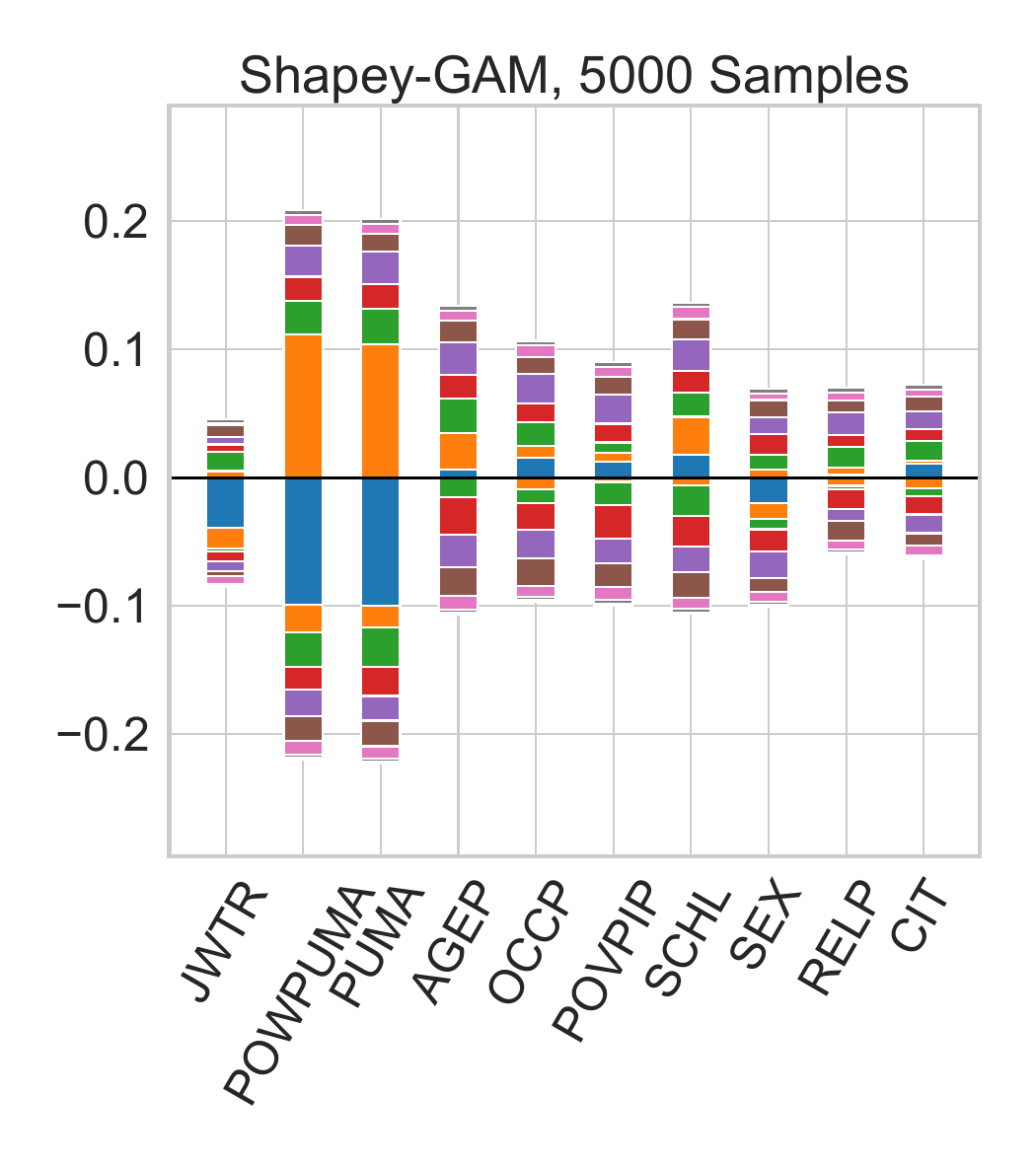}
    \includegraphics[width=0.28\textwidth]{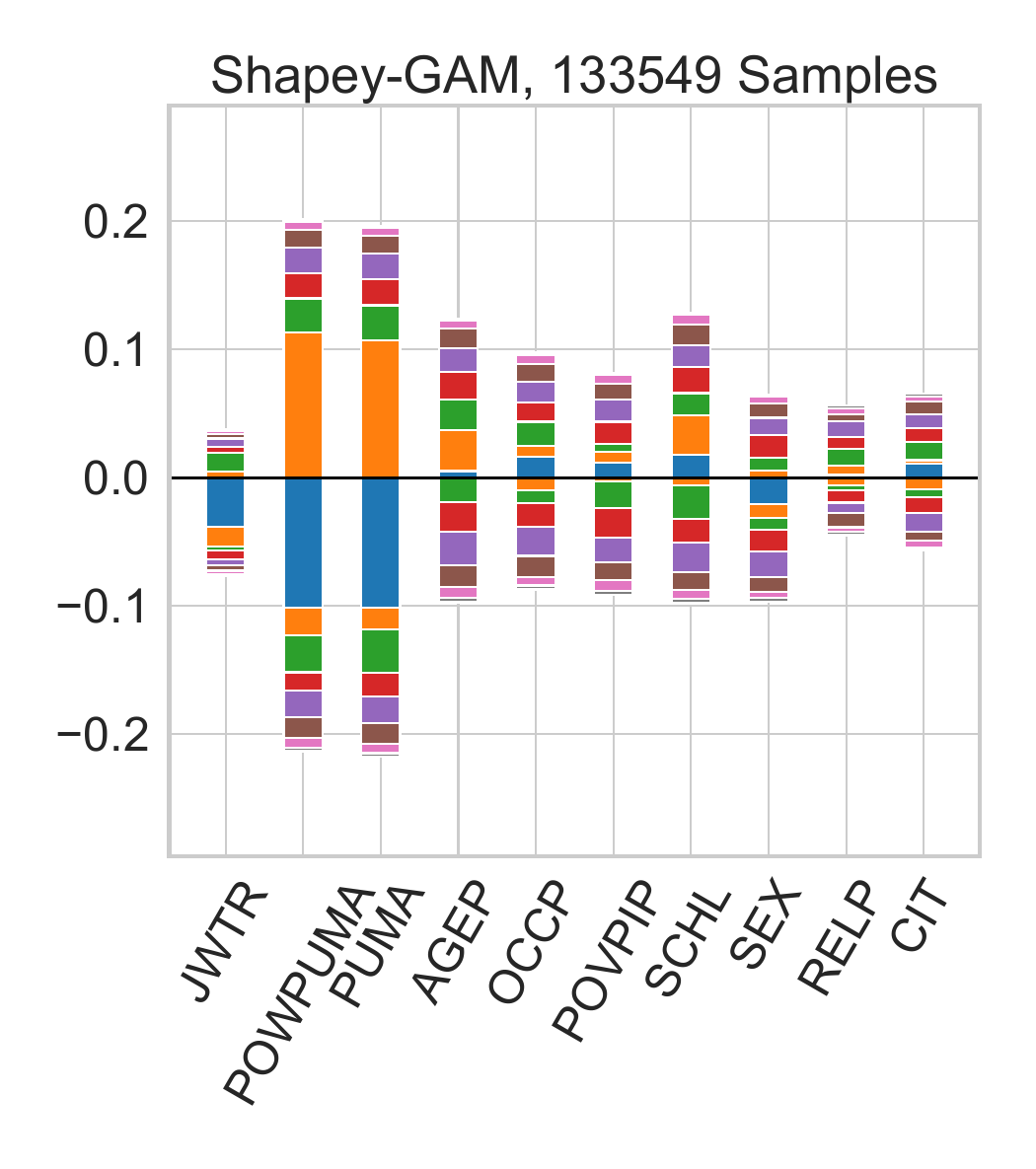}
    \includegraphics[width=0.8\textwidth]{figures/main_paper/legend.pdf}
    \caption{Estimating higher-order interactions with too few samples can result in spurious interaction effects of intermediate order. These effects are also visible in our visualizations. \textbf{Left:} Estimation with 500 samples per evaluation of the value function results in spurious interaction effects. \textbf{Middle:} This can be seen from the fact that parts of the estimated effects vanish if we increase the number of samples to 5000 per evaluation of the value function. \textbf{Right:} Using all 133549 observations in the training data per evaluation of the value function, we get almost the same visualization as for 5000 samples. The function in this example is a kNN classifier and the data set is the Folktables Travel data set.}
    \label{fig:apx_estiation_visualization}
\end{figure}

\section{The Statistical Independence Assumption for Observational SHAP is Necessary}
\label{apx:observational_shap}

In this section we give a simple example to demonstrate that the assumption of independent random variables for the observational SHAP value function in Theorem \ref{thm:thm_recovery} is indeed necessary. 

Consider the GAM of order 1
\begin{equation*}
f(x_1,x_2)=x_1+x_2.
\end{equation*}
Assume that $x_1$ and $x_2$ are correlated normal random variables
\begin{equation*}
    \begin{pmatrix}
  x_1 \\
  x_2
 \end{pmatrix}\sim\mathcal{N}\left(\begin{pmatrix}
  0 \\
  0
 \end{pmatrix},\begin{pmatrix}
  1,\,\, \rho \\
  \rho,\,\, 1
 \end{pmatrix}\right)
\end{equation*}
with $0\leq\rho\leq 1$. We have
\begin{equation*}
    \mathbb{E}[x_2|x_1]=\rho x_1.
\end{equation*}
A simple calculation shows that the Shapley-GAM of observational SHAP is given by
\begin{equation*}
    f_\emptyset=0,\quad f_1(x_1)=(1+\rho)x_1,\quad f_2(x_2)=(1+\rho)x_2,\quad f(x_1,x_2)=-\rho(x_1+x_2).
\end{equation*}
According to Theorem \ref{thm:thm_shapley_values_from_gam}, the observational SHAP values are then given by
\begin{equation*}
    \Phi_1=(1+\frac{\rho}{2})x_1-\frac{\rho}{2} x_2,\qquad \Phi_2=(1+\frac{\rho}{2})x_2-\frac{\rho}{2} x_1.
\end{equation*}
Clearly, recovery does not hold: Despite the fact that the underlying function is a GAM of order 1, the Shapley-GAM is a GAM of order 2. The Shapley Values also depend on both coordinates -- hence they are not well-defined functions of the individual coordinates. 

In contrast, the Shapley-GAM of the interventional SHAP value function is given by
\begin{equation*}
    f_\emptyset=0,\quad f_1(x_1)=x_1,\quad f_2(x_2)=x_2.
\end{equation*}
Moreover, the interventional SHAP values are given by
\begin{equation*}
    \Phi_1=x_1,\qquad \Phi_2=x_2,
\end{equation*}
that is recovery holds with the interventional SHAP value function (as guaranteed by Theorem \ref{thm:thm_recovery}).

\section{Proof of Theorem \ref{thm:shapley_gam}}

\begin{proof}[Proof of Theorem \ref{thm:shapley_gam}]
We are going to show that
\begin{equation}
\label{eq:proof_thm3_0}
    \Phi_S^d(x)=\sum_{L\subset S}(-1)^{|S|-|L|}v(x_{L},L).
\end{equation}
Note that the RHS evaluates the value function $v$ only for sets $L\subset S$. From the assumption that the value function is subset-compliant, it follows that the RHS is a well-defined function of $x_S$. According to Proposition \ref{prop:efficiency} (efficiency), the $d$-Shapley Values sum to $v(x)-v(\emptyset)$ which implies the Theorem.

To show \eqref{eq:proof_thm3_0}, we consider the non-recursive definition of $n$-Shapley Values \ref{eq:apx_nshapley_explicit} and then substitute the definition of $\Delta_S(x)$ from Definition \ref{def:n_shapley_values}.
\begin{equation}
\label{eq:proof_thm3_1}
\begin{split}
    \Phi_S^d(x)&=\sum_{k=0}^{d-|S|}\sum_{K\subset [d]\setminus S,\,|K|=k} B_k\,\Delta_{S\cup K}(x)  \\
               &=\sum_{k=0}^{d-|S|}\sum_{K\subset [d]\setminus S,\,|K|=k} B_k\sum_{T\subset[d]\setminus (S\cup K)}\frac{(d-|T|-|S|-|K|)!|T|!}{(d-|S|-|K|+1)!}\sum_{L\subset S\cup K}(-1)^{|S|+|K|-|L|}v(x,\,L\cup T).  \\[6pt]
               &=\sum_{K\subset [d]\setminus S}\,\,\,\sum_{T\subset[d]\setminus (S\cup K)}B_{|K|}\,\frac{(d-|T|-|S|-|K|)!|T|!}{(d-|S|-|K|+1)!}\sum_{L\subset S\cup K}(-1)^{|S|+|K|-|L|}v(x,\,L\cup T).  \\
\end{split}
\end{equation}
Where the last equation follows from the realization that we are summing over all possible subsets of $[d]\setminus S$.

In equation \eqref{eq:proof_thm3_1}, we are summing over the value of the same sets multiple times. Let us fix a set $M=L\cup T$ and count how often it occurs in the sum. 
First note that $v(x,M)$ occurs exactly once for every set $K$, namely by choosing $T=M\setminus(S\cup K)$ and $L=M\cap (S\cup K)$. Since the coefficients do not only depend on the size of $K$, but also on $|T|$ and $|L|$, let us partition the set $K=K_1\cup K_2=\{K\cap M\}\cup\{K \setminus M\}$. Let $n_1=|M\setminus S|$ and $n_2=|[d]\setminus(S\cup M)|$ denote the maximum sizes of both partitions. With this counting argument, we arrive at
\begin{equation}
\label{eq:proof_thm3_2}
     (-1)^{|S|-|M|}\,\sum_{K_1\subset M\setminus S}\,\,\sum_{K_2\subset [d]\setminus (S\cup M)}B_{|K_1|+|K_2|}\,\frac{(n_2-|K_2|)!(n_1-|K_1|)!}{(n_1+n_2-|K_1|-|K_2|+1)!}(-1)^{|K_2|}
\end{equation}
occurrences of the term $v(x,M)$. Notice that equation \eqref{eq:proof_thm3_2} is equal to
\begin{equation}
\label{eq:proof_thm3_3}
(-1)^{|S|-|M|}\sum_{k_1=0}^{n_1}\sum_{k_2=0}^{n_2}\binom{n_1}{k_1}\binom{n_2}{k_2}\frac{(n_2-k_2)!(n_1-k1)!}{(n_1+n_2-k_1-k_2+1)!}(-1)^{k_2}B_{k_1+k_2}
\end{equation}
The desired result now follows from the properties of the Bernoulli numbers. In particular, since $M\subset S \iff n_1=0$, we see from Lemma \ref{bernoulli_lemma_2} that \eqref{eq:proof_thm3_3} equals $(-1)^{|S|-|M|}$ if $M\subset S$ and $0$ otherwise. Comparing the terms for all possible sets $M\subset [d]$, we see that  \eqref{eq:proof_thm3_1} equals \eqref{eq:proof_thm3_0}.

Note that if we fix the point $x$, then the Shapley-GAM at $x$ is equivalent to the Moebious transform of the measure $v(x,\cdot)$. From this perspective, Theorem \ref{thm:shapley_gam} can be seen as an application of Theorem 2 in \citet{grabisch1997k}.
\end{proof}

\section{Proof of Theorem \ref{thm:thm_shapley_values_from_gam}}

\begin{proof}[Proof of Theorem \ref{thm:thm_shapley_values_from_gam}]
According to Theorem \ref{thm:shapley_gam}, the $d$-Shapley Values can be written as
\begin{equation}
    \Phi_S^d(x)=f_S(x)
\end{equation}
where $f_S(x)$
are the component functions of the Shapley-GAM. Hence, the $d$-Shapley Values are a linear combination of the component functions of the Shapley-GAM. 
From the recursive definition of the $n$-Shapley Values, we see that
\begin{equation}
\label{eq:proof_thm4_eq_1}
\Phi_S^{n}(x)=\Phi_S^{n+1}(x)-B_{1+n-|S|}\sum_{K\subset[d]\setminus S,|K|+|S|=n+1}\Phi_{S\cup K}^{n+1}(x)
\end{equation}
that is the $n$-Shapley Values are a linear combination of the terms involved in the $n+1$-Shapley Values. By induction, we see that the $n$-Shapley Values are linear combinations of the component functions of the Shapley-GAM.

It remains to determine the coefficients $C_{n,m}$. We present a counting argument that is based on the recurrence relation \eqref{eq:proof_thm4_eq_1}. In this counting argument, we first determine the coefficients $D_{n,m}$ where the first index corresponds to the distance between $|S|$ and the order of the Shapley Values, and the second index corresponds to the different between the size of the interaction effect and the order of the Shapley Values. Suppose that we are computing $n$-Shapley Values. If we use equation \eqref{eq:proof_thm4_eq_1} to proceed recursively from $d$-Shapley Values to $n$-Shapley Values, then the first time that the component function $f_{S\cup K}$ is being added to $\Phi_S^{m}$ is during the computation of the $(|S|+|K|-1)$-Shapley Values. According to equation \eqref{eq:proof_thm4_eq_1}, the linear coefficient will simply be
$D_{|K|-1,1}=-B_{|K|}$. 
The second time that the component function $f_{S\cup K}$ is being added to $\Phi_S^{m}$ is during the computation of the $(|S|+|K|-2)$-Shapley Values. This is because we have previously added $-B_1 f_{S\cup K}$ to all the terms of order $|S|+|K|-1$ that are a subset of $S\cup K$. There are $\binom{|K|}{1}$ such terms, and we are now adding all of them to $f_{S}$, using the coefficient $-B_{|K|-1}$. This means that we arrive at a total coefficient of
\begin{equation}
     D_{|K|-2,2}=-B_{|K|}+B_{|K|-1}\binom{|K|}{1}B_1.
\end{equation}
By a similar argument we arrive at a coefficient of 
\begin{equation}
     D_{|K|-3,3}=-B_{|K|}+B_{|K|-1}\binom{|K|}{1}B_1-B_{|K|-2}\binom{|K|}{2}B_2-B_{|K|-2}\binom{|K|}{2}B_1 \binom{2}{1}B_1.
\end{equation}
for the $(|S|+|K|-3)$-Shapley Values.
In general, that is when we compute $n$-Shapley Values, the component function $f_{S\cup K}$ is being added to $\Phi_S^{n}$ once for every possible pathway that goes from a set of order $n+1$ to the set $S\cup K$ by successively adding different numbers of elements. For $k\geq 1$, let 
\begin{equation}
    P_k=\left\{(p_1,\dots,p_k)\in\mathbb{N}_{\geq 0}^k\,\,\bigg|\,\,\sum_{i=1}^k p_i=k\quad\text{and}\quad p_{i}=0\implies (p_{j}=0\,\forall j>i)\right\}
\end{equation}
be the set of pathways of length $k$. This means that we have $P_1 = \big\{(1)\big\}$,
\begin{equation}
\begin{split}
    P_2 &= \big\{(2,0),(1,1)\big\},\\[2pt]
    P_3 &= \big\{(3,0,0),(2,1,0),(1,2,0),(1,1,1)\big\},\\[2pt]
    P_4 &= \big\{(4,0,0,0),(3,1,0,0),(2,2,0,0),(2,1,1,0),\\
    &\quad\quad(1,3,0,0),(1,2,1,0),(1,1,2,0),(1,1,1,1)\big\}\\
\end{split}
\end{equation}
and so on. 
By accounting for the coefficients $B_k$ and the signs along each path, the coefficients can be written as
\begin{equation}
\label{eq:beta_k_m}
    D_{n,m}=\sum_{(p_1,\dots,p_{m})\in P_{m}}(-1)^{\sum_{i=1}^m \text{sign}(p_i)}\binom{n+m}{n+p_1}B_{n+p_1}\prod_{i=2}^m B_{p_i}\binom{m-\sum_{j=1}^{i-1}p_j}{p_i}
\end{equation}
From this, we derive the special case
\begin{equation}
\begin{split}
    D_{0,m}&=\sum_{(p_1,\dots,p_{m})\in P_{m}}(-1)^{\sum_{i=1}^m \text{sign}(p_i)}\binom{m}{i_1}B_{p_1}\prod_{i=2}^m B_{p_i}\binom{m-\sum_{j=1}^{i-1}p_j}{p_i}\\
                &=\sum_{(p_1,\dots,p_{m})\in P_{m}}(-1)^{\sum_{i=1}^m \text{sign}(p_i)}\prod_{i=1}^m B_{p_i}\binom{m-\sum_{j=1}^{i-1}p_j}{p_i}\\
               &=-B_{m}-\sum_{p_1=1}^{m-1}a_{p_1}\binom{m}{p_1}\sum_{(\hat p_1,\dots,\hat p_{m-p_1})\in P_{m-p_1}}\,\,(-1)^{\sum_{i=1}^{m-p_1} \text{sign}(p_i)}\prod_{j=1}^{m-p_1}B_{\hat p_j}\binom{m-i_1-\sum_{s=1}^{j-1}\hat p_s}{\hat p_j}\\
               &=-B_{m}- \sum_{p_1=1}^{m-1}a_{p_1}\binom{m}{p_1}\beta_{0,m-p_1}\\
               &=-B_m-\sum_{p_1=1}^{m-1}a_{p_1}\binom{m}{p_1}\frac{1}{m-p_1+1}\\
               &=-\sum_{k=1}^{m}\frac{B_k}{m-k+1}\binom{m}{k}\\
               &=\frac{1}{m+1}\\
\end{split}
\end{equation}
where the last equality is due to Lemma \ref{bernoulli_lemma_1}. Now, this implies that
\begin{equation}
\label{eq:proof_thm_4_3}
    \Delta_S(x) = \Phi_S^{|S|}(x)
                = f_S(x)+\sum_{K\subset[d]\setminus S,\,\, |K|\geq 1}D_{0,|K|}\,f_{S\cup K}(x)
                =\sum_{K\subset[d]\setminus S} \frac{1}{1+|K|}f_{S\cup K}(x)
\end{equation}
which is a version of Theorem 1 in \citet{grabisch1997k}. Using \eqref{eq:proof_thm_4_3} and the explicit formula for $n$-Shapley Values \eqref{eq:apx_nshapley_explicit}, we get
\begin{equation}
\begin{split}
      \Phi_S^n(x)&=\sum_{k=0}^{n-|S|}\sum_{K\subset [d]\setminus S,\,|K|=k} B_k\,\Delta_{S\cup K}(x)\\
    &=\sum_{k=0}^{n-|S|}\sum_{K\subset [d]\setminus S,\,|K|=k} B_k\,\sum_{T\subset[d]\setminus (S\cup K)} \frac{1}{1+|T|}f_{S\cup K\cup T}(x)
\end{split}
\end{equation}
From which we see that the component function $f_{S\cup \tilde K}$ is being added to $\Phi_S^n(x)$ exactly
\begin{equation}
C_{n-|S|,|\tilde K|}=\sum_{k=0}^{n-|S|}\binom{n-|S|}{k}\frac{B_k}{1+|\tilde K|-k}
\end{equation}
times which concludes the proof.
\end{proof}

\section{Proof of Theorem \ref{thm:value_function_from_gam}}

\begin{proof}[Proof of Theorem \ref{thm:value_function_from_gam}]
According to Theorem \ref{thm:shapley_gam}, the Shapley-GAM decomposition is given by
\begin{equation}
    f_S(x)=\sum_{L\subset S}(-1)^{|S|-|L|}v(x_L,L).
\end{equation}
By substituting the definition of the value function \eqref{eq:gam_value_function}
\begin{equation}
\begin{split}
    f_S(x)&=\sum_{L\subset S}(-1)^{|S|-|L|}v(x_L,L)\\
          &=\sum_{L\subset S}(-1)^{|S|-|L|}\sum_{T\subset L}g_T(x)\\
          &=\sum_{L\subset S}\sum_{T\subset L}g_T(x)(-1)^{|S|-|L|}\\
          &=\sum_{T\subset S}g_T(x)\sum_{L\subset S\setminus T}(-1)^{|S|-|L|-|T|}\\
          &=g_S(x)\\
\end{split}
\end{equation}
Where we have re-arranged the sum to count the number of occurrences of the set $T$, and then used the fact that inner sum averages to zero except for $T=S$.
\end{proof}

\section{Proof of Theorem \ref{thm:thm_recovery}}

We show a slightly more general result than what is stated in the main paper. In fact, we show that recovery holds for all interaction indices that can be written as 
\begin{equation}
    I_S^n(x)=f_S(x)+\sum_{\substack{K\subset[d]\setminus S\\n+1\leq |S|+|K|}} C_{n,|S|,|K|}\,f_{S\cup K}(x)  \qquad \forall S\subseteq[d],|S|\leq n
\end{equation}
where $f_S(x)$ are the component functions of the Shapley-GAM and $C_{n,|S|,|K|}\in\mathbb{R}$ are coefficients that depend on the  interaction index. $n$-Shapley Values can be written like this according to Theorem \ref{thm:thm_shapley_values_from_gam}. For the Faith-Shap interaction index, this representation is given in Theorem 19 in \citet{tsai2022faith}
\begin{equation}
\label{eq:faith_shap}
    \text{Faith-Shap}^n_S(x)=f_S(x)+\sum_{\substack{K\subset[d]\setminus S\\n+1\leq |S|+|K|}}(-1)^{n-|S|}\frac{|S|}{n+|S|}\frac{\binom{n}{|S|}\binom{|S|+|K|-1}{n}}{\binom{|S|+|K|+n-1}{n+|S|}}  \,f_{S\cup K}(x)\qquad\forall |S|\leq n.
\end{equation}
Also the Shapley Taylor interaction index \citep{sundararajan2020shapley} can, due to its symmetry, be written as
\begin{equation}
\label{eq:shapley_taylor}
    \text{Shapley-Taylor}^n_S(x)=\begin{cases}
    f_S(x)\qquad\qquad\qquad\qquad\qquad\qquad&\text{if }|S|<n\\[4pt]
f_S(x)+\sum_{\substack{K\subset[d]\setminus S\\n+1\leq |S|+|K|}} \frac{1}{\binom{|S|+|K|}{|K|}}\,f_{S\cup K}(x)  \qquad &\text{if }|S|=n.
    \end{cases}
\end{equation}

\begin{proof}[Proof of Theorem \ref{thm:thm_recovery}]
We assume that the function $f$ can be written as a GAM of order $n$, that is
\begin{equation}
    f(x)=\sum_{S\subset[d],\,|S|\leq n}g_S(x_S).
\end{equation}
Notice that this GAM is not necessarily the Shapley-GAM, but just some way to write the function $f$ as a GAM. Let $f_S$ be the component functions of the Shapley-GAM. Now,
$n$-Shapley Values, the Faith-Shap interaction index, as well as the Shapley Taylor interaction index, can be written as a linear combination of the component functions of the Shapley-GAM
\begin{equation}
\label{eq:proof_recovery_0}
    I_{S}^n(x)=f_S(x_S)+\sum_{K\subset [d]\setminus S,\,\, |S|+|K|>n}\,\,C_{\scaleto{n-|S|,|K|}{8pt}}\,f_{S\cup K}(x_{S\cup K})
\end{equation}
where the specific linear coefficients $C_{n,m}$ depend on the interaction index (Theorem \ref{thm:thm_shapley_values_from_gam}, equation \eqref{eq:faith_shap}, equation \eqref{eq:shapley_taylor}). According to equation \eqref{eq:proof_recovery_0}, 
the interaction index equals $f_S(x_S)$ plus some weighted components of the Shapley-GAM of order greater than $n$. As a consequence, it remains to show is that the Shapley-GAM is a GAM of order $n$ (then the second sum vanishes and we arrive at $I_S^n(x)=f_S(x_S)$ which is what we want to show).

It remains to show that the Shapley-GAM is a GAM of order $n$. According to Theorem \ref{thm:shapley_gam}, the component functions of the Shapley-GAM are given by
\begin{equation}
    f_S(x)=\sum_{L\subset S}(-1)^{|S|-|L|}v(x_{L},L).
\end{equation}
We want to show that the component functions of degree greater than $n$ vanish. Let us first consider observational SHAP. Here we have
\begin{equation}
\begin{split}
    \sum_{L\subset S}(-1)^{|S|-|L|}v(x_{L},L)&=\sum_{L\subset S}(-1)^{|S|-|L|}\mathbb{E}[f(x)|x_L]\\
                                             &=\sum_{L\subset S}(-1)^{|S|-|L|}\mathbb{E}\left[\sum_{T\subset[d],\,|T|\leq n}g_T(x_T)\Big|x_L\right]\\
                                             &=\sum_{L\subset S}(-1)^{|S|-|L|}\sum_{T\subset[d],\,|T|\leq n}\mathbb{E}\left[g_T(x_T)|x_L\right]\\
                                             &=\sum_{T\subset[d],\,|T|\leq n}\,\,\sum_{L\subset S}(-1)^{|S|-|L|}\mathbb{E}\left[g_T(x_T)|x_L\right]\\        
\end{split}
\end{equation}
Consider the inner sum. If $|S|>n$, we can always pick an element $i\in S\setminus T$ and write
\begin{equation}
\label{eq:proof_recovery_1}
    \sum_{L\subset S\setminus\{i\}}(-1)^{|S|-|L|}\Big(\mathbb{E}\left[g_T(x_T)|x_{L}\right]-\mathbb{E}\left[g_T(x_T)|x_{L\cup\{i\}}\right]\Big)
\end{equation}
If the input features are independent, then $g_T(x_T)$ and $x_i$ are independent, from which we get by the properties of the conditional expectation that
\begin{equation}
    \mathbb{E}\left[g_T(x_T)|x_{L\cup\{i\}}\right]=\mathbb{E}\left[g_T(x_T)|x_{L}\right]
\end{equation}
It follows that the inner sum is zero for all sets $T$, and that the component functions of the Shapley-GAM of degree greater than $n$ are equal to zero, too. 

Let us now consider interventional SHAP. Just as for observational SHAP, we arrive at equation \eqref{eq:proof_recovery_1} using the linearity of the expectation operator. Hence, we require that
\begin{equation}
    \mathbb{E}\left[g_T(x_T)|do(x_{L\cup\{i\}})\right]=\mathbb{E}\left[g_T(x_T)|do(x_{L})\right]
\end{equation}
which follows from the properties of the causal do-operator. Intuitively, since $g_T$ does not depend on the value of feature $i$, intervening on that feature has no effect.
\end{proof}

\section{Proof of Lemma \ref{bernoulli_lemma_2}}
\label{apx:bernoulli_lemma}

\begin{proof}%
Let us first consider the case $n=0$. For $n=0$ and $m=0$, we have\begin{equation}
    \binom{0}{0}\binom{0}{0}\frac{(0-0)!(0-0)!}{(0+0-0-0+1)!}(-1)^0 B_0=1.
\end{equation}For $n=0$ and $m\geq 1$, we have 
\begin{equation}
\begin{split}
    \sum_{l=0}^{m}\binom{m}{l}\frac{1}{(m-l+1)}(-1)^l B_l&=\frac{1}{m+1}\sum_{l=0}^{m}\binom{m+1}{l}(-1)^l B_l\\
       &=\frac{-2}{m+1}\binom{m+1}{1}B_1+\sum_{l=0}^{m}\binom{m+1}{l}\\
       &=-2B_1+0=1.
\end{split}
\end{equation}
where we used \eqref{eq:bernoulli_numbers} and the fact that the odd Bernoulli numbers vanish except for $n=1$. For $m=0$ and $n\geq 1$, we also have from \eqref{eq:bernoulli_numbers}
\begin{equation}
\begin{split}
    \sum_{k=0}^{n}\binom{n}{k}\frac{1}{(n-k+1)}(-1)^0 B_k=\frac{1}{n+1}\sum_{k=0}^{n}\binom{n+1}{k}B_k=0.
\end{split}    
\end{equation}It remains to show the general case $n,m\geq 1$.
According to a derivation by \citet{mo_rene}, the problem in this case is equivalent to 
\begin{equation}
\label{eq:rene}
(-1)^n \sum_{l=
0}^m \frac{B_{n+l+1}}{n+l+1}{m \choose l}+(-1)^m \sum_{k=
0}^n \frac{B_{m+k+1}}{m+k+1}{n \choose k} = - \frac{1}{(n+m+1){n+m\choose m}}
\end{equation}
Now, Theorem 2 in \citet{gould2014bernoulli} with $s=1$ states that for any sequence of numbers $(a_n)_{n\geq 0}$, it holds that
\begin{equation}
\label{eq:gould_theorem}
    \sum_{k=0}^m\binom{m}{k}\frac{a_{n+k+1}}{n+k+1}=\sum_{k=0}^n(-1)^{n-k}\binom{n}{k}\frac{b_{m+k+1}}{m+k+1}+\frac{(-1)^{n+1}a_0}{(m+n+1)\binom{m+n}{n}}
\end{equation}
where the sequence $(b_n)_{n\geq 0}$ is the binomial transform of the sequence $(a_n)_{n\geq 0}$, given by
\begin{equation}
    b_n=\sum_{k=0}^n\binom{n}{k}a_k.
\end{equation}
Setting $a_n=B_n$, we have from \eqref{eq:bernoulli_numbers} that the binomial transform of the Bernoulli numbers is simply
\begin{equation}
    b_n=\sum_{k=0}^n\binom{n}{k}B_k=(-1)^n B_n
\end{equation}
where the factor $(-1)^n$ takes care of the special case $n=1$. Using \eqref{eq:gould_theorem} with $a_n=B_n$ and $b_n=(-1)^n B_n$, we get
\begin{equation}
\begin{split}
(-1)^n\sum_{k=0}^m\binom{m}{k}\frac{B_{n+k+1}}{n+k+1}&=-\sum_{k=0}^n(-1)^{m}\binom{n}{k}\frac{B_{m+k+1}}{m+k+1}-\frac{1}{(m+n+1)\binom{m+n}{n}}
\end{split}
\end{equation}
where we multiplied both sides with $(-1)^n$. This is the same as \eqref{eq:rene} which concludes the proof.
\end{proof}

\newpage
\section{Datasets and Models}
\label{apx:datasets_models}

In our experiments, we use the following data sets and models.

\subsection{Datasets}

{\bf Folktables Income.} Folktables is a Python package that provides access to data sets derived from recent US Censuses \url{https://github.com/zykls/folktables}. We used this package to obtain the data from the 2016 Census in California. The machine learning problem is the ACSIncome prediction task, that is to predict whether an individual's income is above \$50,000, based on 10 personal characteristics \citep{ding2021retiring}. The data set contains of 152 149 observations.

{\bf Folktables Travel Time.} Folktables is a Python package that provides access to data sets derived from recent US Censuses \url{https://github.com/zykls/folktables}. We used this package to obtain the data from the 2016 Census in California. The machine learning problem is the ACSTravelTime prediction task, that is to predict whether an individual has to commute to work longer than 20 minutes, based on 10 personal characteristics \citep{ding2021retiring}. The data set contains 133 549 observations.

{\bf German Credit.} The German Credit Data set is a  data set with 20 different features on individual's credit history and personal characteristic. The machine learning problem is to predict credit risk in binary form. 
We obtained the data set from the UCI machine learning repository and reduced the number of features to 10 without any observed drop in accuracy. The data set contains 1000 observations.

{\bf California Housing.} The California Housing data set was derived from the 1990 U.S. census. The regression problem is to predict the median house value, based on 8 characteristics. We obtained the data set form the \texttt{scikit-learn} library. The data set contains 20 640 observations.

{\bf Iris.} The Iris data set is a simple flower data set. The machine learning problem is to classify whether the flower is of a particular kind or not, based on 4 different features. We obtained the data set form the \texttt{scikit-learn} library. The data set contains 150 observations.

\subsection{Models}

{\bf Glassbox-GAM.} We train the Glassbox-GAMs with the \texttt{interpretML} library \citep{nori2019interpretml} and default parameters (no interactions).

{\bf Gradient Boosted Tree.} We use the \texttt{xgboost} library \citep{xgboost} and train with 100 trees per model. This setting allows to achieve competitive accuracy for gradient boosted trees.

{\bf Random Forest.} We use the \texttt{scikit-learn} library \citep{scikit-learn} and train with 100 trees per forest. This setting allows to achieve competitive accuracy for random forests.

{\bf k-Nearest Neighbor.} We use the \texttt{scikit-learn} library \citep{scikit-learn}. The hyperparameter $k$ was chosen with cross-validation to be 30, 80, 25, 10, 1 for the data sets as listed above.

\newpage
\section{Additional Plots and Figures}
\label{apx:figures}

\subsection{Folktables Income}

\subsubsection{Glassbox-GAM}
\begin{figure}[H]
    \centering
   \includegraphics[width=0.95\textwidth]{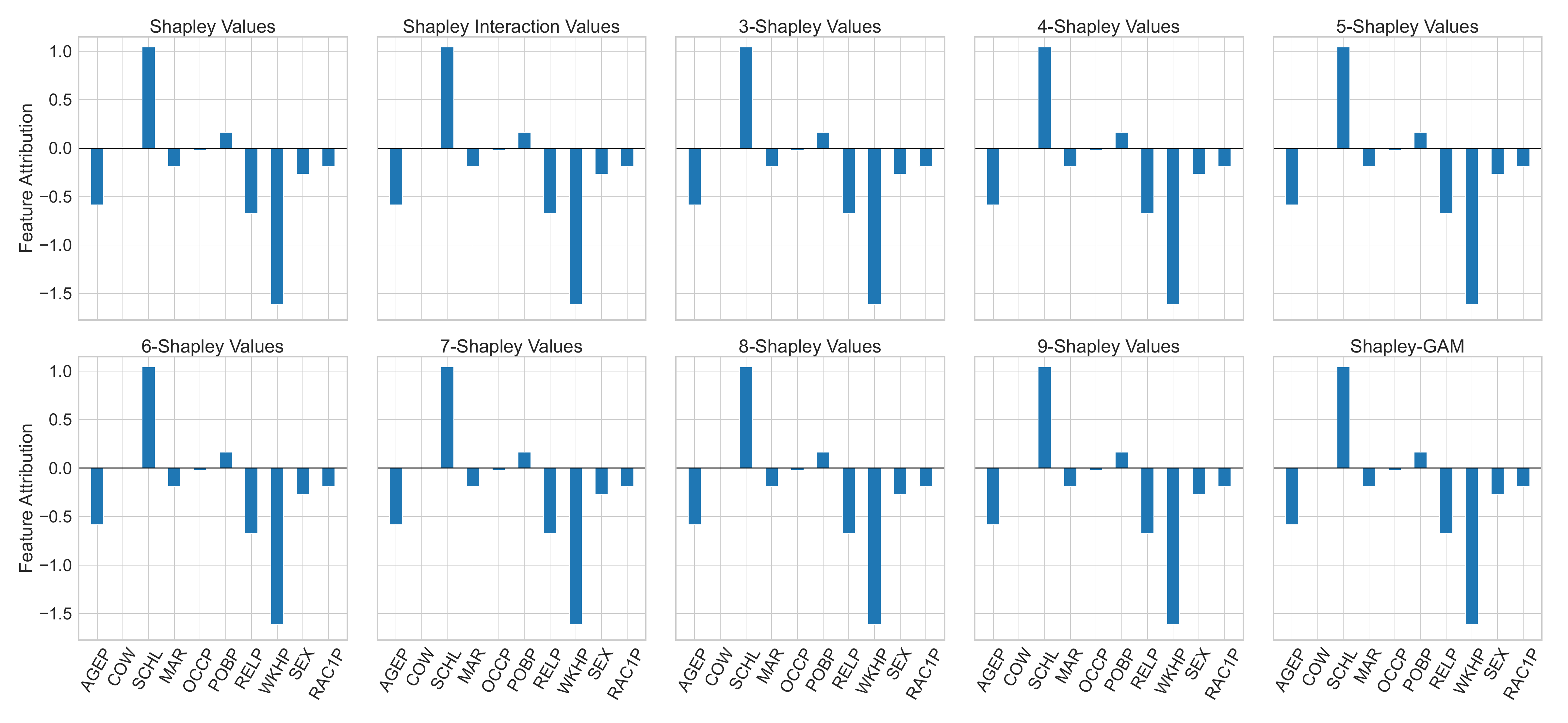}
   \includegraphics[width=0.85\textwidth]{figures/main_paper/legend.pdf}
    \caption{$n$-Shapley Values for a Glassbox-GAM and the first observation in our test set of the Folktables Income data set.}
\end{figure}

\subsubsection{Gradient Boosted Tree}
\begin{figure}[H]
    \centering
    \includegraphics[width=0.95\textwidth]{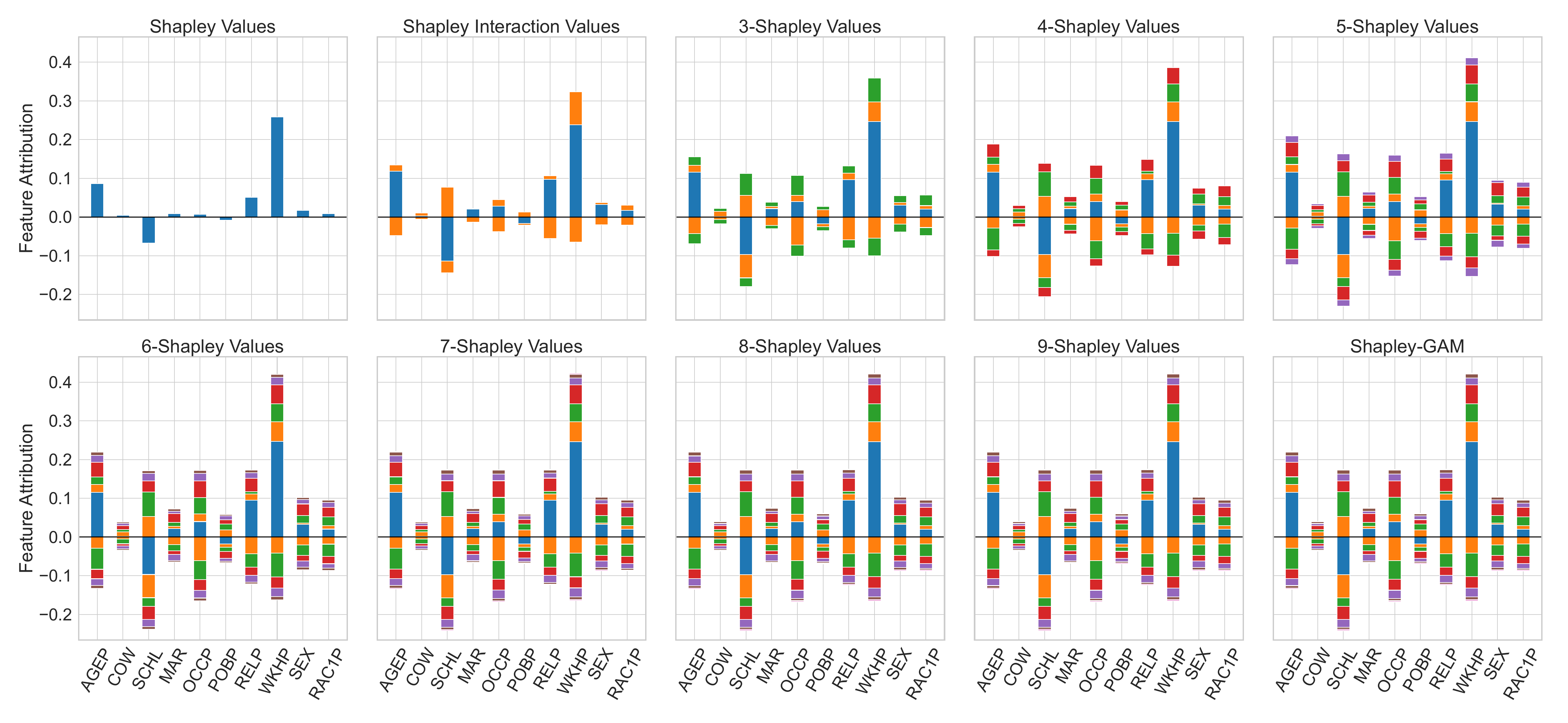}
    \includegraphics[width=0.85\textwidth]{figures/main_paper/legend.pdf}
    \caption{$n$-Shapley Values for a Gradient Boosted Tree and the first observation in our test set of the Folktables Income data set.}
\end{figure}

\newpage
\subsubsection{Random Forest}
\begin{figure}[H]
    \centering
    \includegraphics[width=\textwidth]{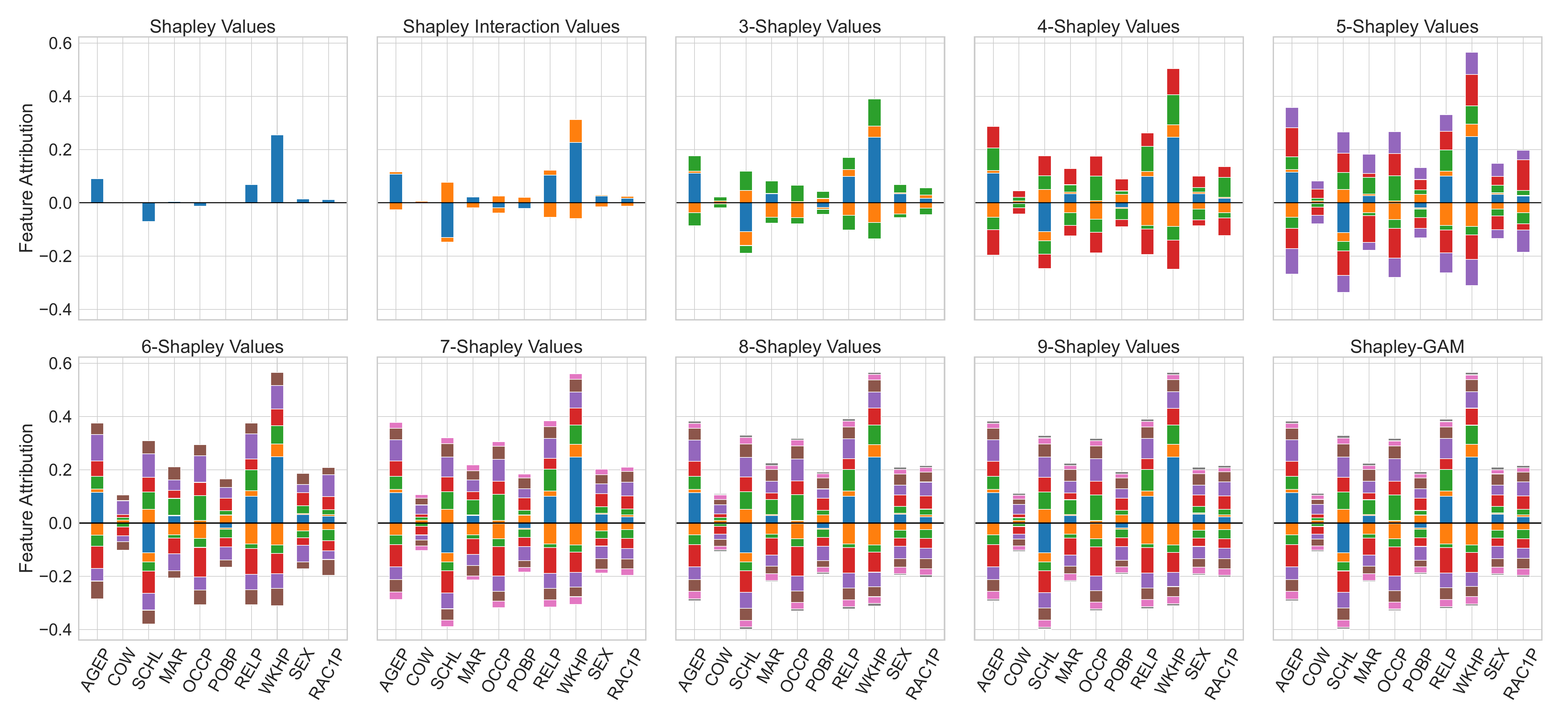}
    \includegraphics[width=0.85\textwidth]{figures/main_paper/legend.pdf}
    \caption{$n$-Shapley Values for a Random Forest and the first observation in our test set of the Folktables Income data set.}
\end{figure}

\subsubsection{k-Nearest Neighbor}
\begin{figure}[H]
    \centering
    \includegraphics[width=\textwidth]{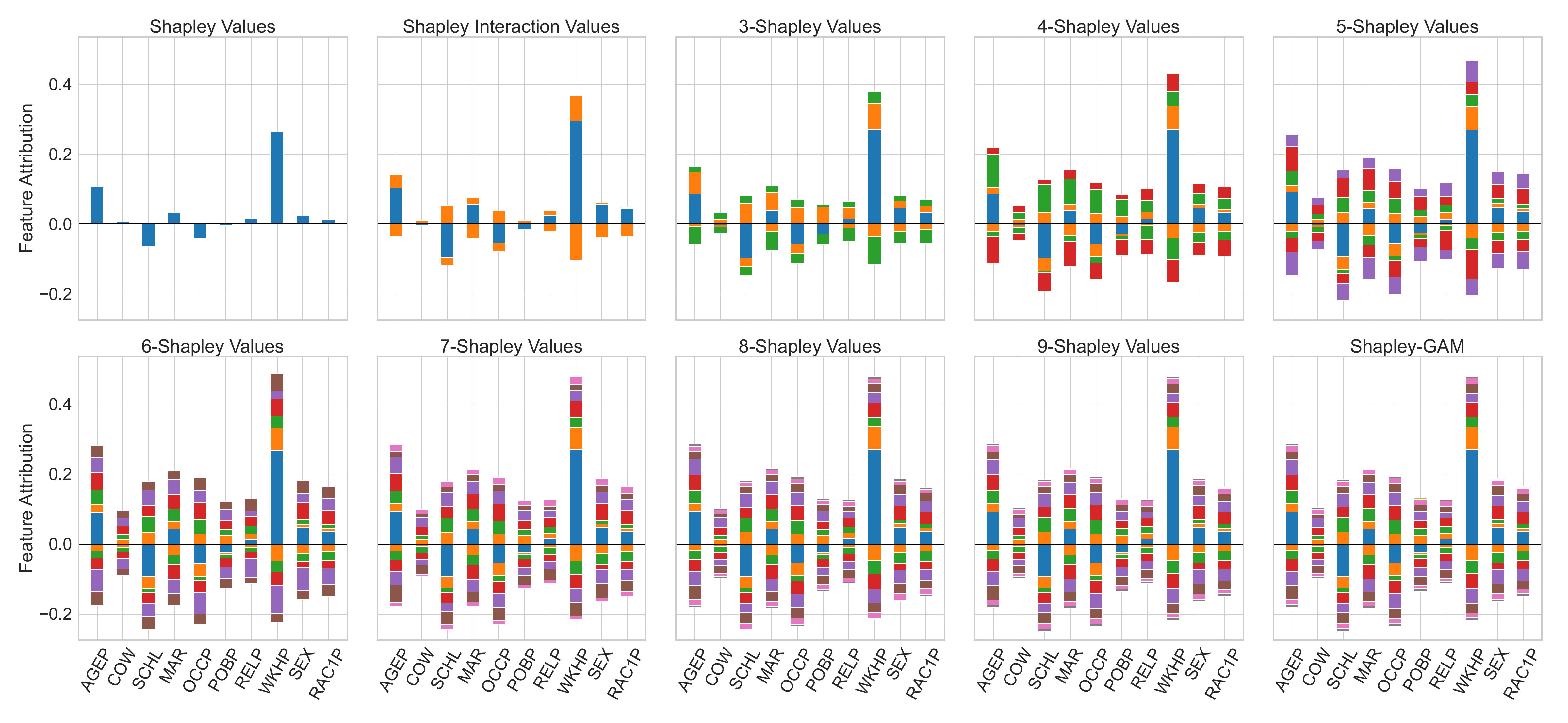}
    \includegraphics[width=0.85\textwidth]{figures/main_paper/legend.pdf}
    \caption{$n$-Shapley Values for a kNN classifier and the first observation in our test set of the Folktables Income data set.}
\end{figure}

\newpage
\begin{figure}[H]
    \centering    
    \includegraphics[width=\textwidth]{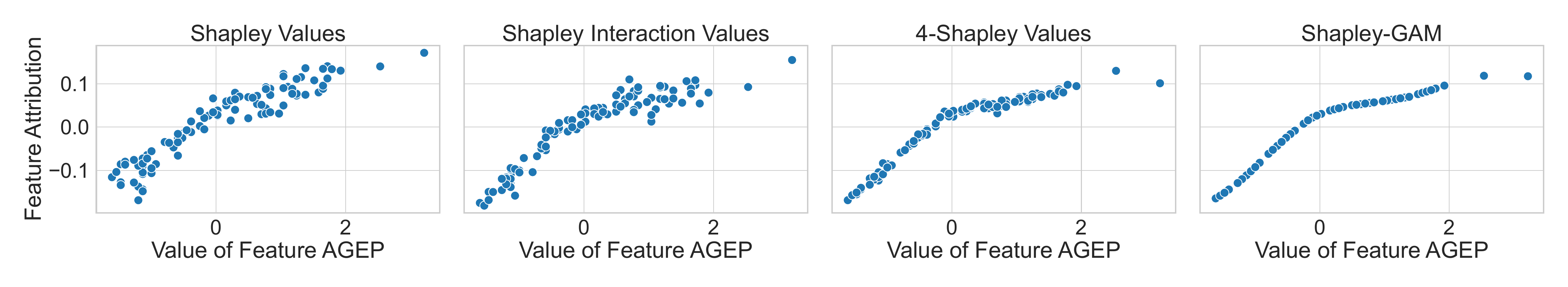}
    \includegraphics[width=\textwidth]{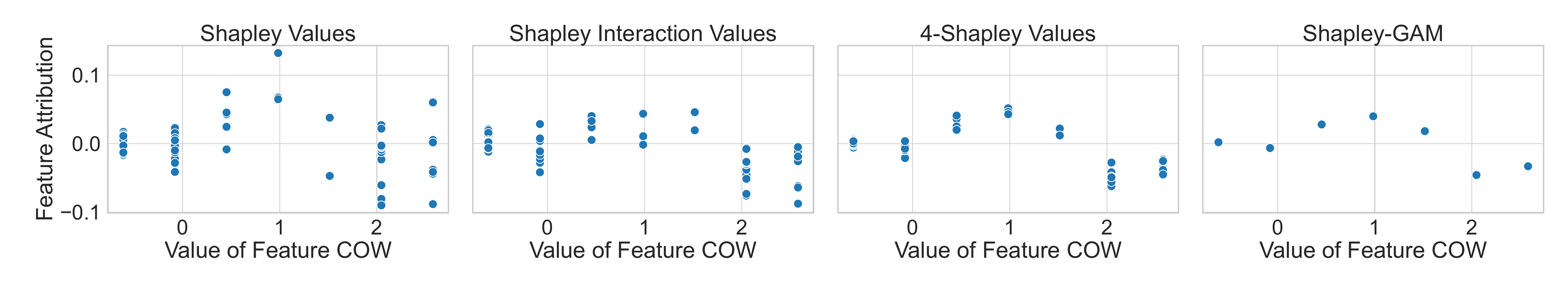}
    \includegraphics[width=\textwidth]{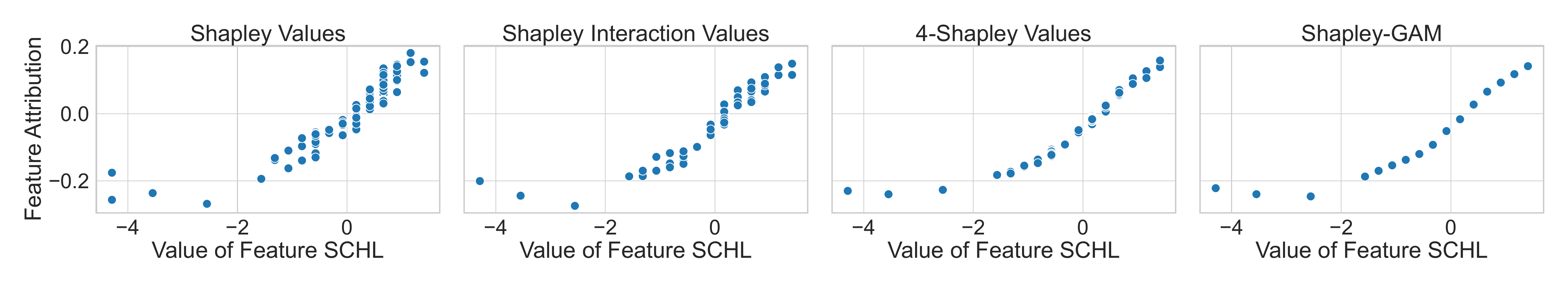}
    \includegraphics[width=\textwidth]{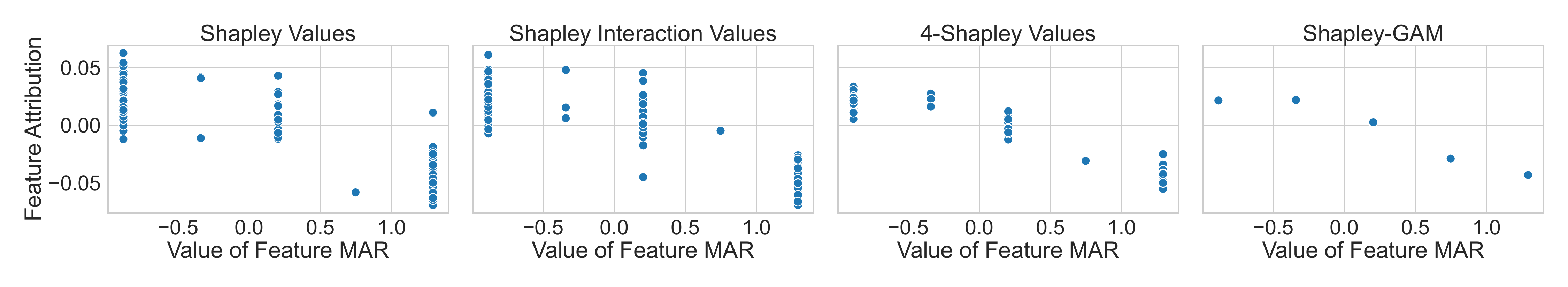}
    \includegraphics[width=\textwidth]{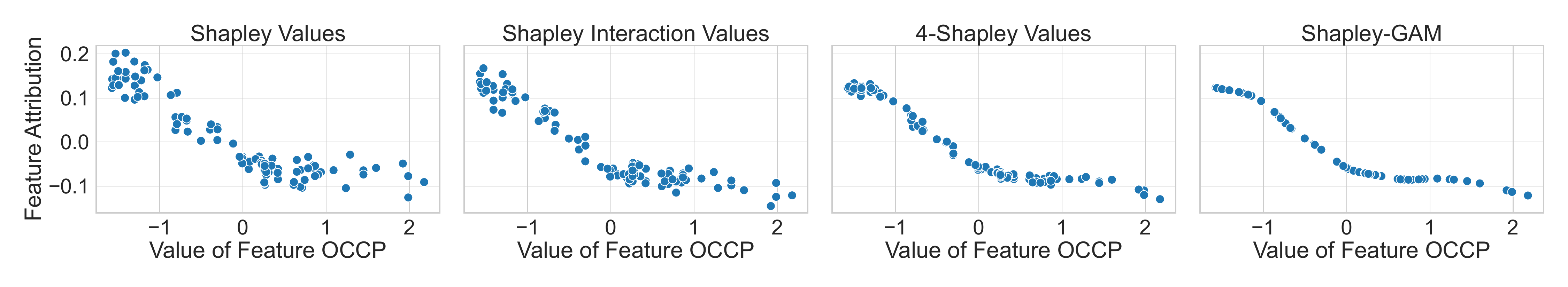}
    \includegraphics[width=\textwidth]{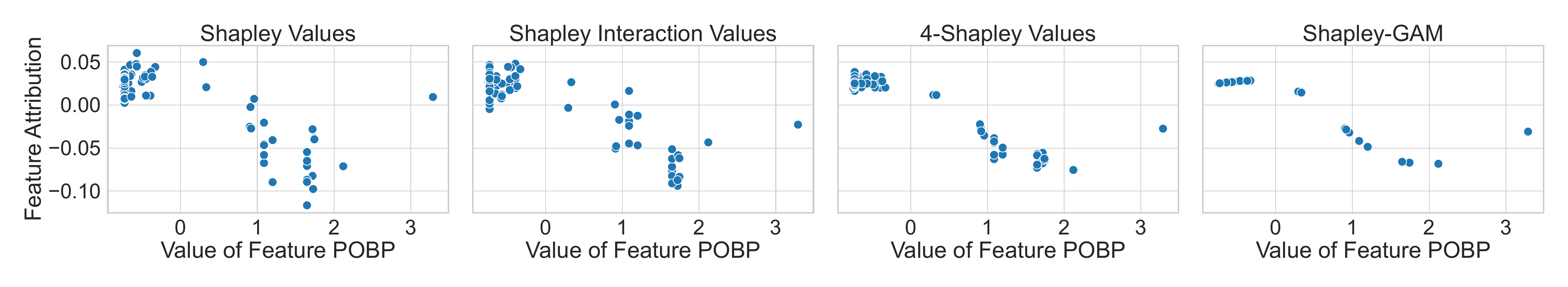}
    \includegraphics[width=\textwidth]{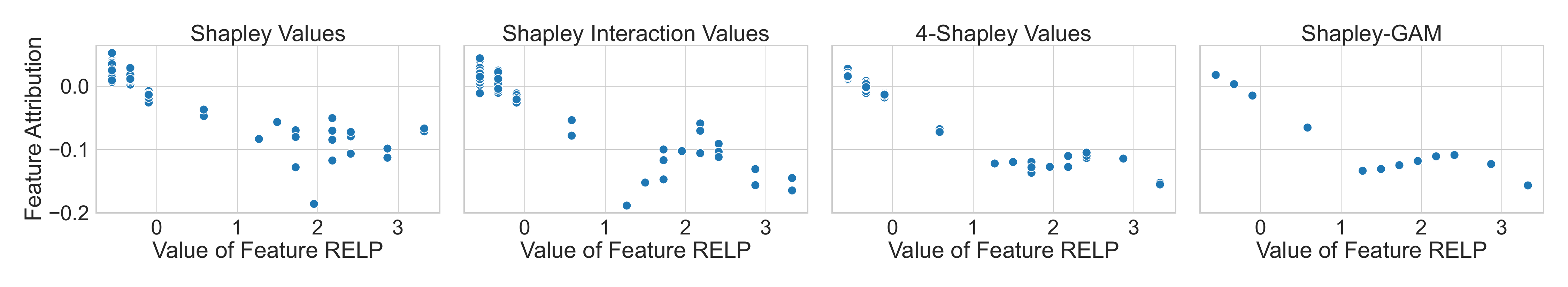}
    \caption{Partial dependence plots for the kNN classifier on the Folktables Income data set (compare Figure \ref{fig:partial_dependece_plot} in the main paper). Depicted are the partial dependence plots of $\Phi_i^n$ for $n=\{1,2,4,10\}$ and 7 different features.}
    \label{apx:knn_dependence}
\end{figure}

\newpage
\subsection{Folktables Travel}

\subsubsection{Glassbox-GAM}
\begin{figure}[H]
    \centering
   \includegraphics[width=\textwidth]{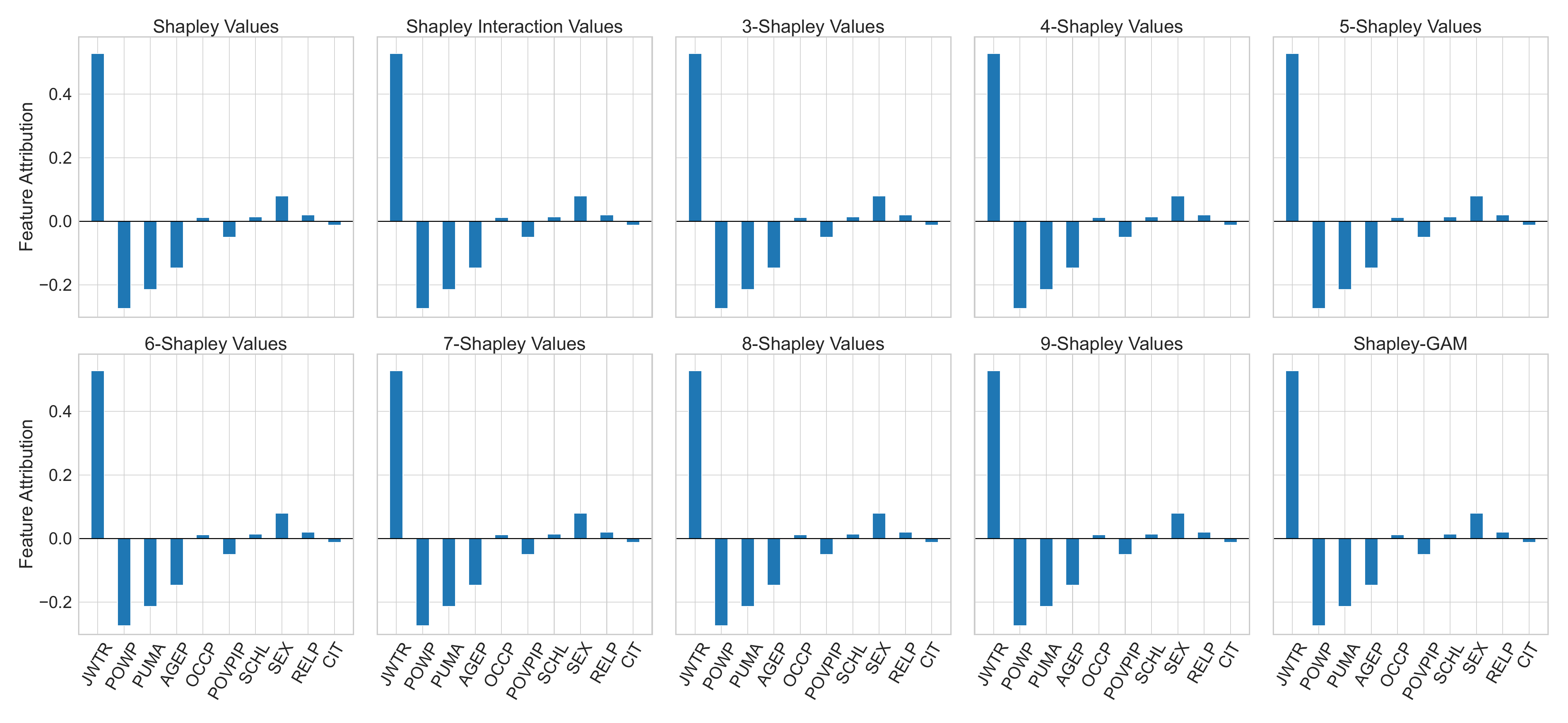}
   \includegraphics[width=0.85\textwidth]{figures/main_paper/legend.pdf}
    \caption{$n$-Shapley Values for a Glassbox-GAM and the first observation in our test set of the Folktables Travel data set.}
\end{figure}

\subsubsection{Gradient Boosted Tree}
\begin{figure}[H]
    \centering
    \includegraphics[width=\textwidth]{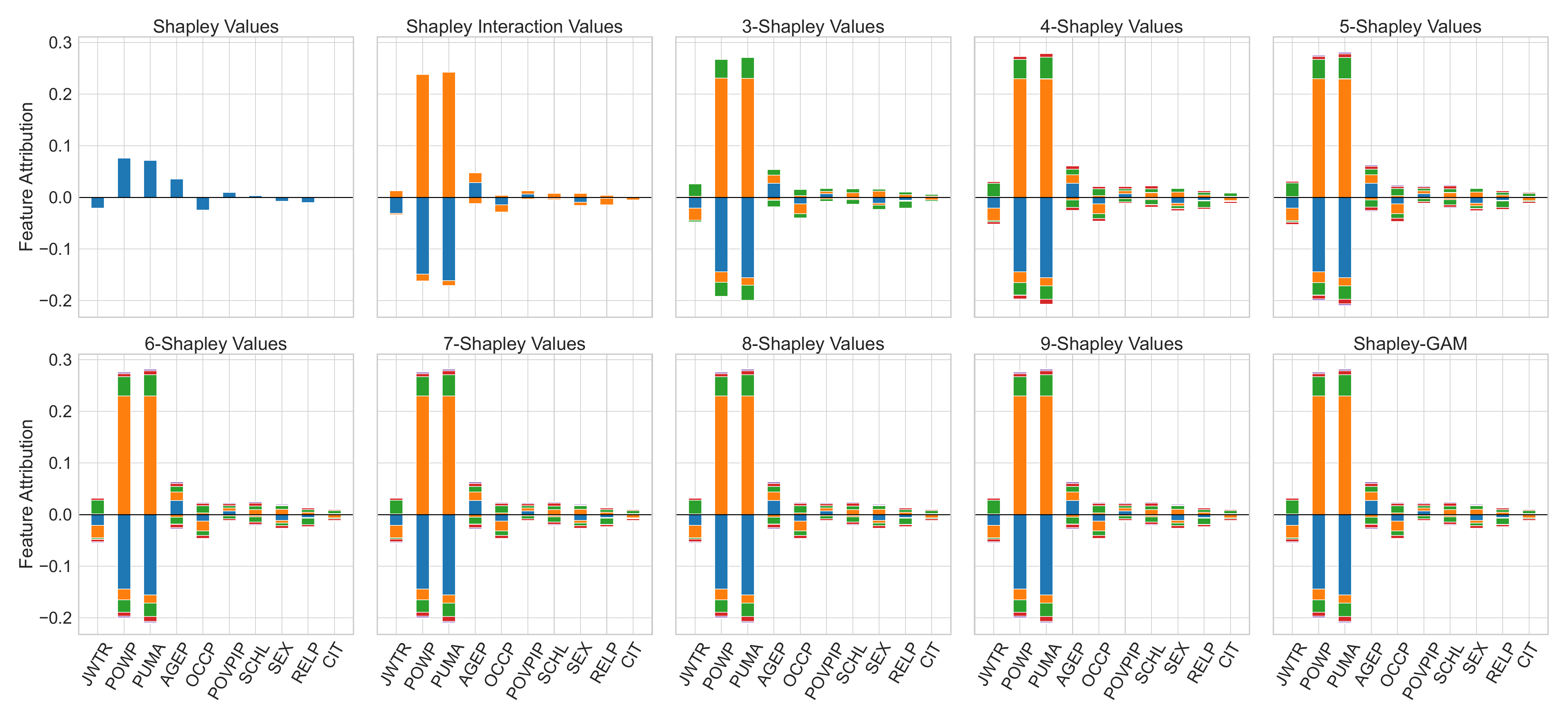}
    \includegraphics[width=0.85\textwidth]{figures/main_paper/legend.pdf}
    \caption{$n$-Shapley Values for a Gradient Boosted Tree and the first observation in our test set of the Folktables Travel data set.}
\end{figure}

\newpage
\subsubsection{Random Forest}
\begin{figure}[H]
    \centering
    \includegraphics[width=\textwidth]{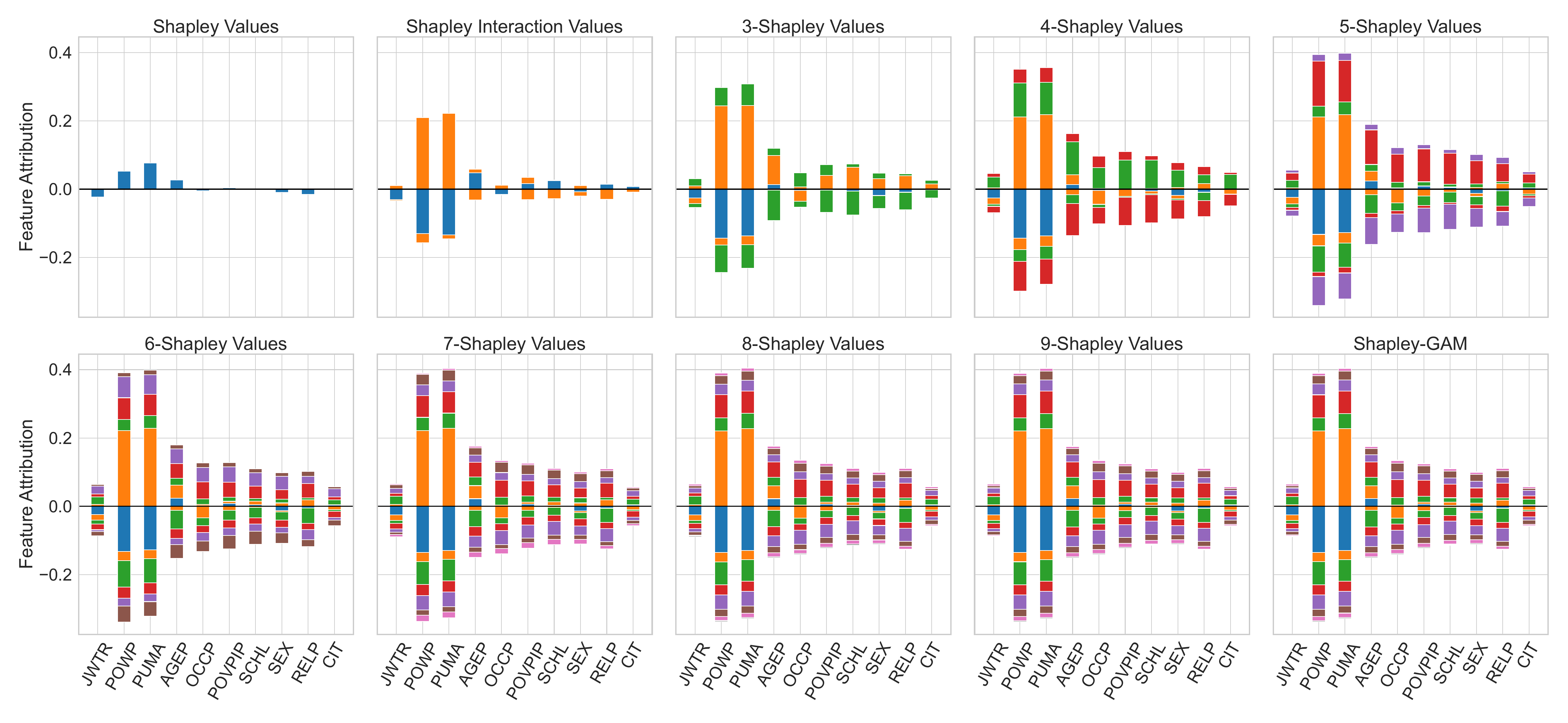}
    \includegraphics[width=0.85\textwidth]{figures/main_paper/legend.pdf}
    \caption{$n$-Shapley Values for a Random Forest and the first observation in our test set of the Folktables Travel data set.}
\end{figure}

\subsubsection{k-Nearest Neighbor}
\begin{figure}[H]
    \centering
    \includegraphics[width=\textwidth]{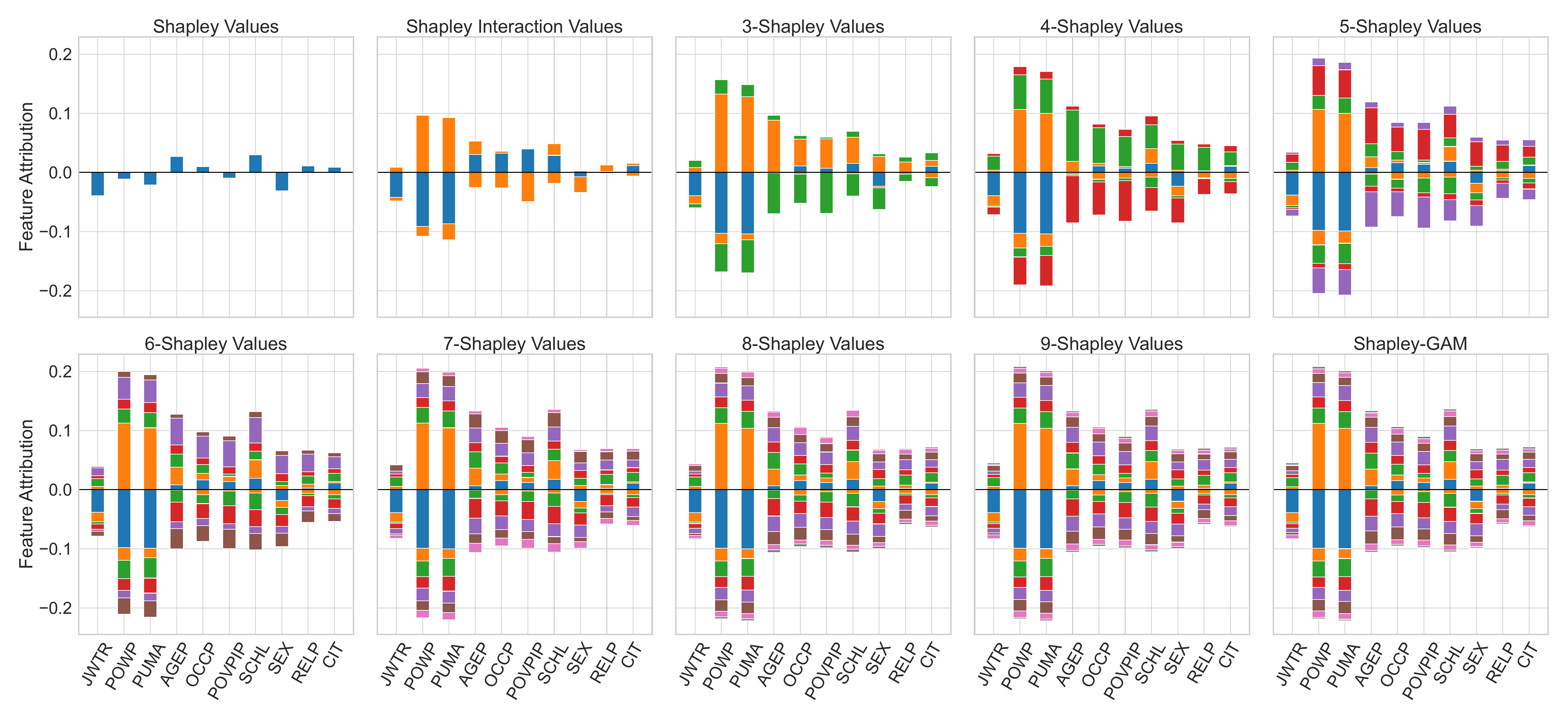}
    \includegraphics[width=0.85\textwidth]{figures/main_paper/legend.pdf}
    \caption{$n$-Shapley Values for a kNN classifier and the first observation in our test set of the Folktables Travel data set.}
\end{figure}

\newpage
\begin{figure}[H]
    \centering    
    \includegraphics[width=\textwidth]{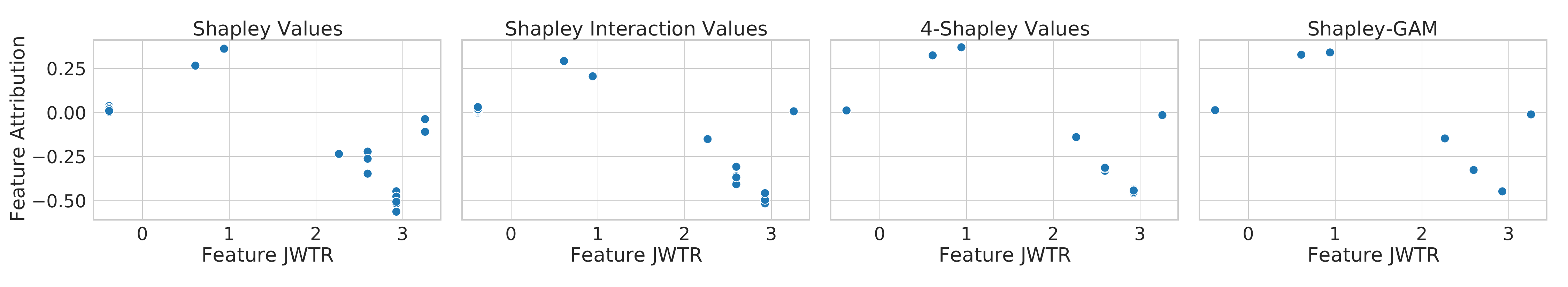}
    \includegraphics[width=\textwidth]{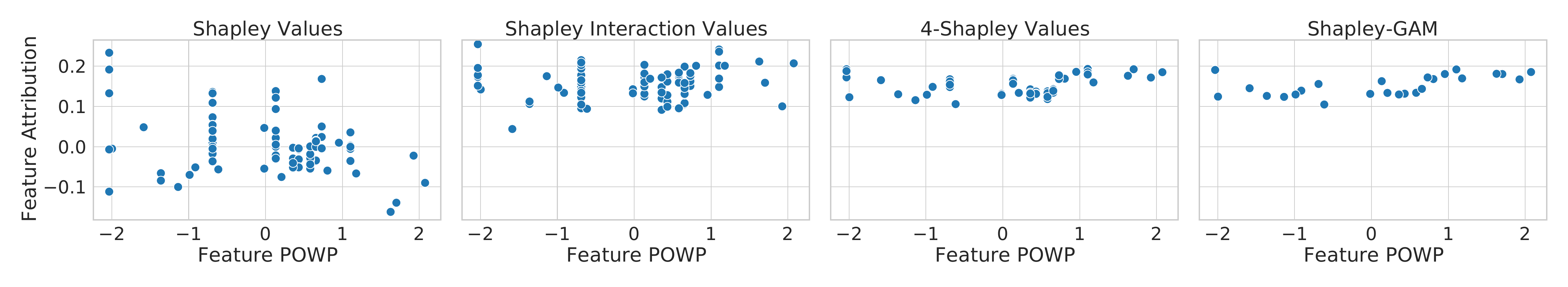}    \includegraphics[width=\textwidth]{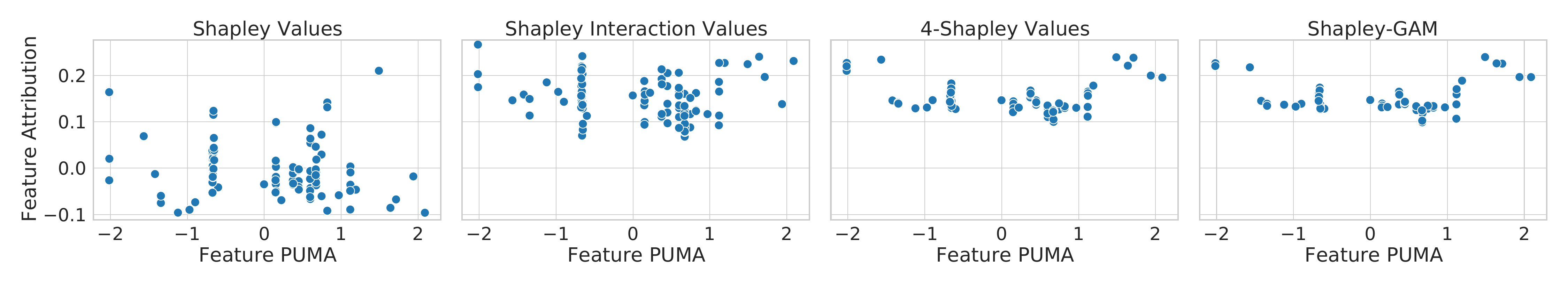}    \includegraphics[width=\textwidth]{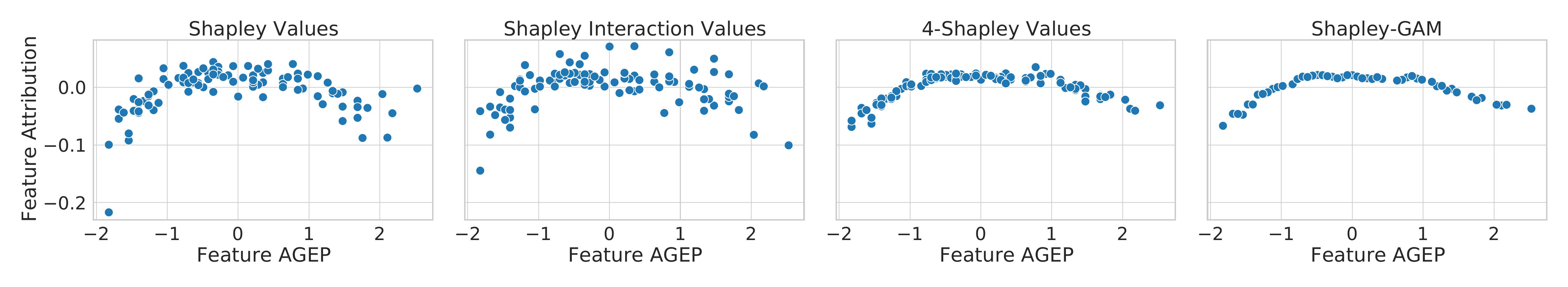}    \includegraphics[width=\textwidth]{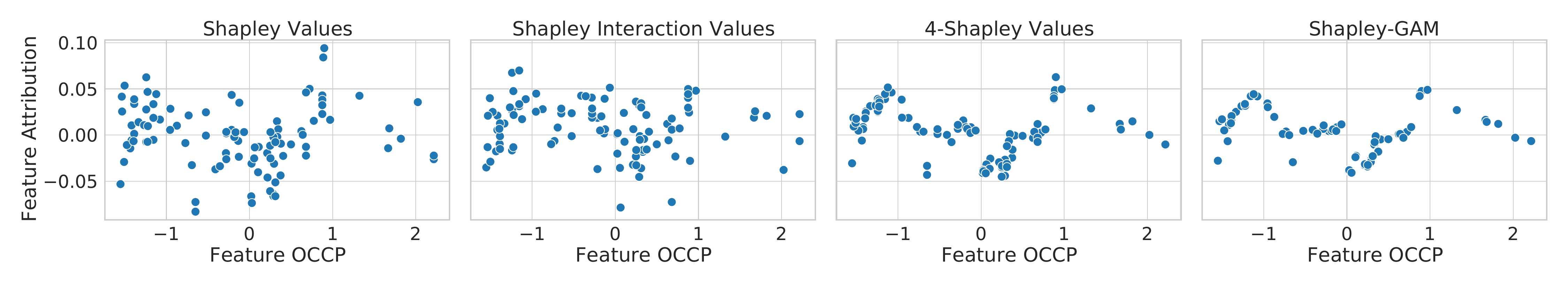}    \includegraphics[width=\textwidth]{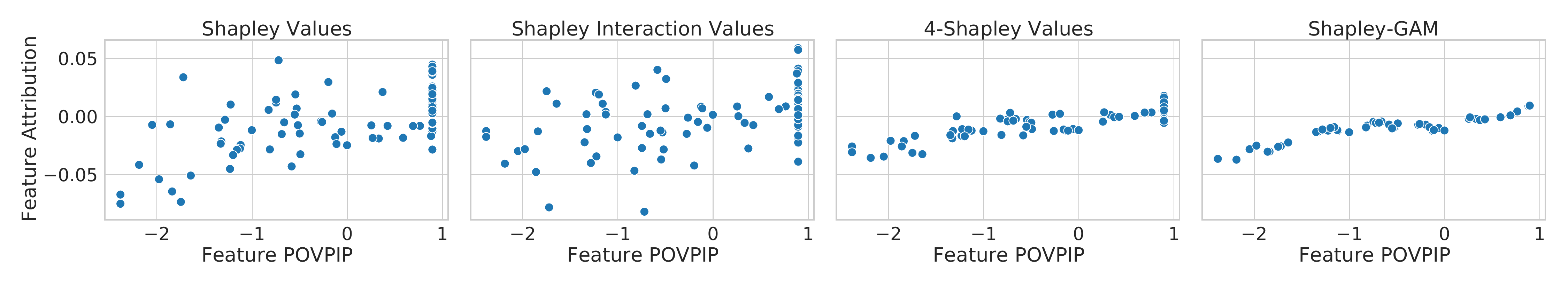}
    \includegraphics[width=\textwidth]{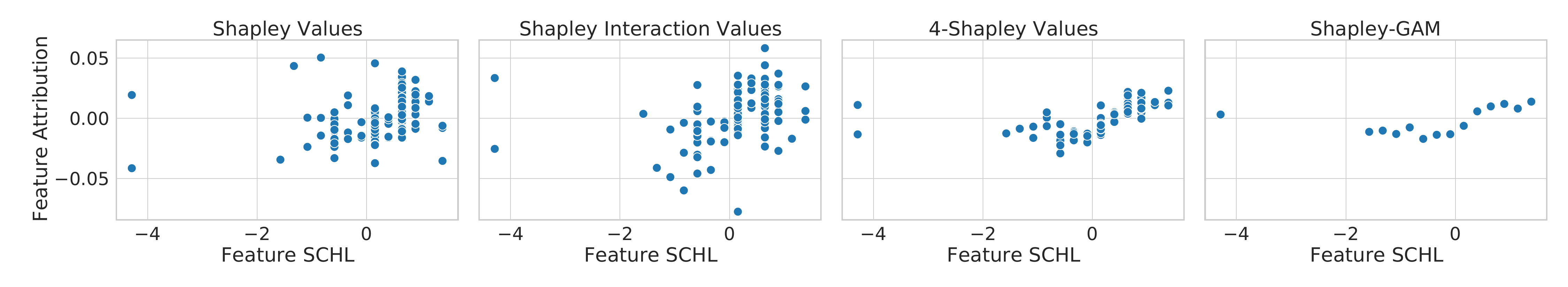}
    \caption{Partial dependence plots for the random forest on the Folktables Travel data set. Depicted are the partial dependence plots of $\Phi_i^n$ for $n=\{1,2,4,10\}$ and 7 different features.}
\end{figure}

\newpage
\subsection{German Credit}

\subsubsection{Glassbox-GAM}
\begin{figure}[H]
    \centering
   \includegraphics[width=\textwidth]{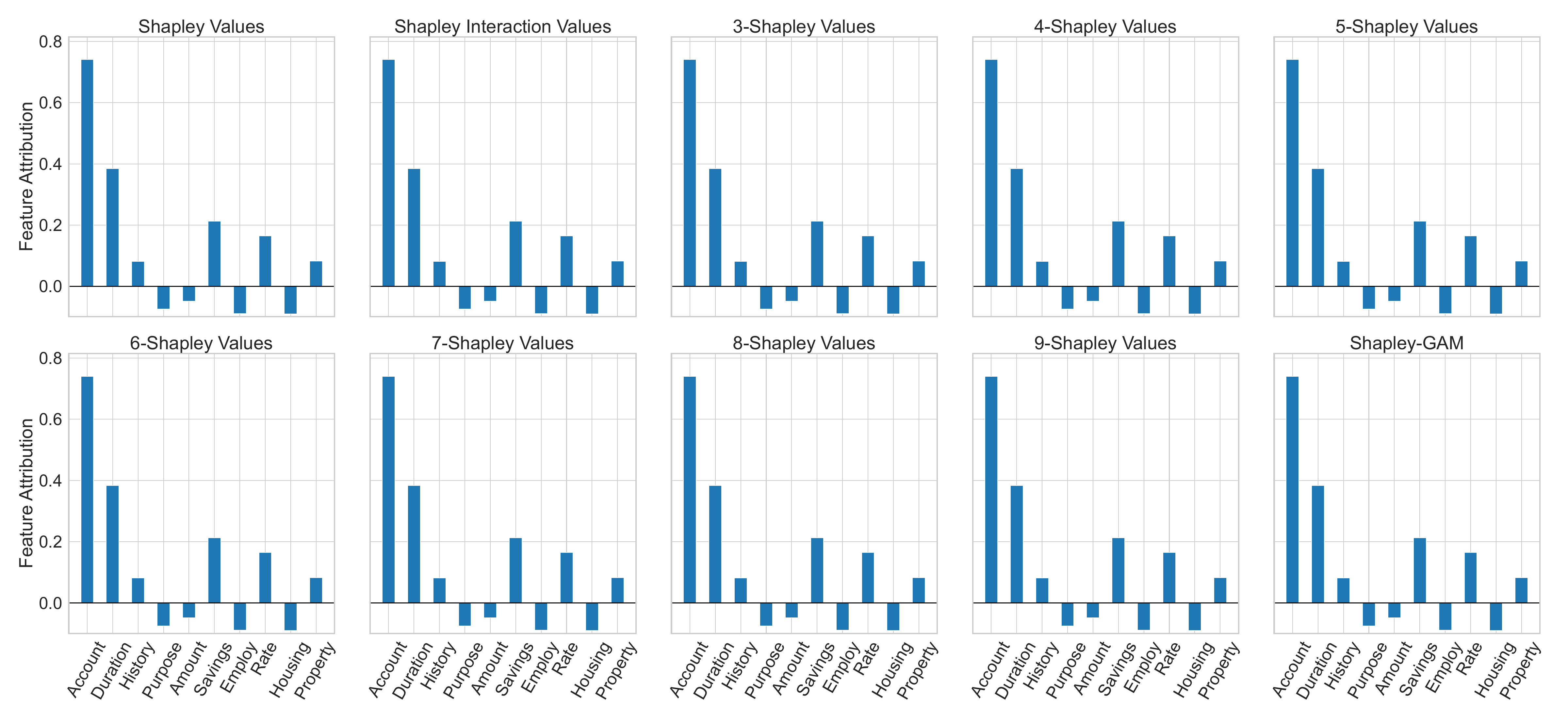}
   \includegraphics[width=0.85\textwidth]{figures/main_paper/legend.pdf}
    \caption{$n$-Shapley Values for a Glassbox-GAM and the first observation in our test set of the German Credit data set.}
\end{figure}

\subsubsection{Gradient Boosted Tree}
\begin{figure}[H]
    \centering
    \includegraphics[width=\textwidth]{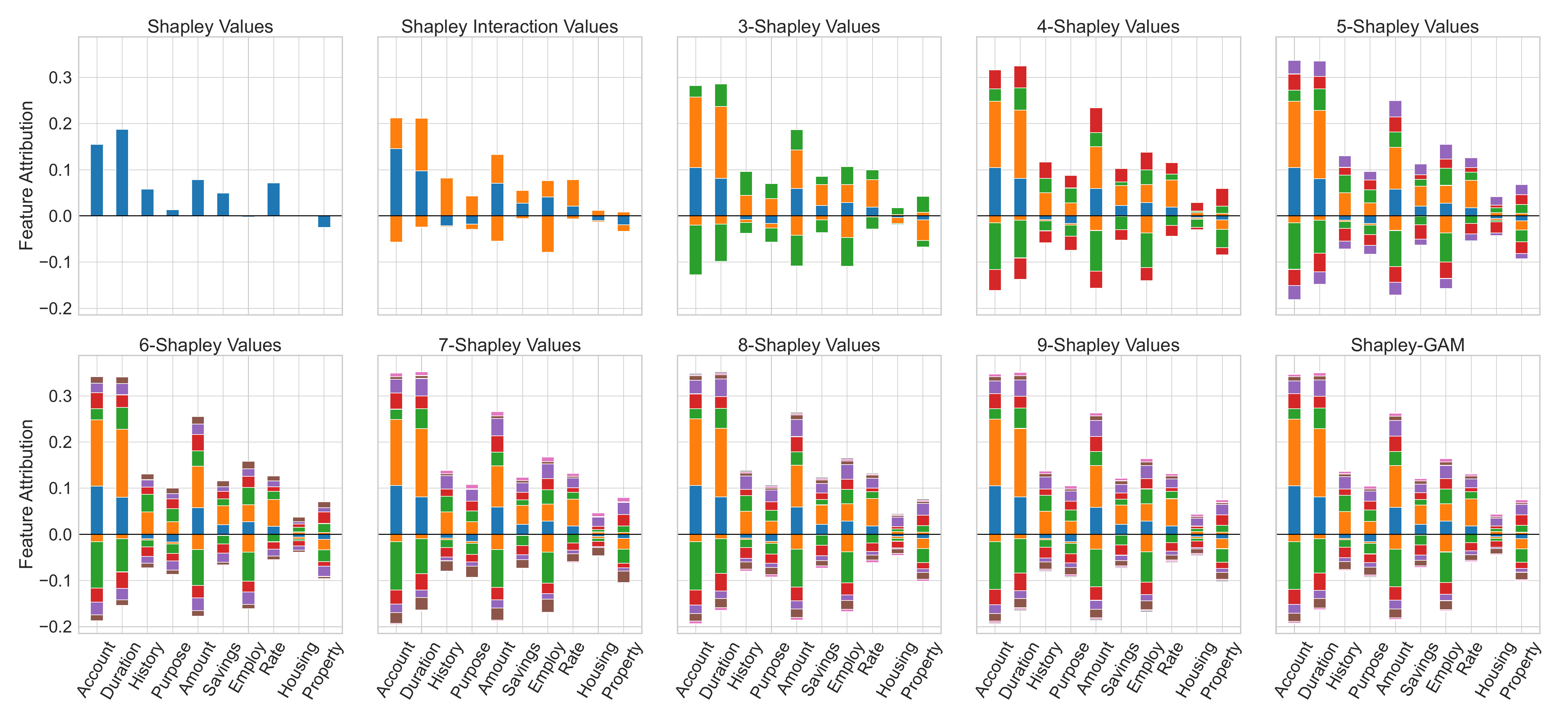}
    \includegraphics[width=0.85\textwidth]{figures/main_paper/legend.pdf}
    \caption{$n$-Shapley Values for a Gradient Boosted Tree and the first observation in our test set of the German Credit data set.}
\end{figure}

\newpage
\subsubsection{Random Forest}
\begin{figure}[H]
    \centering
    \includegraphics[width=\textwidth]{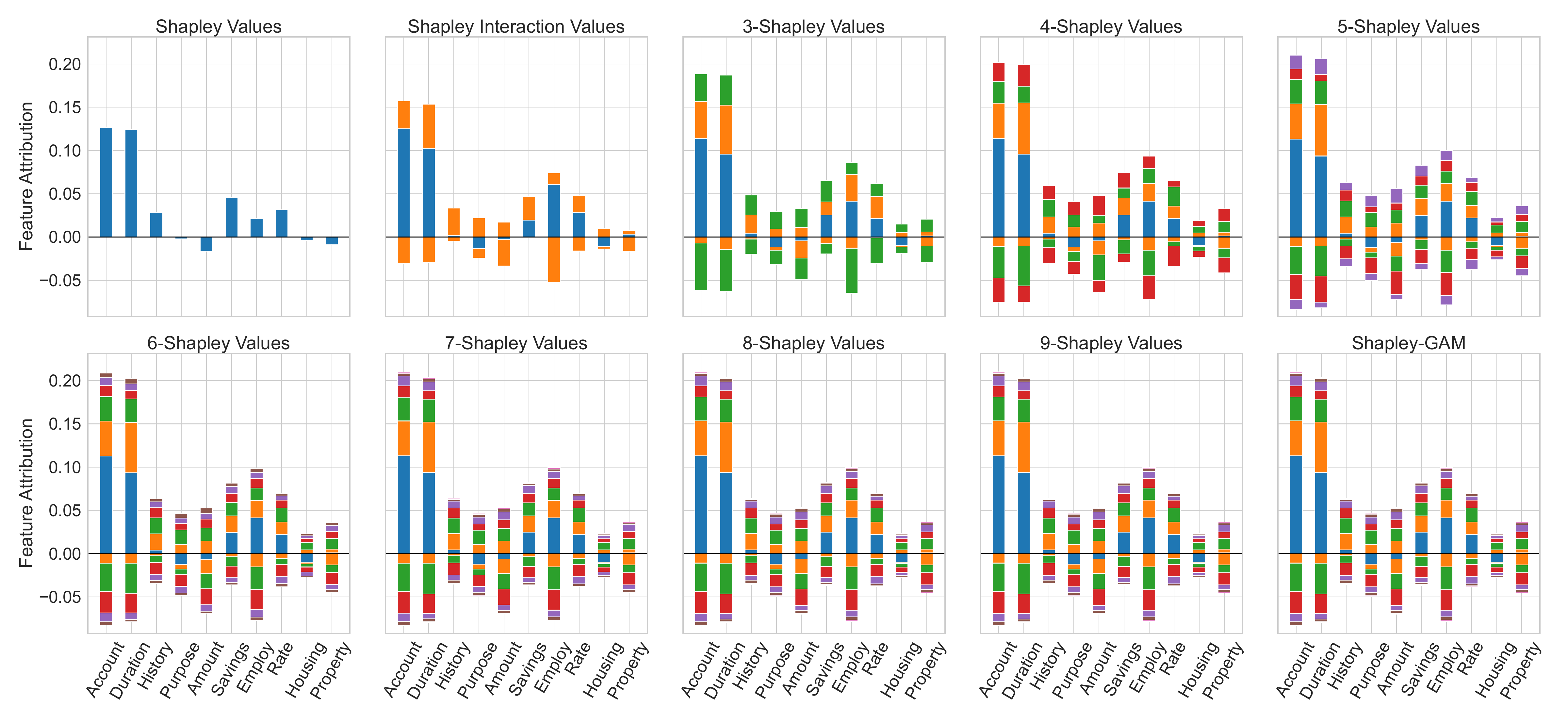}
    \includegraphics[width=0.85\textwidth]{figures/main_paper/legend.pdf}
    \caption{$n$-Shapley Values for a Random Forest and the first observation in our test set of the German Credit data set.}
\end{figure}

\subsubsection{k-Nearest Neighbor}
\begin{figure}[H]
    \centering
    \includegraphics[width=\textwidth]{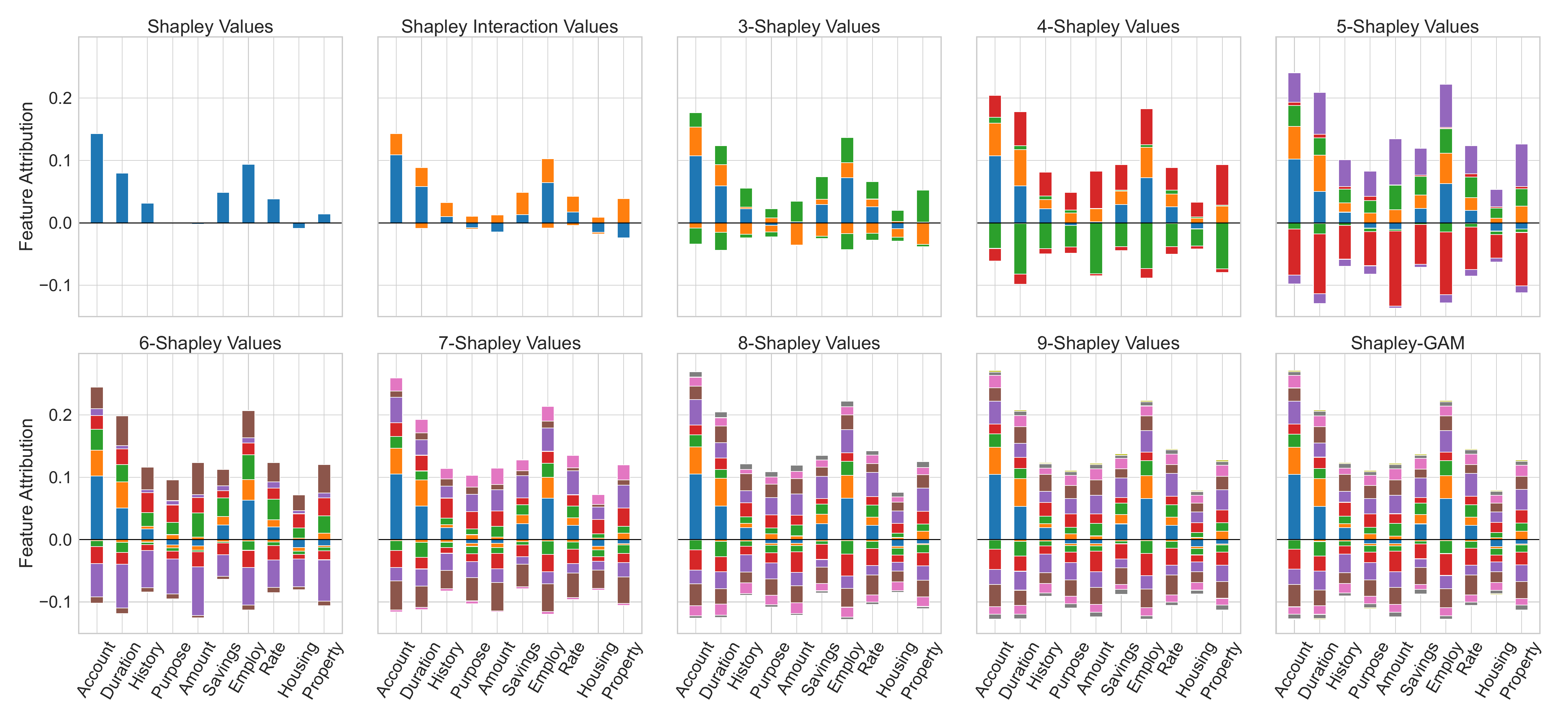}
    \includegraphics[width=0.85\textwidth]{figures/main_paper/legend.pdf}
    \caption{$n$-Shapley Values for a kNN classifier and the first observation in our test set of the German Credit data set.}
\end{figure}

\newpage
\begin{figure}[H]
    \centering    
    \includegraphics[width=\textwidth]{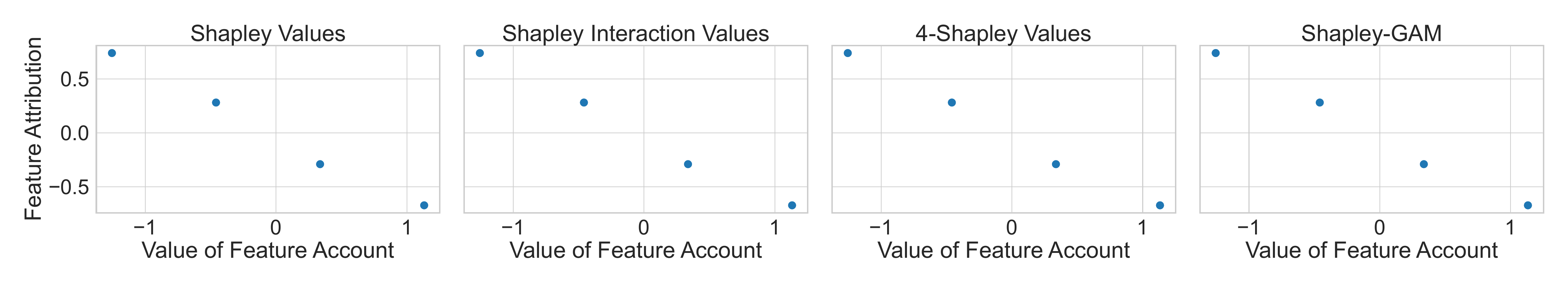}
    \includegraphics[width=\textwidth]{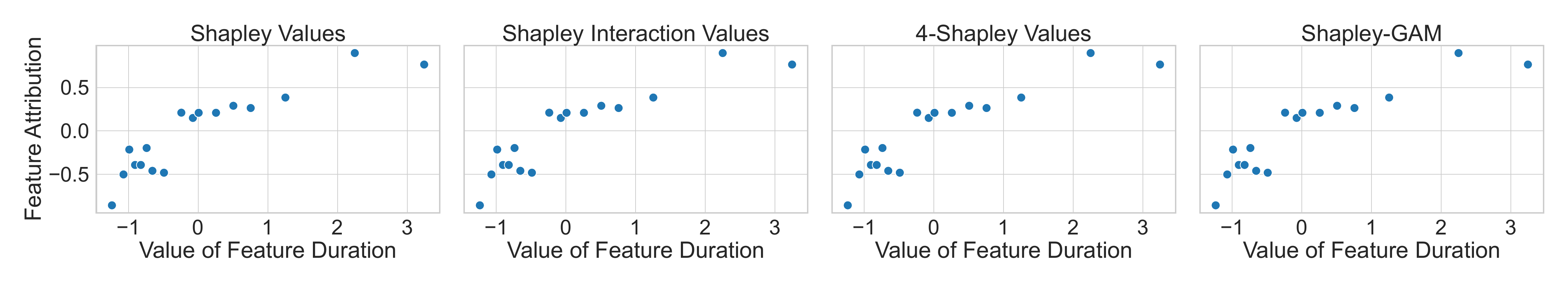}
    \includegraphics[width=\textwidth]{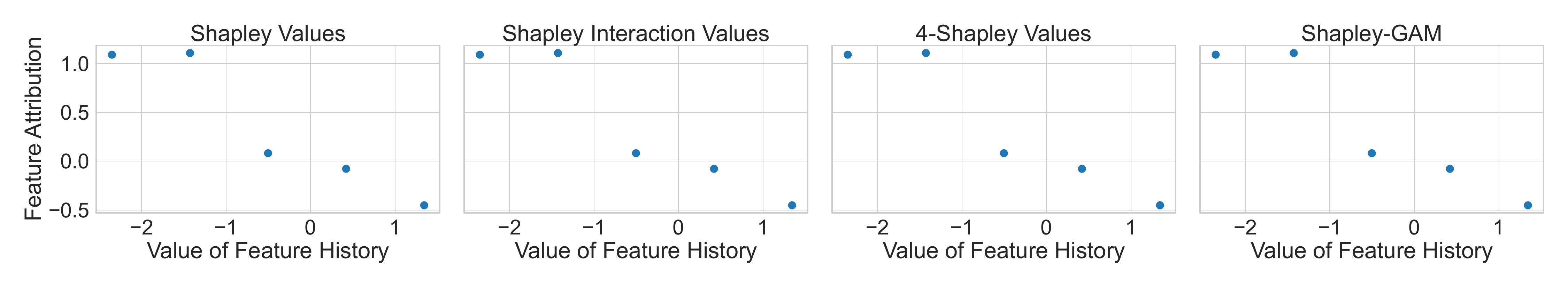}
    \includegraphics[width=\textwidth]{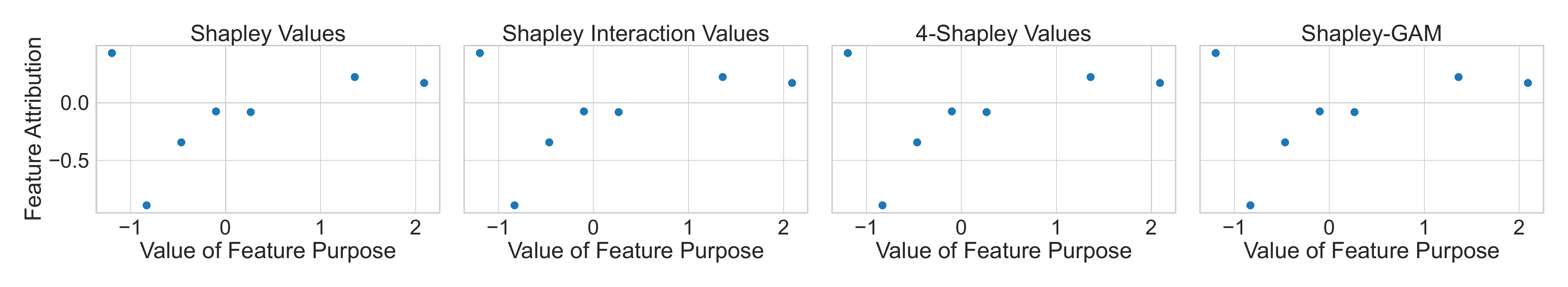}
    \includegraphics[width=\textwidth]{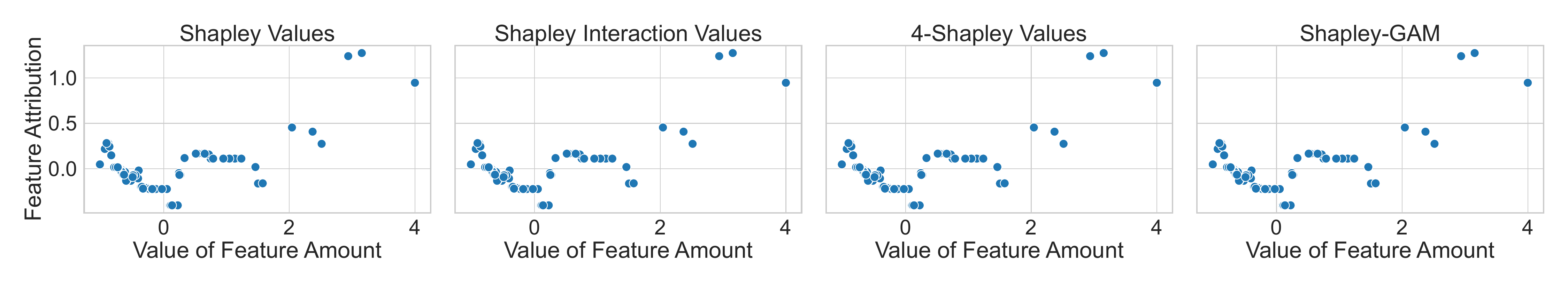}
    \includegraphics[width=\textwidth]{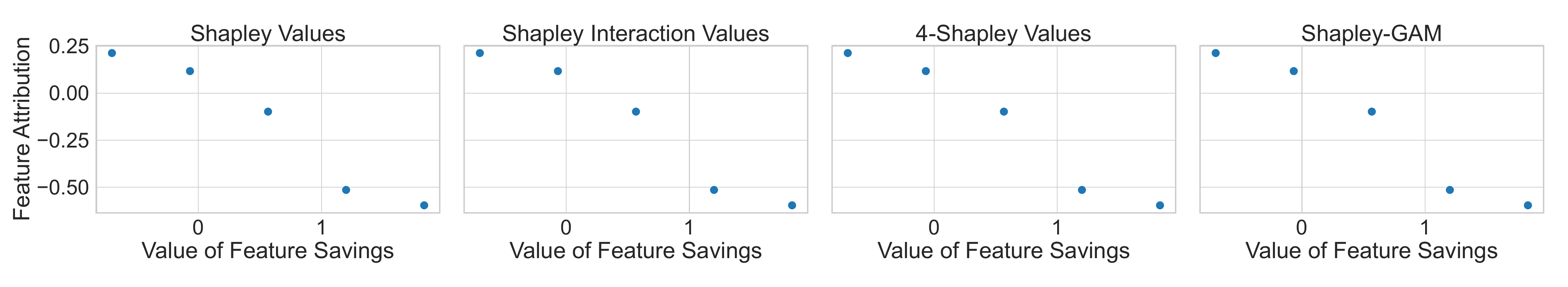}
    \includegraphics[width=\textwidth]{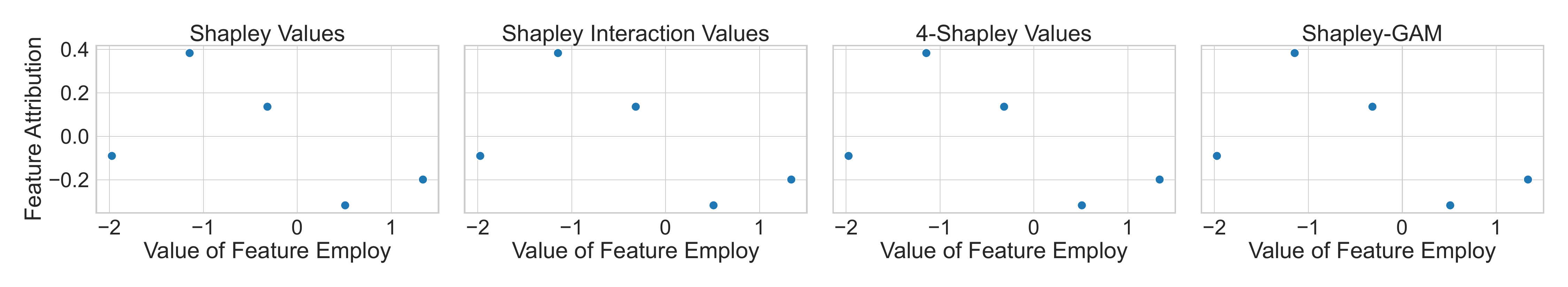}
    \caption{Partial dependence plots for the Glassbox-GAM without interaction terms on the German Credit data set. Depicted are the partial dependence plots of $\Phi_i^n$ for $n=\{1,2,4,10\}$ and 7 different features.}
\end{figure}

\newpage
\subsection{California Housing}

\subsubsection{Glassbox-GAM}
\begin{figure}[H]
    \centering
   \includegraphics[width=0.82\textwidth]{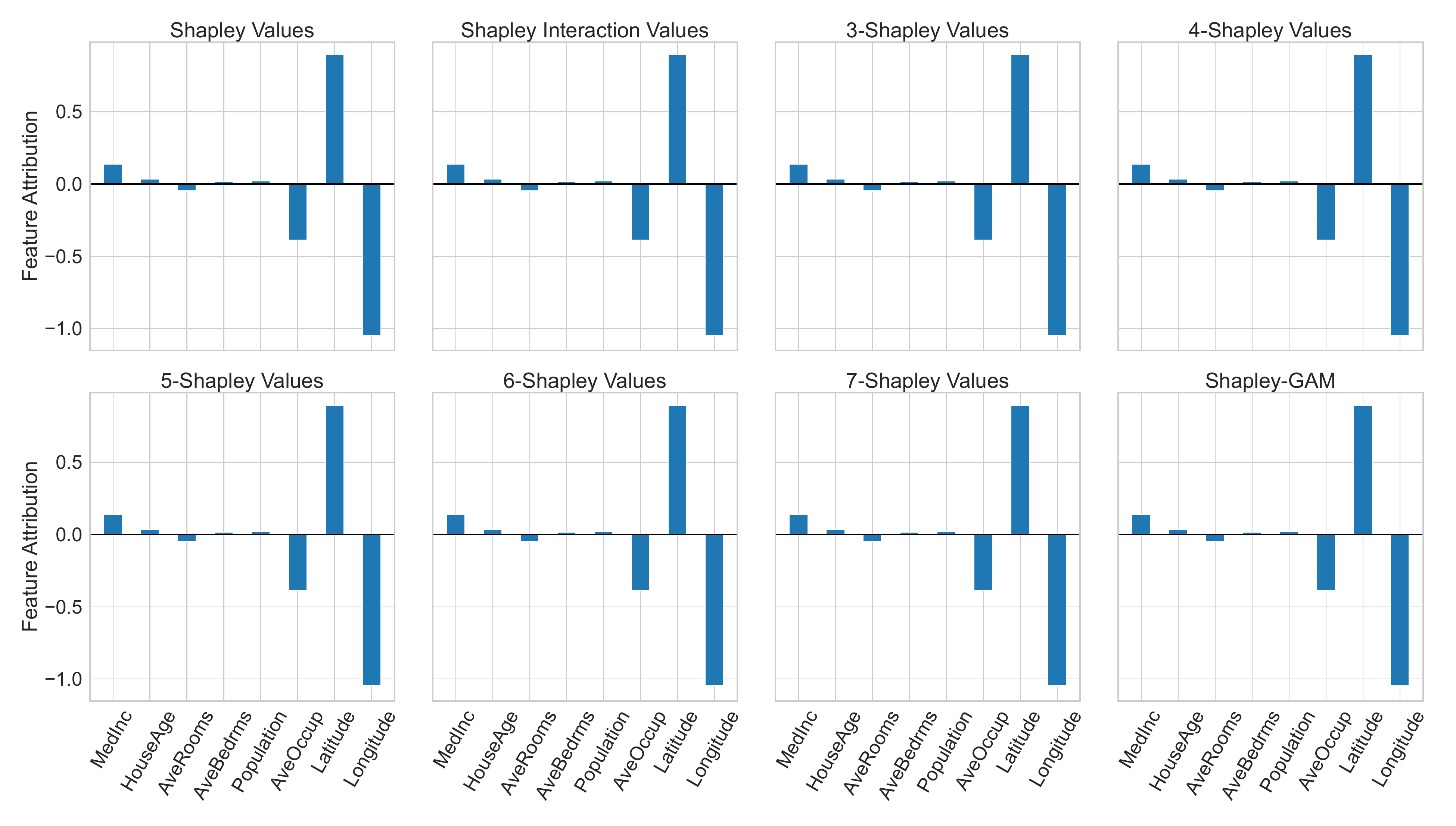}
   \includegraphics[width=0.85\textwidth]{figures/main_paper/legend.pdf}
    \caption{$n$-Shapley Values for a Glassbox-GAM and the first observation in our test set of the California Housing data set.}
\end{figure}

\subsubsection{Gradient Boosted Tree}
\begin{figure}[H]
    \centering
    \includegraphics[width=0.82\textwidth]{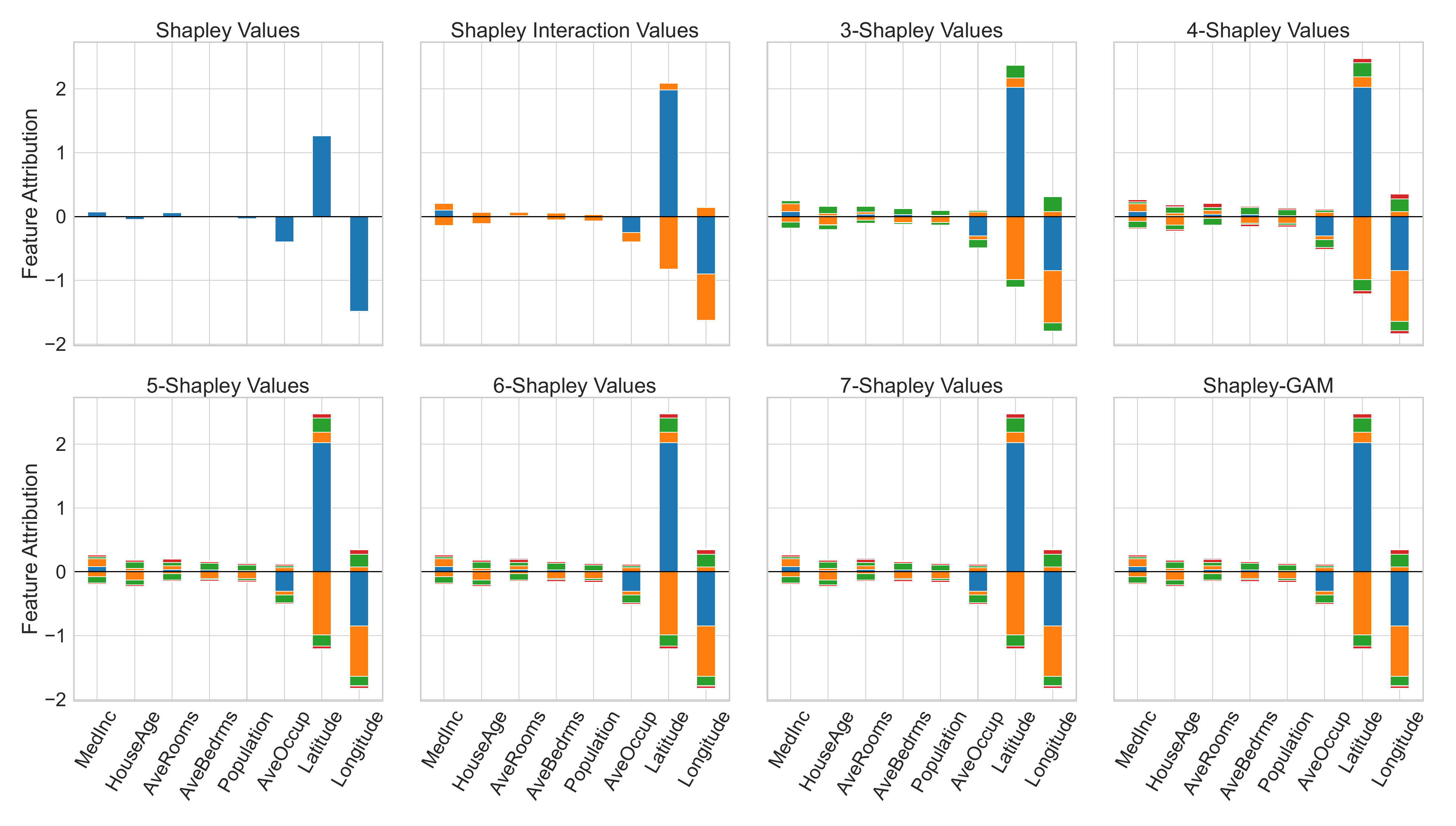}
    \includegraphics[width=0.85\textwidth]{figures/main_paper/legend.pdf}
    \caption{$n$-Shapley Values for a Gradient Boosted Tree and the first observation in our test set of the California Housing data set.}
\end{figure}

\newpage
\subsubsection{Random Forest}
\begin{figure}[H]
    \centering
    \includegraphics[width=0.82\textwidth]{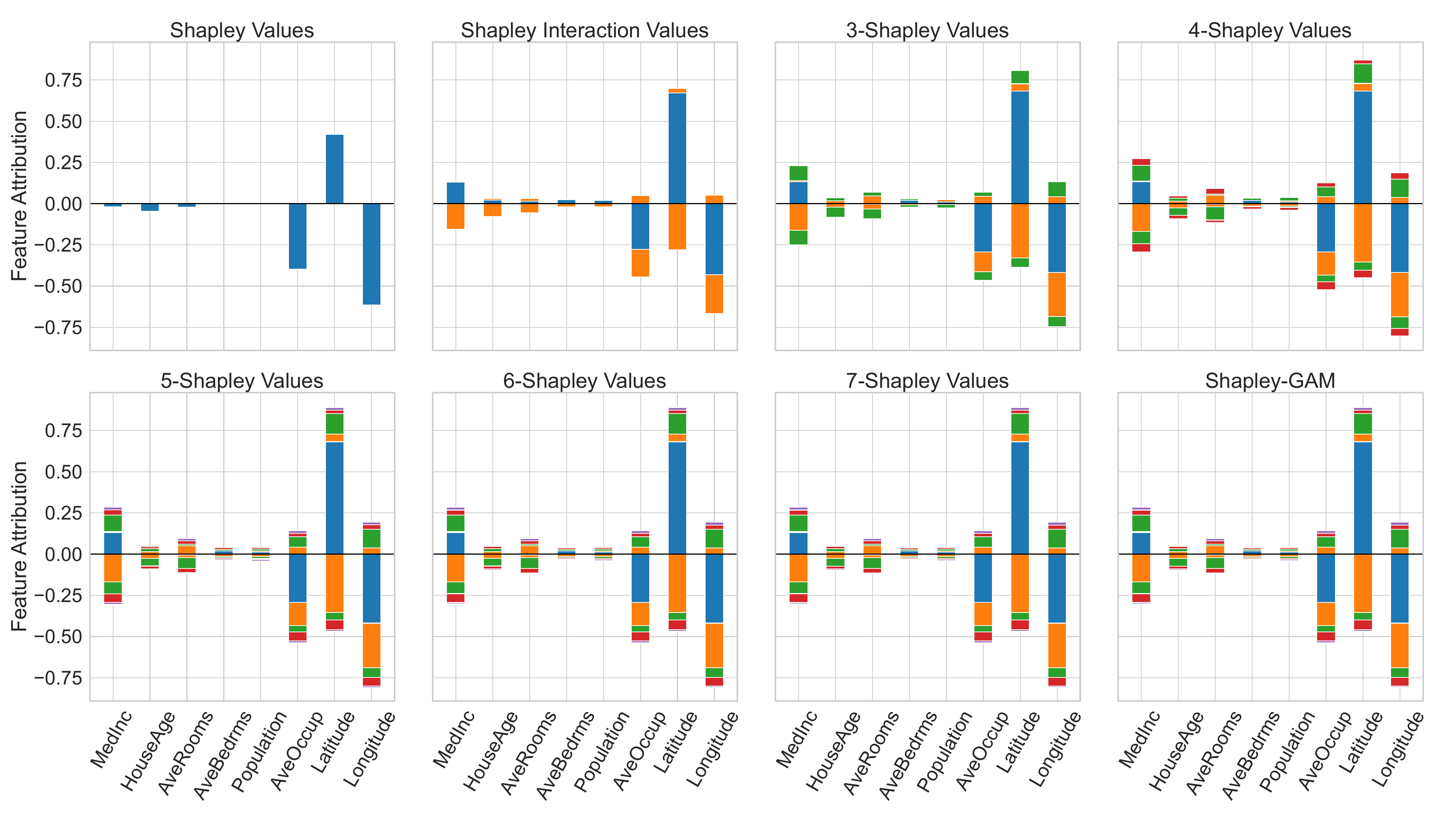}
    \includegraphics[width=0.85\textwidth]{figures/main_paper/legend.pdf}
    \caption{$n$-Shapley Values for a Random Forest and the first observation in our test set of the California Housing data set.}
\end{figure}

\subsubsection{k-Nearest Neighbor}
\begin{figure}[H]
    \centering
    \includegraphics[width=0.82\textwidth]{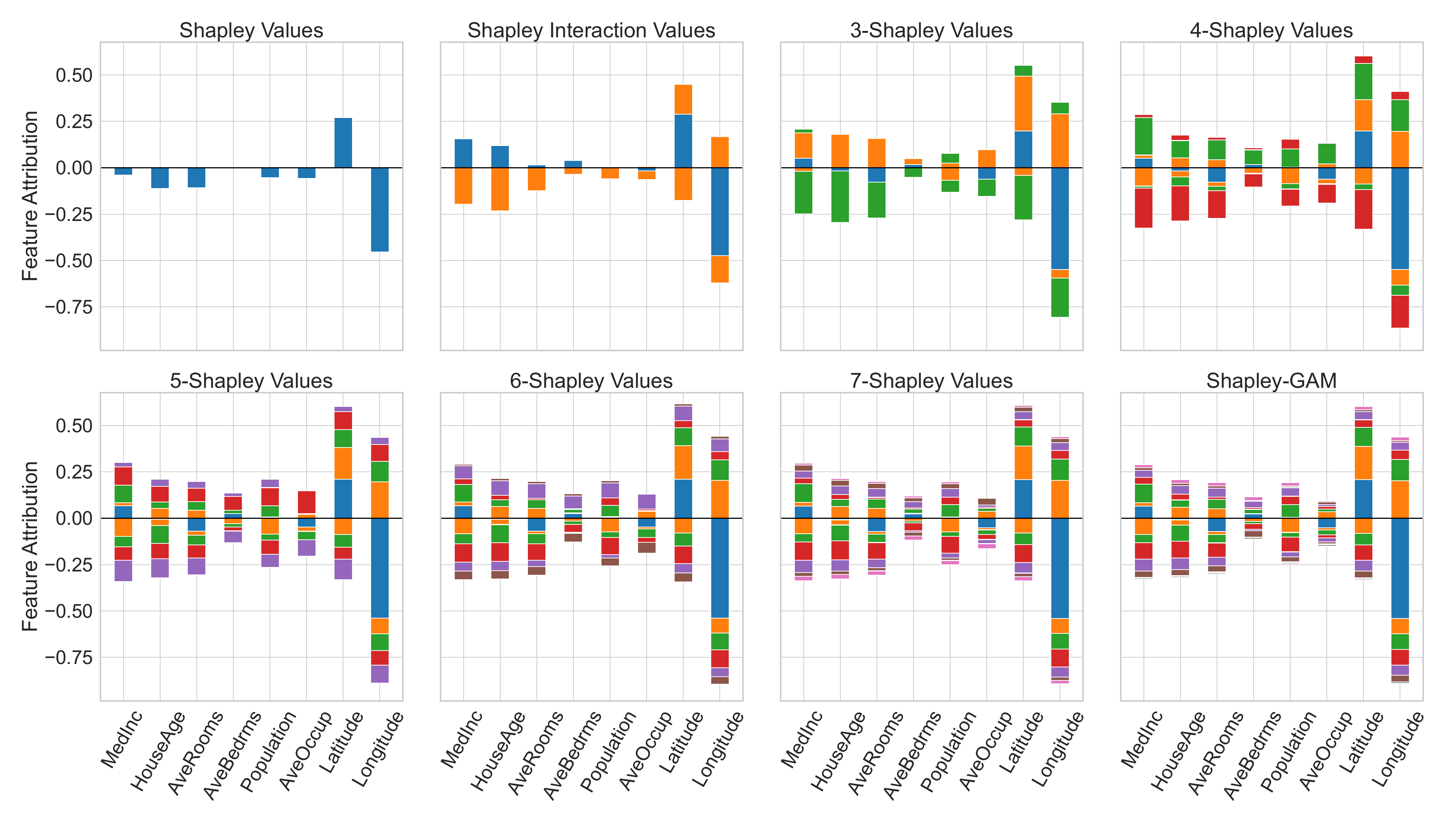}
    \includegraphics[width=0.85\textwidth]{figures/main_paper/legend.pdf}
    \caption{$n$-Shapley Values for a kNN classifier and the first observation in our test set of the California Housing data set.}
\end{figure}

\newpage
\begin{figure}[H]
    \centering    
    \includegraphics[width=\textwidth]{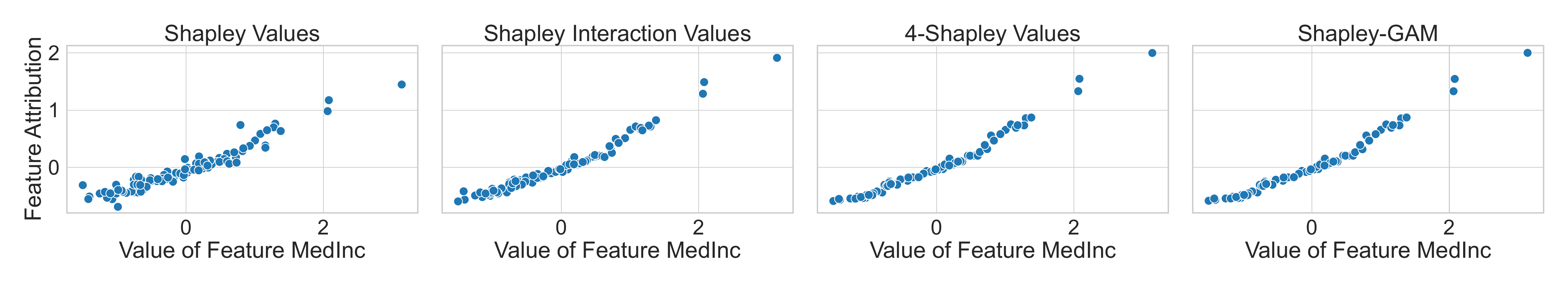}
    \includegraphics[width=\textwidth]{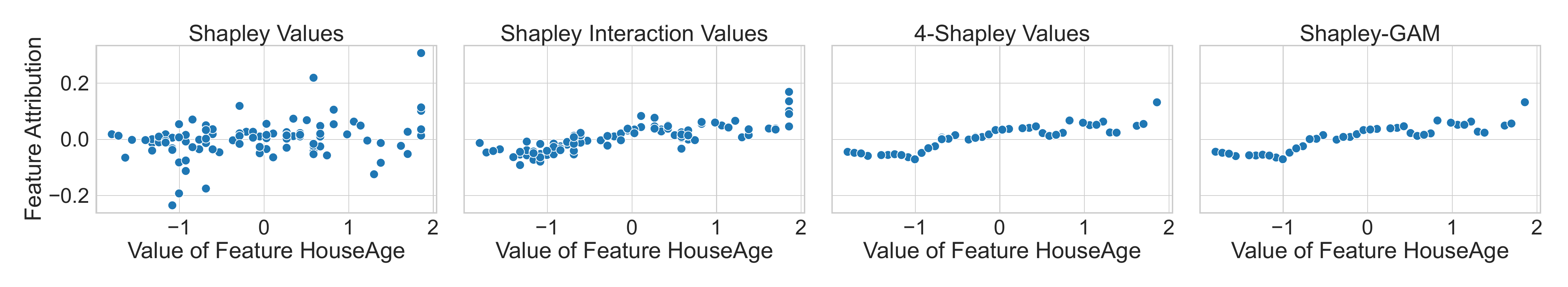}
    \includegraphics[width=\textwidth]{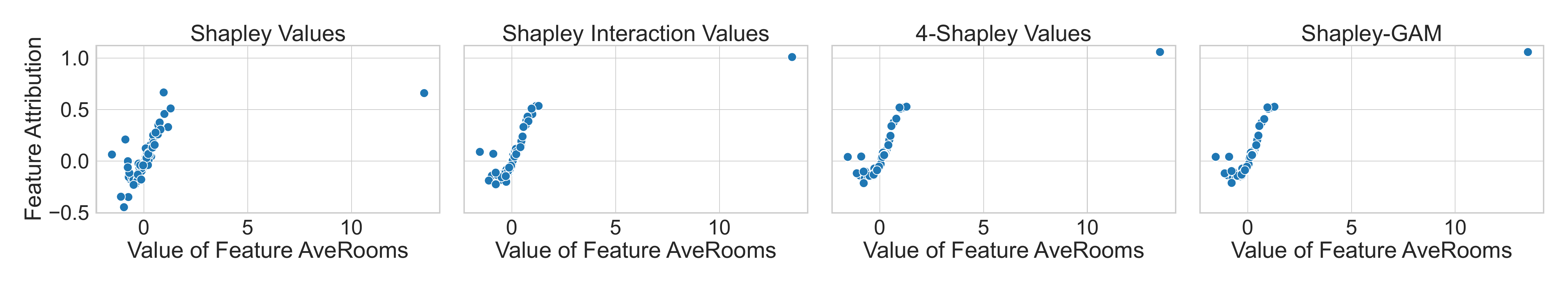}
    \includegraphics[width=\textwidth]{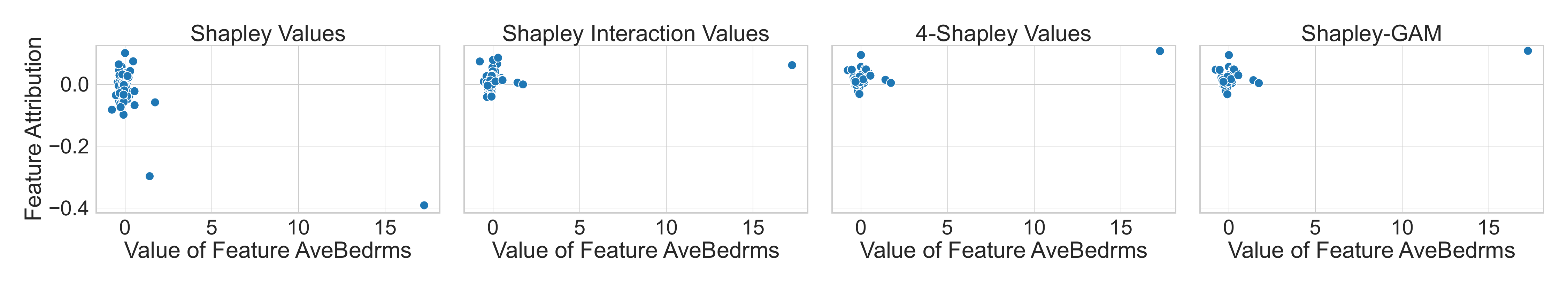}
    \includegraphics[width=\textwidth]{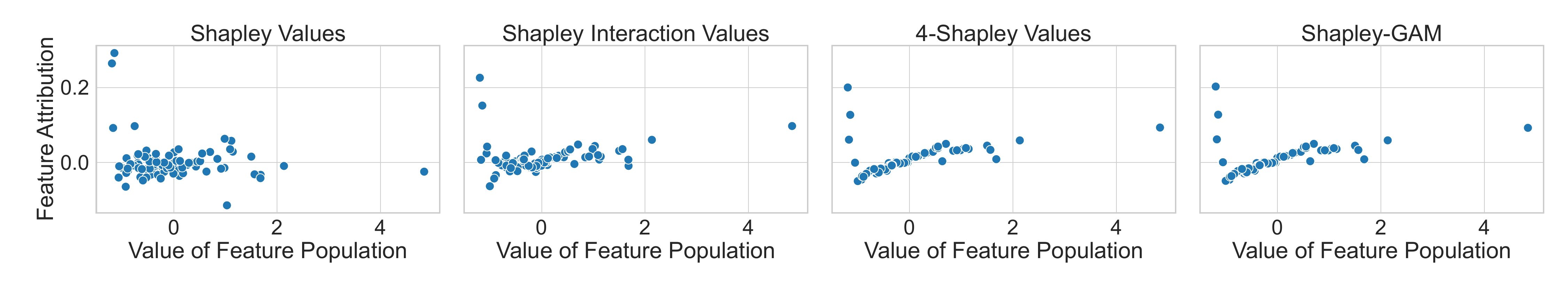}
    \includegraphics[width=\textwidth]{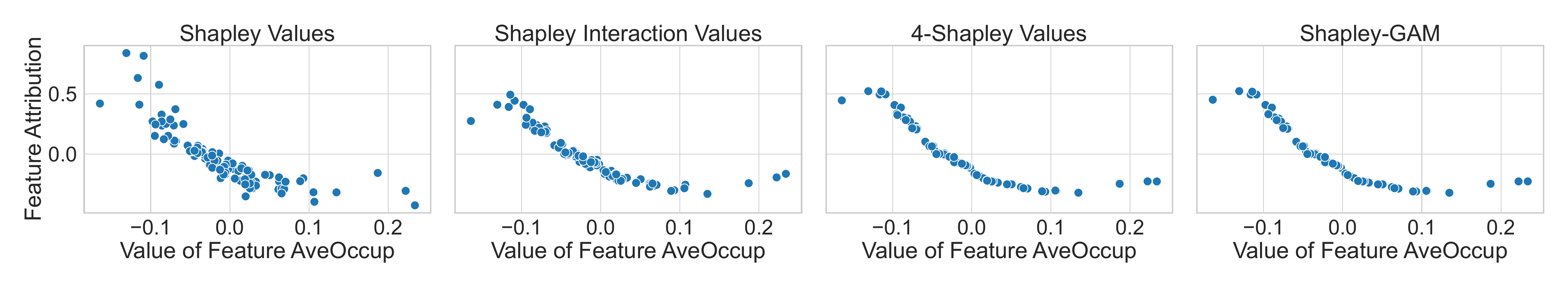}
    \includegraphics[width=\textwidth]{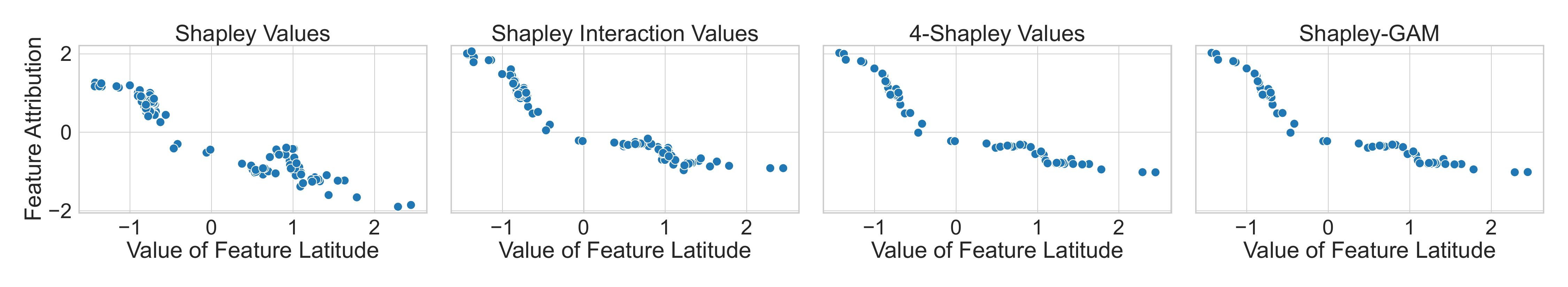}
    \caption{Partial dependence plots for the gradient boosted tree on the California Housing data set. Depicted are the partial dependence plots of $\Phi_i^n$ for $n=\{1,2,4,10\}$ and 7 different features.}
\end{figure}

\newpage
\begin{table}[t]
\tiny
\begin{tabular}{lccc}
\hline
Subset $S$ & N=500 & N=5000 & N=133 549 \\  
\hline\\[-6pt]
(0,) & 0.1128 & 0.1144 & 0.1165 \\
(1,) & 0.0005 & -0.0006 & 0.0022 \\
(2,) & -0.1248 & -0.1117 & -0.1098 \\
(3,) & 0.0227 & 0.0283 & 0.0281 \\
(4,) & -0.0041 & -0.0020 & -0.0018 \\
(5,) & -0.0123 & -0.0189 & -0.0200 \\
(6,) & 0.0845 & 0.1003 & 0.0982 \\
(7,) & 0.2357 & 0.2478 & 0.2505 \\
(8,) & 0.0280 & 0.0329 & 0.0347 \\
(9,) & 0.0197 & 0.0238 & 0.0241 \\
(0, 1) & -0.0023 & -0.0020 & -0.0032 \\
(0, 2) & 0.0059 & 0.0005 & -0.0025 \\
(0, 3) & -0.0146 & -0.0128 & -0.0126 \\
(0, 4) & 0.0089 & 0.0102 & 0.0102 \\
(0, 5) & 0.0038 & 0.0140 & 0.0141 \\
(0, 6) & -0.0244 & -0.0242 & -0.0213 \\
(0, 7) & -0.0452 & -0.0416 & -0.0426 \\
(0, 8) & -0.0032 & -0.0028 & -0.0038 \\
(0, 9) & -0.0043 & -0.0055 & -0.0071 \\
(1, 2) & -0.0012 & -0.0009 & -0.0022 \\
(1, 3) & -0.0004 & 0.0003 & 0.0007 \\
(1, 4) & -0.0006 & 0.0011 & -0.0011 \\
(1, 5) & -0.0060 & -0.0000 & 0.0004 \\
(1, 6) & 0.0027 & 0.0024 & 0.0014 \\
(1, 7) & 0.0093 & 0.0102 & 0.0079 \\
(1, 8) & -0.0017 & 0.0020 & 0.0007 \\
(1, 9) & 0.0048 & 0.0032 & 0.0027 \\
(2, 3) & 0.0029 & -0.0006 & -0.0016 \\
(2, 4) & -0.0419 & -0.0534 & -0.0547 \\
(2, 5) & -0.0128 & -0.0095 & -0.0115 \\
(2, 6) & 0.0389 & 0.0286 & 0.0290 \\
(2, 7) & 0.0752 & 0.0695 & 0.0677 \\
(2, 8) & -0.0031 & -0.0044 & -0.0070 \\
(2, 9) & 0.0151 & 0.0039 & 0.0031 \\
(3, 4) & -0.0112 & -0.0093 & -0.0091 \\
(3, 5) & 0.0006 & 0.0058 & 0.0055 \\
(3, 6) & -0.0068 & -0.0116 & -0.0099 \\
(3, 7) & -0.0286 & -0.0298 & -0.0304 \\
(3, 8) & -0.0135 & -0.0165 & -0.0181 \\
(3, 9) & 0.0038 & -0.0036 & -0.0041 \\
(4, 5) & -0.0016 & 0.0069 & 0.0071 \\
(4, 6) & -0.0279 & -0.0295 & -0.0298 \\
(4, 7) & -0.0100 & -0.0070 & -0.0079 \\
(4, 8) & -0.0019 & -0.0037 & -0.0043 \\
(4, 9) & -0.0091 & -0.0116 & -0.0122 \\
(5, 6) & 0.0026 & 0.0083 & 0.0079 \\
(5, 7) & 0.0084 & 0.0152 & 0.0157 \\
(5, 8) & -0.0000 & 0.0055 & 0.0045 \\
(5, 9) & 0.0015 & 0.0044 & 0.0041 \\
(6, 7) & -0.0551 & -0.0603 & -0.0581 \\
(6, 8) & -0.0132 & -0.0174 & -0.0182 \\
(6, 9) & -0.0053 & -0.0140 & -0.0126 \\
(7, 8) & -0.0125 & -0.0102 & -0.0127 \\
(7, 9) & -0.0102 & -0.0151 & -0.0161 \\
(8, 9) & 0.0052 & 0.0014 & -0.0004 \\
(0, 1, 2) & -0.0058 & -0.0026 & -0.0014 \\
(0, 1, 3) & 0.0018 & 0.0028 & 0.0020 \\
(0, 1, 4) & -0.0000 & -0.0030 & -0.0021 \\
(0, 1, 5) & 0.0070 & -0.0005 & -0.0013 \\
(0, 1, 6) & 0.0060 & 0.0024 & 0.0030 \\
(0, 1, 7) & -0.0039 & -0.0024 & -0.0015 \\
(0, 1, 8) & 0.0073 & 0.0007 & 0.0014 \\
(0, 1, 9) & -0.0003 & -0.0006 & -0.0009 \\
(0, 2, 3) & 0.0038 & 0.0031 & 0.0030 \\
(0, 2, 4) & -0.0274 & -0.0141 & -0.0079 \\
(0, 2, 5) & 0.0088 & 0.0062 & 0.0081 \\
(0, 2, 6) & -0.0042 & 0.0006 & -0.0006 \\
(0, 2, 7) & 0.0233 & 0.0242 & 0.0275 \\
(0, 2, 8) & 0.0043 & 0.0023 & 0.0055 \\
(0, 2, 9) & -0.0298 & -0.0249 & -0.0216 \\
(0, 3, 4) & 0.0149 & 0.0078 & 0.0091 \\
(0, 3, 5) & 0.0019 & -0.0023 & -0.0014 \\
... & ... & ... & ... \\
\hline
\end{tabular}\hfill
\begin{tabular}{lccc}
\hline
Subset $S$ & N=500 & N=5000 & N=133 549 \\  
\hline\\[-6pt]
... & ... & ... & ... \\
(2, 7, 8, 9) & 0.0043 & 0.0009 & 0.0005 \\
(3, 4, 5, 6) & -0.0101 & -0.0143 & -0.0135 \\
(3, 4, 5, 7) & 0.0045 & -0.0030 & -0.0049 \\
(3, 4, 5, 8) & -0.0020 & -0.0053 & -0.0047 \\
(3, 4, 5, 9) & -0.0064 & -0.0049 & -0.0054 \\
(3, 4, 6, 7) & 0.0097 & 0.0076 & 0.0079 \\
(3, 4, 6, 8) & -0.0058 & -0.0058 & -0.0047 \\
(3, 4, 6, 9) & 0.0032 & 0.0018 & 0.0017 \\
(3, 4, 7, 8) & 0.0007 & -0.0011 & -0.0011 \\
(3, 4, 7, 9) & 0.0041 & 0.0003 & 0.0004 \\
(3, 4, 8, 9) & 0.0006 & 0.0013 & 0.0021 \\
(3, 5, 6, 7) & 0.0052 & 0.0059 & 0.0071 \\
(3, 5, 6, 8) & -0.0024 & -0.0011 & -0.0000 \\
(3, 5, 6, 9) & -0.0044 & -0.0023 & -0.0019 \\
(3, 5, 7, 8) & -0.0023 & -0.0014 & -0.0011 \\
(3, 5, 7, 9) & -0.0007 & -0.0031 & -0.0024 \\
(3, 5, 8, 9) & -0.0010 & -0.0007 & -0.0005 \\
(3, 6, 7, 8) & 0.0035 & 0.0027 & 0.0034 \\
(3, 6, 7, 9) & -0.0034 & -0.0052 & -0.0045 \\
(3, 6, 8, 9) & -0.0019 & -0.0011 & -0.0004 \\
(3, 7, 8, 9) & 0.0018 & 0.0014 & 0.0003 \\
(4, 5, 6, 7) & -0.0052 & -0.0020 & -0.0037 \\
(4, 5, 6, 8) & -0.0025 & -0.0001 & 0.0007 \\
(4, 5, 6, 9) & -0.0019 & 0.0005 & 0.0004 \\
(4, 5, 7, 8) & -0.0027 & 0.0009 & 0.0016 \\
(4, 5, 7, 9) & -0.0017 & 0.0005 & 0.0010 \\
(4, 5, 8, 9) & -0.0004 & -0.0004 & 0.0000 \\
(4, 6, 7, 8) & -0.0000 & -0.0003 & 0.0011 \\
(4, 6, 7, 9) & 0.0017 & 0.0005 & 0.0006 \\
(4, 6, 8, 9) & -0.0005 & -0.0005 & 0.0005 \\
(4, 7, 8, 9) & 0.0007 & -0.0000 & -0.0001 \\
(5, 6, 7, 8) & -0.0041 & -0.0024 & -0.0012 \\
(5, 6, 7, 9) & -0.0038 & -0.0046 & -0.0039 \\
(5, 6, 8, 9) & 0.0013 & -0.0009 & -0.0007 \\
(5, 7, 8, 9) & -0.0003 & 0.0003 & 0.0004 \\
(6, 7, 8, 9) & 0.0022 & 0.0005 & 0.0003 \\
(0, 1, 2, 3, 4) & 0.0042 & 0.0010 & 0.0003 \\
(0, 1, 2, 3, 5) & 0.0004 & 0.0010 & 0.0012 \\
(0, 1, 2, 3, 6) & 0.0018 & 0.0004 & 0.0002 \\
(0, 1, 2, 3, 7) & 0.0014 & 0.0006 & 0.0012 \\
(0, 1, 2, 3, 8) & 0.0007 & -0.0004 & -0.0001 \\
(0, 1, 2, 3, 9) & 0.0006 & 0.0012 & 0.0015 \\
(0, 1, 2, 4, 5) & 0.0051 & 0.0013 & 0.0013 \\
(0, 1, 2, 4, 6) & 0.0016 & 0.0011 & 0.0010 \\
(0, 1, 2, 4, 7) & 0.0005 & -0.0011 & -0.0009 \\
(0, 1, 2, 4, 8) & 0.0022 & 0.0005 & 0.0008 \\
(0, 1, 2, 4, 9) & 0.0026 & -0.0002 & -0.0000 \\
(0, 1, 2, 5, 6) & 0.0025 & 0.0023 & 0.0038 \\
(0, 1, 2, 5, 7) & 0.0013 & -0.0001 & 0.0003 \\
(0, 1, 2, 5, 8) & 0.0012 & 0.0011 & 0.0017 \\
(0, 1, 2, 5, 9) & 0.0017 & -0.0008 & -0.0006 \\
(0, 1, 2, 6, 7) & 0.0005 & -0.0010 & -0.0008 \\
(0, 1, 2, 6, 8) & -0.0003 & -0.0008 & -0.0001 \\
(0, 1, 2, 6, 9) & 0.0008 & -0.0001 & -0.0003 \\
(0, 1, 2, 7, 8) & 0.0003 & -0.0010 & -0.0005 \\
(0, 1, 2, 7, 9) & 0.0004 & -0.0008 & -0.0006 \\
(0, 1, 2, 8, 9) & -0.0004 & -0.0006 & -0.0003 \\
(0, 1, 3, 4, 5) & 0.0013 & 0.0002 & -0.0007 \\
(0, 1, 3, 4, 6) & 0.0017 & 0.0009 & -0.0001 \\
(0, 1, 3, 4, 7) & 0.0028 & 0.0010 & 0.0010 \\
(0, 1, 3, 4, 8) & 0.0032 & -0.0004 & -0.0000 \\
(0, 1, 3, 4, 9) & -0.0005 & -0.0006 & -0.0001 \\
(0, 1, 3, 5, 6) & -0.0012 & 0.0000 & 0.0003 \\
(0, 1, 3, 5, 7) & 0.0018 & -0.0006 & -0.0003 \\
(0, 1, 3, 5, 8) & 0.0003 & -0.0001 & 0.0000 \\
(0, 1, 3, 5, 9) & -0.0012 & -0.0000 & 0.0002 \\
(0, 1, 3, 6, 7) & 0.0011 & 0.0002 & 0.0009 \\
(0, 1, 3, 6, 8) & 0.0020 & -0.0004 & -0.0002 \\
(0, 1, 3, 6, 9) & -0.0000 & 0.0007 & 0.0004 \\
(0, 1, 3, 7, 8) & 0.0015 & -0.0003 & 0.0001 \\
(0, 1, 3, 7, 9) & -0.0003 & -0.0010 & -0.0004 \\
... & ... & ... & ... \\
\hline
\end{tabular}\hfill
\begin{tabular}{lccc}
\hline
Subset $S$ & N=500 & N=5000 & N=133 549 \\  
\hline\\[-6pt]
... & ... & ... & ... \\
(1, 2, 5, 6, 7, 8, 9) & 0.0029 & -0.0003 & -0.0009 \\
(1, 3, 4, 5, 6, 7, 8) & 0.0005 & -0.0032 & -0.0035 \\
(1, 3, 4, 5, 6, 7, 9) & 0.0061 & 0.0051 & 0.0049 \\
(1, 3, 4, 5, 6, 8, 9) & 0.0062 & 0.0014 & -0.0009 \\
(1, 3, 4, 5, 7, 8, 9) & 0.0002 & 0.0002 & 0.0009 \\
(1, 3, 4, 6, 7, 8, 9) & 0.0015 & 0.0015 & 0.0008 \\
(1, 3, 5, 6, 7, 8, 9) & 0.0002 & -0.0026 & -0.0004 \\
(1, 4, 5, 6, 7, 8, 9) & 0.0025 & 0.0026 & 0.0016 \\
(2, 3, 4, 5, 6, 7, 8) & -0.0038 & 0.0007 & -0.0002 \\
(2, 3, 4, 5, 6, 7, 9) & 0.0039 & 0.0042 & 0.0036 \\
(2, 3, 4, 5, 6, 8, 9) & 0.0059 & 0.0022 & 0.0013 \\
(2, 3, 4, 5, 7, 8, 9) & -0.0042 & -0.0016 & -0.0010 \\
(2, 3, 4, 6, 7, 8, 9) & -0.0007 & 0.0013 & 0.0008 \\
(2, 3, 5, 6, 7, 8, 9) & -0.0046 & -0.0029 & -0.0015 \\
(2, 4, 5, 6, 7, 8, 9) & 0.0012 & 0.0018 & 0.0008 \\
(3, 4, 5, 6, 7, 8, 9) & 0.0014 & 0.0009 & 0.0011 \\
(0, 1, 2, 3, 4, 5, 6, 7) & -0.0021 & -0.0027 & -0.0019 \\
(0, 1, 2, 3, 4, 5, 6, 8) & 0.0037 & 0.0021 & 0.0014 \\
(0, 1, 2, 3, 4, 5, 6, 9) & 0.0018 & -0.0006 & -0.0015 \\
(0, 1, 2, 3, 4, 5, 7, 8) & 0.0002 & 0.0002 & -0.0001 \\
(0, 1, 2, 3, 4, 5, 7, 9) & -0.0002 & -0.0006 & -0.0012 \\
(0, 1, 2, 3, 4, 5, 8, 9) & 0.0015 & 0.0018 & -0.0001 \\
(0, 1, 2, 3, 4, 6, 7, 8) & 0.0005 & 0.0010 & -0.0003 \\
(0, 1, 2, 3, 4, 6, 7, 9) & -0.0004 & 0.0013 & 0.0003 \\
(0, 1, 2, 3, 4, 6, 8, 9) & 0.0025 & 0.0014 & 0.0005 \\
(0, 1, 2, 3, 4, 7, 8, 9) & -0.0013 & 0.0001 & -0.0003 \\
(0, 1, 2, 3, 5, 6, 7, 8) & 0.0037 & 0.0016 & -0.0005 \\
(0, 1, 2, 3, 5, 6, 7, 9) & 0.0009 & 0.0008 & -0.0009 \\
(0, 1, 2, 3, 5, 6, 8, 9) & 0.0018 & 0.0009 & -0.0002 \\
(0, 1, 2, 3, 5, 7, 8, 9) & 0.0014 & 0.0010 & -0.0002 \\
(0, 1, 2, 3, 6, 7, 8, 9) & 0.0000 & 0.0006 & 0.0001 \\
(0, 1, 2, 4, 5, 6, 7, 8) & 0.0030 & 0.0017 & 0.0002 \\
(0, 1, 2, 4, 5, 6, 7, 9) & -0.0009 & -0.0002 & 0.0000 \\
(0, 1, 2, 4, 5, 6, 8, 9) & 0.0052 & 0.0014 & 0.0004 \\
(0, 1, 2, 4, 5, 7, 8, 9) & -0.0010 & 0.0006 & -0.0001 \\
(0, 1, 2, 4, 6, 7, 8, 9) & -0.0013 & 0.0003 & -0.0000 \\
(0, 1, 2, 5, 6, 7, 8, 9) & 0.0007 & 0.0003 & -0.0004 \\
(0, 1, 3, 4, 5, 6, 7, 8) & -0.0013 & 0.0008 & -0.0006 \\
(0, 1, 3, 4, 5, 6, 7, 9) & 0.0003 & 0.0017 & 0.0006 \\
(0, 1, 3, 4, 5, 6, 8, 9) & 0.0010 & 0.0005 & -0.0001 \\
(0, 1, 3, 4, 5, 7, 8, 9) & -0.0006 & 0.0007 & -0.0000 \\
(0, 1, 3, 4, 6, 7, 8, 9) & -0.0007 & 0.0005 & 0.0002 \\
(0, 1, 3, 5, 6, 7, 8, 9) & -0.0001 & 0.0008 & 0.0002 \\
(0, 1, 4, 5, 6, 7, 8, 9) & -0.0002 & 0.0010 & 0.0001 \\
(0, 2, 3, 4, 5, 6, 7, 8) & 0.0006 & 0.0001 & -0.0007 \\
(0, 2, 3, 4, 5, 6, 7, 9) & -0.0005 & 0.0015 & 0.0003 \\
(0, 2, 3, 4, 5, 6, 8, 9) & 0.0012 & 0.0004 & 0.0001 \\
(0, 2, 3, 4, 5, 7, 8, 9) & -0.0005 & 0.0002 & -0.0001 \\
(0, 2, 3, 4, 6, 7, 8, 9) & -0.0010 & 0.0002 & 0.0002 \\
(0, 2, 3, 5, 6, 7, 8, 9) & 0.0009 & 0.0004 & -0.0000 \\
(0, 2, 4, 5, 6, 7, 8, 9) & -0.0007 & 0.0007 & 0.0001 \\
(0, 3, 4, 5, 6, 7, 8, 9) & -0.0010 & 0.0008 & 0.0006 \\
(1, 2, 3, 4, 5, 6, 7, 8) & -0.0131 & -0.0081 & -0.0069 \\
(1, 2, 3, 4, 5, 6, 7, 9) & -0.0018 & 0.0002 & 0.0013 \\
(1, 2, 3, 4, 5, 6, 8, 9) & -0.0073 & -0.0006 & 0.0015 \\
(1, 2, 3, 4, 5, 7, 8, 9) & 0.0039 & 0.0040 & 0.0042 \\
(1, 2, 3, 4, 6, 7, 8, 9) & 0.0011 & 0.0000 & 0.0014 \\
(1, 2, 3, 5, 6, 7, 8, 9) & 0.0018 & 0.0036 & 0.0014 \\
(1, 2, 4, 5, 6, 7, 8, 9) & -0.0021 & -0.0014 & 0.0005 \\
(1, 3, 4, 5, 6, 7, 8, 9) & -0.0036 & -0.0048 & -0.0048 \\
(2, 3, 4, 5, 6, 7, 8, 9) & -0.0021 & -0.0039 & -0.0038 \\
(0, 1, 2, 3, 4, 5, 6, 7, 8) & -0.0023 & -0.0018 & -0.0002 \\
(0, 1, 2, 3, 4, 5, 6, 7, 9) & -0.0008 & -0.0013 & 0.0003 \\
(0, 1, 2, 3, 4, 5, 6, 8, 9) & -0.0063 & -0.0024 & -0.0003 \\
(0, 1, 2, 3, 4, 5, 7, 8, 9) & 0.0012 & 0.0000 & 0.0010 \\
(0, 1, 2, 3, 4, 6, 7, 8, 9) & 0.0012 & -0.0003 & 0.0002 \\
(0, 1, 2, 3, 5, 6, 7, 8, 9) & -0.0017 & -0.0008 & 0.0005 \\
(0, 1, 2, 4, 5, 6, 7, 8, 9) & 0.0003 & -0.0004 & 0.0004 \\
(0, 1, 3, 4, 5, 6, 7, 8, 9) & -0.0000 & -0.0019 & -0.0009 \\
(0, 2, 3, 4, 5, 6, 7, 8, 9) & -0.0003 & -0.0017 & -0.0010 \\
(1, 2, 3, 4, 5, 6, 7, 8, 9) & -0.0041 & -0.0008 & -0.0022 \\
(0, 1, 2, 3, 4, 5, 6, 7, 8, 9) & 0.0005 & 0.0012 & -0.0002 \\
\hline
\end{tabular}
    \caption{The individual terms of the Shapley-GAM decomposition of a kNN classifier on the Folktables Travel data set. The table depicts a number of selected terms of the full decomposition, estimated with 500, 5000 and  133549 samples per evaluation of the value function. The depicted terms are visualized in Figure \ref{fig:apx_estiation_visualization}. From the table, we see that many relatively small higher-order coefficients are not very precisely estimated for $N=5000$, whereas the overall sums (visualized in Figure \ref{fig:apx_estiation_visualization}) are.}
    \label{tab:estimation_table}
\end{table}

\end{document}